\newcites{sup}{Supplementary References}
\algnewcommand\algorithmicinput{\textbf{input}}
\algnewcommand\INPUT{\item[\algorithmicinput]}
\algnewcommand{\algorithmicreturn}{\textbf{return}}%
\algnewcommand{\RETURN}[1]{\algorithmicreturn~\ref{#1}}%
\definecolor{mydarkblue}{rgb}{0,0.08,0.45}
\newlength{\Sfloatsep}
\newlength{\Stextfloatsep}
\newlength{\Sintextsep}
\icmltitlerunning{Minding the Gaps for Block Frank-Wolfe Optimization of Structured SVMs}
\begin{document}

\twocolumn[
\icmltitle{Minding the Gaps for Block Frank-Wolfe Optimization of Structured SVMs}

\icmlauthor{Anton Osokin$^{*,1}$ \enspace Jean-Baptiste Alayrac$^{*,1}$}{first.lastname@inria.fr}
\vspace{1mm}
\icmlauthor{Isabella Lukasewitz$^{1}$ \enspace Puneet K. Dokania$^{2}$ \enspace Simon Lacoste-Julien$^{1}$}{}
\icmladdress{$^{1}\!$ INRIA -- \'Ecole Normale Sup\'erieure, Paris, France \quad $^{2}\!$ INRIA -- CentraleSup\'elec, Ch\^atenay-Malabry, France}
\vspace{-3mm}
\icmladdress{{\small $^*$ Both authors contributed equally.}}
\icmlkeywords{adaptive sampling, structured output prediction, Frank-Wolfe, block-coordinate, structured SVM, machine learning, ICML, convex optimization}

\vspace*{1.5em}
]

\begin{abstract}
In this paper, we propose several improvements on the block-coordinate Frank-Wolfe (BCFW) algorithm from~\citet{lacosteJulien13bcfw} recently used to optimize the structured support vector machine (SSVM) objective in the context of structured prediction, though it has wider applications. 
The key intuition behind our improvements is that the estimates of block gaps maintained by BCFW reveal the block suboptimality that can be used as an \emph{adaptive} criterion.
First, we sample objects at each iteration of BCFW in an adaptive non-uniform way via gap-based sampling.
Second, we incorporate pairwise and away-step variants of Frank-Wolfe into the block-coordinate setting.
Third, we cache oracle calls with a cache-hit criterion based on the block gaps.
Fourth, we provide the first method to compute an approximate regularization path for SSVM\@.
Finally, we provide an exhaustive empirical evaluation of all our methods on four structured prediction datasets.
\end{abstract}

\section{Introduction \label{sec:intro}}

One of the most popular learning objectives for structured prediction is the structured support vector machine~\citep{Taskar2003,Tsochantaridis2005}, which generalizes the classical binary SVM to problems with structured outputs. 
In this paper, we consider the $\ell_2$-regularized $\ell_1$-slack structured SVM, to which we will simply refer as SSVM.
The SSVM method consists in the minimization of the regularized structured hinge-loss on the labeled training set.
The optimization problem of SSVM is of significant complexity and, thus, hard to scale up.
In the literature, multiple optimization methods have been applied to tackle this problem, including cutting-plane methods~\citep{Tsochantaridis2005,Joachims:2009ex} and stochastic subgradient methods~\citep{Ratliff:2007subgradient}, among others. 

Recently, \citet{lacosteJulien13bcfw} proposed the block-coordinate Frank-Wolfe method (BCFW), which is currently one of the state-of-the-art algorithms for SSVM.\footnote{Independently, \citet{branson13} proposed their SVM-IS algorithm which is equivalent to BCFW in some scenarios.}
In contrast to the classical (batch) Frank-Wolfe algorithm~\citep{Frank:1956vp}, BCFW is a randomized block-coordinate method that works on \emph{block-separable} convex compact domains.
In the case of SSVM, BCFW operates in the dual domain and iteratively applies Frank-Wolfe steps on the blocks of dual variables corresponding to different objects of the training set.
Distinctive features of BCFW for SSVM include optimal step size selection leading to the absence of the step-size parameter, convergence guarantees for the primal objective, and ability to compute the duality gap as a stopping criterion.

Notably, the duality gap obtained by BCFW can be written as a sum of block gaps, where each block of dual variables corresponds to one training example.
In this paper, we exploit this property and improve the BCFW algorithm in multiple ways. First, we substitute the standard uniform sampling of objects at each iteration with an adaptive non-uniform sampling.
Our procedure consists in sampling objects with probabilities proportional to the values of their block gaps, giving one of the first fully \emph{adaptive} sampling approaches in the optimization literature that we are aware of.
This choice of sampling probabilities is motivated by the intuition that objects with higher block gaps potentially can provide more improvement to the objective.
We analyze the effects of the gap-based sampling on convergence and discuss the practical trade-offs.

Second, we apply pairwise~\citep{Mitchell:1974uy} and away~\citep{Wolfe:1970wy} steps of Frank-Wolfe to the block-coordinate setting.
This modification is motivated by the fact that batch algorithms based on these steps have linear convergence rates~\citep{LacosteJulien2015linearFW} whereas convergence of standard Frank-Wolfe is sublinear.

Third, we cache oracle calls and propose a gap-based criterion for calling the oracle (cache miss vs.\ cache hit).
Caching the oracle calls was shown do deliver significant speed-ups when the oracle is expensive, e.g., in the case of cutting-plane methods~\cite{Joachims:2009ex}.

Finally, we propose an algorithm to approximate the regularization path of SSVM, i.e., solve the problem for all possible values of the regularization parameter.
Our method exploits block gaps to construct the breakpoints of the path and leads to an $\epsilon$-approximate path.

\textbf{Contributions.} Overall, we make the following contributions:
(i) adaptive non-uniform sampling of the training objects;
(ii) pairwise and away steps in the block-coordinate setting;
(iii) gap-based criterion for caching the oracle calls;
(iv) regularization path for SSVM. The first three contributions are general to BCFW and thus could be applied to other block-separable optimization problems where BCFW could or have been used such as video co-localization~\citep{joulin2014video}, multiple sequence alignment~\citep[App.~B]{alayrac2016MSA}
or structured submodular optimization~\citep{jegelka2013reflection}, among others.

This paper is organized as follows. In Section~\ref{sec:background}, we describe the setup and review the BCFW algorithm. In Section~\ref{sec:contributions}, we describe our contributions: adaptive sampling (Section~\ref{sec:gap_sampling}), pairwise and away steps (Section~\ref{sec:pairwise_away_steps}), caching (Section~\ref{sec:caching}).
In Section~\ref{sec:reg_path}, we explain our algorithm to compute the regularization path.
We discuss the related work in the relevant sections of the paper. 
Section~\ref{sec:experiments} contains the experimental study of the methods.
The code and datasets are available at our project webpage.\footnote{\resizebox{0.92\columnwidth}{!}{ \url{http://www.di.ens.fr/sierra/research/gapBCFW/}}}

\section{Background \label{sec:background}}

\subsection{Structured Support Vector Machine (SSVM) \label{sec:ssvm}}
In structured prediction, we are given an input~$\inputvarv\in\inputdomain$, and the goal is to predict a structured object~$\outputvarv\in\outputdomain(\inputvarv)$ (such as a sequence of tags). In the standard setup for structured SVM (SSVM)~\citep{Taskar2003,Tsochantaridis2005}, we assume that prediction is performed with a linear model $h_{\weightv}(\inputvarv) = \argmax_{\outputvarv\in\outputdomain(\inputvarv)}\langle\weightv, \featuremapv(\inputvarv,\outputvarv) \rangle$ parameterized by the weight vector~$\weightv$ where the structured feature map $\featuremapv(\inputvarv,\outputvarv) \in \R^d$ encodes the relevant information for input/output pairs. We reuse below the notation and setup from~\citet{lacosteJulien13bcfw}. Given a labeled training set~$\data = \{(\inputvarv_i,\outputvarv_i)\}_{i=1}^n$, the parameters~$\weightv$ are estimated by solving a convex non-smooth optimization problem\vspace{-1.5mm}
\begin{equation}
    \label{eq:svmstruct_nslack_primal_nonsmooth}
    \min_{\weightv} \quad \frac{\regularizerweight}{2}\norm{\weightv}^2 +
    \frac1n \sum_{i=1}^n \tilde{H}_i(\weightv)\vspace{-1.5mm}
\end{equation}
where $\regularizerweight$ is the regularization parameter and~$\tilde{H}_i(\weightv)$ is the structured hinge loss defined using the \emph{loss-augmented decoding} subproblem (or \emph{maximization oracle}):
\begin{equation}\label{eq:subproblem_loss_augm}
\text{\parbox[t]{3em}{`max \\oracle'}} \quad 
  \tilde{H}_i(\weightv) := \max_{\outputvarv\in\outputdomain_i} \
    \underbrace{%
    \errorterm_i(\outputvarv)
    - \langle \weightv,
    \featuremapdiffv_i(\outputvarv)
    \rangle
    }_{=:\, H_i(\outputvarv;\weightv)}.
\end{equation}

Here $\featuremapdiffv_i(\outputvarv):= \featuremapv(\inputvarv_i,\outputvarv_i) - \featuremapv(\inputvarv_i,\outputvarv)$, $\outputdomain_i := \outputdomain(\inputvarv_i)$, and $\errorterm_i(\outputvarv) := \errorterm(\outputvarv_i,\outputvarv)$ denotes the task-dependent structured error of predicting an output~$\outputvarv$ instead of the observed output~$\outputvarv_i$ (e.g., a Hamming distance between two sequences).

\paragraph{Dual formulation.} The negative of a Fenchel dual for objective~\eqref{eq:svmstruct_nslack_primal_nonsmooth} can be written as\vspace{-1mm}
\begin{align}
    \label{eq:svmstruct_nslack_dual} %
    \min_{\substack{ \dualvarv\in\R^{m} \\  \dualvarv \succcurlyeq 0}} \quad  f(\dualvarv) \;:=&  \;\;
    \frac{\regularizerweight}{2}
    \big\| A\dualvarv \big\|^2
    - \bv^\transpose\dualvarv
    \\[-3mm]
    \text{s.t.} \quad &  \;
      \textstyle\sum_{\outputvarv \in \outputdomain_i}  \dualvar_i(\outputvarv) = 1 ~~~\forall i\in[n] \ \notag
\end{align}
where $\dualvar_i(\outputvarv)$, $i\in[n]:=\{1,\ldots,n\}$, $\outputvarv \in \outputdomain_i$ are the dual variables.
The matrix~$A\in\R^{d\times m}$ consists of the $m := \sum_i m_i = \sum_i |\outputdomain_i|$ columns $A := \SetOf{\frac1{\regularizerweight n} \featuremapdiffv_i(\outputvarv) \in\R^d}{i\in[n],\outputvarv \in \outputdomain_i}$, and the vector $\bv \in \R^m$ is given by
$\bv:= \left(\frac1n \errorterm_i(\outputvarv) \right)_{i\in[n],\outputvarv\in\outputdomain_i}$.

In SSVM (as for the standard SVM), the Karush-Kuhn-Tucker (KKT) optimality conditions can give the primal variables~\mbox{$
\weightv(\dualvarv) = A\dualvarv  = \sum_{i,\,\outputvarv \in \outputdomain_i} \dualvar_i(\outputvarv)  \frac{\featuremapdiffv_i(\outputvarv)}{\regularizerweight n}
$} corresponding to the dual variables $\dualvarv$ (see, e.g., \citep[App.~E]{lacosteJulien13bcfw}).
The gradient of $f$ then takes a simple form $\nabla f(\dualvarv) = \regularizerweight A^\transpose A\dualvarv - \bv = \regularizerweight A^\transpose \weightv - \bv$; its \mbox{$(i,\outputvarv)$-th} component equals $-\frac{1}{n} H_i(\outputvarv; \weightv)$.

\subsection{Block Coordinate Frank-Wolfe method (BCFW)\label{sec:bcfw}}

\begin{algorithm}[t]
    \caption{Block-Coordinate Frank-Wolfe (BCFW) algorithm for structured SVM}%
    \label{alg:FW_product_SVM}
\begin{algorithmic}[1]
        \STATE Let $\weightv^{(0)}\!:=\!{\weightv_i}^{(0)}\!:=\!\0$; \; $\ell^{(0)}\!:=\!{\ell_i}^{(0)}\!:=\!0$
       \FOR{$k:=0,\dots,\infty$}s
                \STATE Pick $i$ at random in $\{1,\ldots,n\}$ \label{alg:FW_product_SVM:randomSample}
                \STATE Solve $\outputvarv_i^* := \displaystyle\argmax_{\outputvarv\in\outputdomain_i} \ H_i(\outputvarv;\weightv^{(k)})$ \label{alg:FW_product_SVM:max_oracle}
                \STATE Let $\weightv_{\sv} := \frac1{\regularizerweight n} \featuremapdiffv_i(\outputvarv_i^*)$ \;
                and \; $\ell_{\sv} := \frac1n \errorterm_i(\outputvarv_i^*)$
               \STATE Let $g_i^{(k)} :=  \regularizerweight (\weightv_i^{(k)}-\weightv_{\sv})^\transpose \weightv^{(k)} - \ell_i^{(k)} + \ell_{\sv}$\label{alg:FW_product_SVM:block_gap}
               \STATE {\small Let $\stepsize := \frac{ g_i^{(k)}}{ \regularizerweight \|\weightv_i^{(k)}-\weightv_{\sv}\|^2}$~and clip to $[0,1]$}\label{alg:FW_product_SVM:line_search}
                \STATE Update ${\weightv_i}^{(k+1)}:= (1-\stepsize){\weightv_i}^{(k)}+\stepsize \,\weightv_{\sv}$
                \STATE {\small~~~~~~~and~ ${\ell_i}^{(k+1)}:= (1-\stepsize){\ell_i}^{(k)}+\stepsize\, \ell_{\sv}$}
                \STATE Update $\weightv^{(k+1)}\;:= \weightv^{(k)} + {\weightv_i}^{(k+1)} - {\weightv_i}^{(k)}$
                \STATE {\small~~~~~~~and~~ $\ell^{(k+1)}:= ~\ell^{(k)}+{\ell_i}^{(k+1)} \ \  - {\ell_i}^{(k)}$}
        \ENDFOR
\end{algorithmic}
\end{algorithm}

We give in Alg.~\ref{alg:FW_product_SVM} the BCFW algorithm from~\citet{lacosteJulien13bcfw} applied to problem~\eqref{eq:svmstruct_nslack_dual}. It exploits the block-separability of the domain~$\domain := \simplex_{|\outputdomain_1|}\times\mathellipsis\times\simplex_{|\outputdomain_n|}$ for problem~\eqref{eq:svmstruct_nslack_dual} and sequentially applies the Frank-Wolfe steps to the blocks of the dual variables~$\dualvarv_{(i)}\in\domain^{(i)} := \simplex_{|\outputdomain_i|}$.

While BCFW works on the dual~\eqref{eq:svmstruct_nslack_dual} of SSVM, it only maintains explicitly the primal variables via the relationship $\weightv(\dualvarv)$. 
Most importantly, the Frank-Wolfe linear oracle on block $i$ at iterate $\dualvarv^{(k)}$ is equivalent to the max oracle~\eqref{eq:subproblem_loss_augm} at the corresponding weight vector $\weightv^{(k)}:= A\dualvarv^{(k)}$ \citep[App.~B.1]{lacosteJulien13bcfw} (see line~\ref{alg:FW_product_SVM:max_oracle} of Alg.~\ref{alg:FW_product_SVM}):
\begin{equation}
\label{eq:block_max_oracle}
\!\!\!\!\displaystyle\max_{\sv_{(i)}\in \domain^{(i)}} \!\!\big\langle \sv_{(i)}, -\nabla_{(i)} f(\dualvarv^{(k)}) \big\rangle
=
\frac1n\max_{\outputvarv\in\outputdomain_i} H_i(\outputvarv;\weightv^{(k)}).
\end{equation}
Here, the operator~$\nabla_{(i)}$ denotes the partial gradient corresponding to the block~$i$, i.e., $\nabla f = (\nabla_{(i)} f)_{i=1}^n$. Note that each~$\argmax$ of the r.h.s. of~\eqref{eq:block_max_oracle}, $\outputvarv_{(i)}^*$, corresponds to a corner~$\sv_{(i)}^*$ of the polytope~$\domain^{(i)}$ maximizing the l.h.s. of~\eqref{eq:block_max_oracle}.

As the objective~\eqref{eq:svmstruct_nslack_dual} is quadratic, the optimal step size that yields the maximal improvement in the chosen direction $\sv_{(i)}^*- \dualvarv^{(k)}_{(i)}$ can be found analytically (Line~\ref{alg:FW_product_SVM:line_search} of Alg.~\ref{alg:FW_product_SVM}).

\subsection{Duality gap \label{sec:duality_gap}}
At each iteration, the batch Frank-Wolfe algorithm~\citep{Frank:1956vp}, \citep[Section~3]{lacosteJulien13bcfw} computes the following quantity, known as the \emph{linearization duality gap} or \emph{Frank-Wolfe gap}:
\begin{equation}\label{eq:duality_gap}
  g(\dualvarv) := \max_{\sv \in \domain} \,\langle \dualvarv - \sv, \nabla f(\dualvarv) \rangle
  = \langle \dualvarv - \sv^*, \nabla f(\dualvarv) \rangle .
\end{equation}
It turns out that this Frank-Wolfe gap exactly equals the Lagrange duality gap between the dual objective~\eqref{eq:svmstruct_nslack_dual} at a point~$\dualvarv$ and the primal objective~\eqref{eq:svmstruct_nslack_primal_nonsmooth} at the point~$\weightv(\dualvarv)=A\dualvarv$ \citep[App.~B.2]{lacosteJulien13bcfw}.

Because of the separability of~$\domain$, the Frank-Wolfe gap~\eqref{eq:duality_gap} can be represented here as a sum of block gaps~$g_i(\dualvarv)$, $g(\dualvarv) = \sum_{i=1}^n g_i(\dualvarv)$, where 
\begin{equation}
\label{eq:block_gap}
g_i(\dualvarv) := \max_{\sv_{(i)}\in \domain^{(i)}} \left\langle\dualvarv_{(i)} - \sv_{(i)}, \nabla_{(i)} f(\dualvarv) \right\rangle.
\end{equation}
Block gaps can be easily computed using the quantities maintained by Alg.~\ref{alg:FW_product_SVM} (see line~\ref{alg:FW_product_SVM:block_gap}).

Finally, we can rewrite the block gap in the form
\begin{equation}\label{eq:duality_gap_withH}
g_i(\dualvarv)
\!=\!
\frac{1}{n}  \bigg(\max_{\outputvarv \in \outputdomain_i} H_i(\outputvar; \weightv) -\!\!\sum_{\outputvarv \in \outputdomain_i}\dualvar_i(\outputvarv)H_i(\outputvarv; \weightv)  \bigg)
\end{equation}
providing understandable intuition of when the block gap equals zero. This is the case when all the support vectors, i.e., labelings corresponding to  $\dualvar_i(\outputvarv) > 0$, are tied solutions of the max oracle~\eqref{eq:block_max_oracle}.

\subsection{Convergence of BCFW \label{sec:convergence_bcfw}}
\citet{lacosteJulien13bcfw} prove the convergence of the BCFW algorithm at a rate $\bigO(\frac{1}{k})$.

\begin{theorem}[\citet{lacosteJulien13bcfw}, Theorem~2] \label{thm:convergence_FW_product}
For each $k\ge 0$, the iterate\footnote{Note that Alg.~\ref{alg:FW_product_SVM} does not maintain iterates~$\dualvarv^{(k)}$ explicitly.
They are stored in the form of~$\weightv^{(k)} = A \dualvarv^{(k)}$\!.} 
$\dualvarv^{(k)}$ of %
Alg.~\ref{alg:FW_product_SVM} satisfies
$
\E\big[f(\dualvarv^{(k)})\big] - f(\dualvarv^*) \le \frac{2n}{k+2n}\big(\CfTotal+ h_0\big) \, ,
$
where $\dualvarv^*\in \domain$ is a solution of the problem~(\ref{eq:svmstruct_nslack_dual}), $h_0 := f(\dualvarv^{(0)}) - f(\dualvarv^*)$ is the suboptimality at the starting point of the algorithm, $\CfTotal := \sum_{i=1}^n \Cf^{(i)}$ is the sum of the curvature constants\footnote{For the definition of curvature constant, see Definition~\ref{def:curvature_const} in App.~\ref{app:descent_lemma} or \citep[App.~A]{LacosteJulien2015linearFW}} of $f$ with respect to the domains~$\domain^{(i)}$ of individual blocks.
The expectation is taken over the random choice of the block~$i$ at iterations $1,\dots,k$ of the algorithm.
\end{theorem}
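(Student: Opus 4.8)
The plan is to turn the per-step progress of a single block Frank--Wolfe update into a recurrence on the expected suboptimality $\bar h_k := \E[f(\dualvarv^{(k)})] - f(\dualvarv^*)$ and then solve that recurrence by induction. The three ingredients I would assemble are: (i) a quadratic descent inequality for one block step, controlled by the block curvature $\Cf^{(i)}$; (ii) the block-separable structure of the gap together with uniform sampling, to convert a single-block bound into a bound on the total gap $g(\dualvarv^{(k)}) = \sum_i g_i(\dualvarv^{(k)})$; and (iii) the elementary fact that the Frank--Wolfe gap upper bounds the suboptimality.

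First I would establish the descent lemma for a block step. Writing the update as $\dualvarv_{(i)}^{(k+1)} = \dualvarv_{(i)}^{(k)} + \gamma(\sv_{(i)}^* - \dualvarv_{(i)}^{(k)})$, with only block $i$ changing, the definition of the block curvature constant $\Cf^{(i)}$ gives, for every $\gamma \in [0,1]$,
\begin{equation*}
f(\dualvarv^{(k+1)}) \le f(\dualvarv^{(k)}) - \gamma\, g_i(\dualvarv^{(k)}) + \tfrac{\gamma^2}{2}\Cf^{(i)},
\end{equation*}
where the linear term is exactly $-\gamma\, g_i(\dualvarv^{(k)})$ by the definition of the block gap in~\eqref{eq:block_gap} with $\sv_{(i)}^*$ the block Frank--Wolfe corner. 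Because Line~\ref{alg:FW_product_SVM:line_search} performs optimal (clipped) line search, the value actually attained is no larger than the right-hand side for any fixed $\gamma \in [0,1]$, so I may keep $\gamma$ as a free parameter to be chosen later.

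Next I would take the expectation over the uniformly sampled block $i$. Conditioning on $\dualvarv^{(k)}$ and averaging the descent inequality over $i \in [n]$ replaces the single-block quantities by their averages, $\frac1n\sum_i g_i(\dualvarv^{(k)}) = \frac1n g(\dualvarv^{(k)})$ and $\frac1n\sum_i \Cf^{(i)} = \frac1n \CfTotal$, which is precisely where the factor $n$ enters. Taking full expectation and using the gap bound $g(\dualvarv^{(k)}) \ge f(\dualvarv^{(k)}) - f(\dualvarv^*)$ (immediate from convexity, since $g(\dualvarv) = \langle \dualvarv - \sv^*, \nabla f(\dualvarv)\rangle \ge \langle \dualvarv - \dualvarv^*, \nabla f(\dualvarv)\rangle \ge f(\dualvarv)-f(\dualvarv^*)$) yields the one-step recurrence
\begin{equation*}
\bar h_{k+1} \le \Big(1 - \tfrac{\gamma}{n}\Big)\bar h_k + \tfrac{\gamma^2}{2n}\CfTotal, \qquad \gamma \in [0,1].
\end{equation*}

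Finally I would solve this recurrence by induction with the schedule $\gamma = \frac{2n}{k+2n} \in [0,1]$, proving the claimed bound $\bar h_k \le \frac{2n}{k+2n}(\CfTotal + h_0)$. The base case $k=0$ is immediate because $\CfTotal \ge 0$; the inductive step reduces, after substituting $\gamma$ and bounding $\CfTotal \le \CfTotal + h_0$, to the elementary inequality $(k+2n-1)(k+2n+1) \le (k+2n)^2$. The main obstacle is conceptual rather than computational: correctly combining the \emph{local} per-block progress (which only ever sees one curvature constant $\Cf^{(i)}$) with the \emph{global} gap via the uniform average, so that the total curvature $\CfTotal$ and the $2n$ normalization appear together; once the recurrence is in hand, the induction is routine. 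A secondary point to handle with care is verifying that the schedule stays in $[0,1]$ and that the descent inequality holds for the line-search value at every such $\gamma$, which is what licenses plugging in the decaying step size.
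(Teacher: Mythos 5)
Your proposal is correct and takes essentially the same approach as the paper: the block descent lemma derived from the curvature definition (Lemma~\ref{lem:block_step_improvement}, inequality~\eqref{eq:block_descent_line_search}), the expectation over uniform block sampling yielding~\eqref{eq:expected_descent_line_search}, the bound of the suboptimality by the Frank--Wolfe gap, and the induction with step size $\stepsize_k = \frac{2n}{k+2n}$. This is exactly the argument the paper sketches in Section~\ref{sec:convergence_bcfw} (deferring the induction to \citet{lacosteJulien13bcfw}) and carries out in full for its generalizations in Appendices~\ref{app:proof_gap_sampling} and~\ref{app:caching_theorem}.
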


The proof of Theorem~\ref{thm:convergence_FW_product} crucially depends on a standard \emph{descent lemma} applied to a block, stating that at each iteration of BCFW, for any picked block~$i$ and any scalar~$\stepsize \in [0,1]$, the following inequality holds:
\begin{equation}
\label{eq:block_descent_line_search}
  f(\dualvarv^{(k+1)})
  \le f(\dualvarv^{(k)}) - \stepsize g_i(\dualvarv^{(k)}) + \frac{\stepsize^2}{2}  \Cf^{(i)}.
\end{equation}
We rederive inequality~\eqref{eq:block_descent_line_search} as Lemma~\ref{lem:block_step_improvement} in App.~\ref{app:descent_lemma}.
Note that~$\dualvarv^{(k+1)} \in \domain$ is defined by a line search, which is why the bound~\eqref{eq:block_descent_line_search} holds for any scalar~$\stepsize \in [0,1]$.

Taking the expectation of~\eqref{eq:block_descent_line_search} w.r.t.\ the random choice of block~$i$ (sampled uniformly on~$[n]$), we get the inequality
\begin{equation}
\label{eq:expected_descent_line_search}
\E\big[ f( {\scriptstyle \dualvarv^{(k+1)}})\,|\, {\scriptstyle \dualvarv^{(k)} }\big]
  \le f(\dualvarv^{(k)}) - \frac{\stepsize}{n} g(\dualvarv^{(k)}) + \frac{\stepsize^2}{2n} \CfTotal
\end{equation}
which can be used to get the convergence theorem.

\section{Block gaps in BCFW \label{sec:contributions}}
In this section, we propose three ways to improve the BCFW algorithm: adaptive sampling (Sec.~\ref{sec:gap_sampling}), pairwise and away steps (Sec.~\ref{sec:pairwise_away_steps}) and caching (Sec.~\ref{sec:caching}).

\subsection{Adaptive non-uniform sampling \label{sec:gap_sampling}}

\setlength{\textfloatsep}{\Stextfloatsep} %
\begin{figure*}
\centering
\begin{subfigure}[b]{0.28\textwidth}
\includegraphics[width=\textwidth]{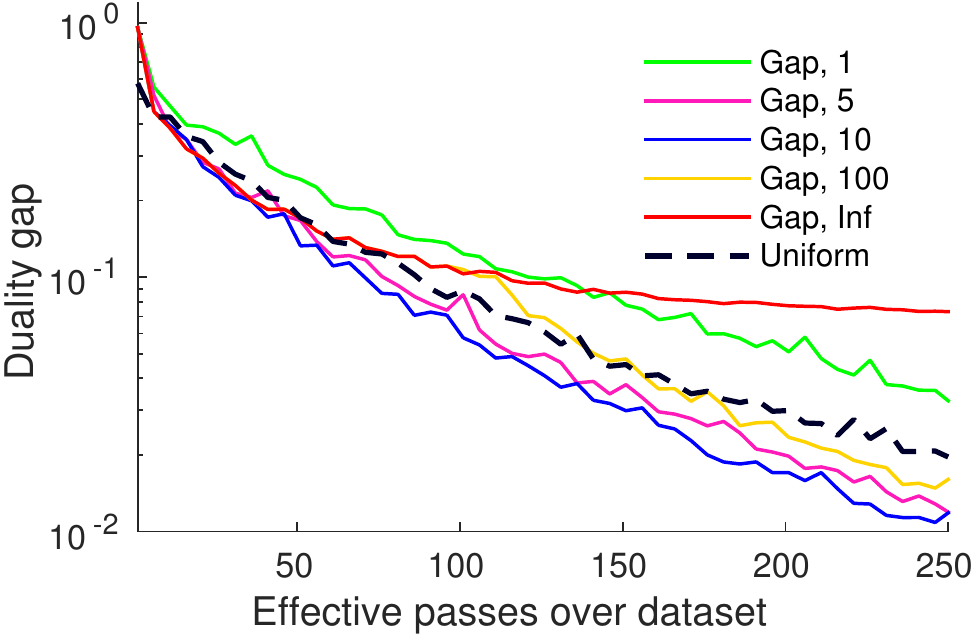}
\caption{Convergence plots\label{fig:gapSampling_a}\vspace{-3.5mm}}
\end{subfigure}
\qquad 
\begin{subfigure}[b]{0.28\textwidth}
\includegraphics[width=\textwidth]{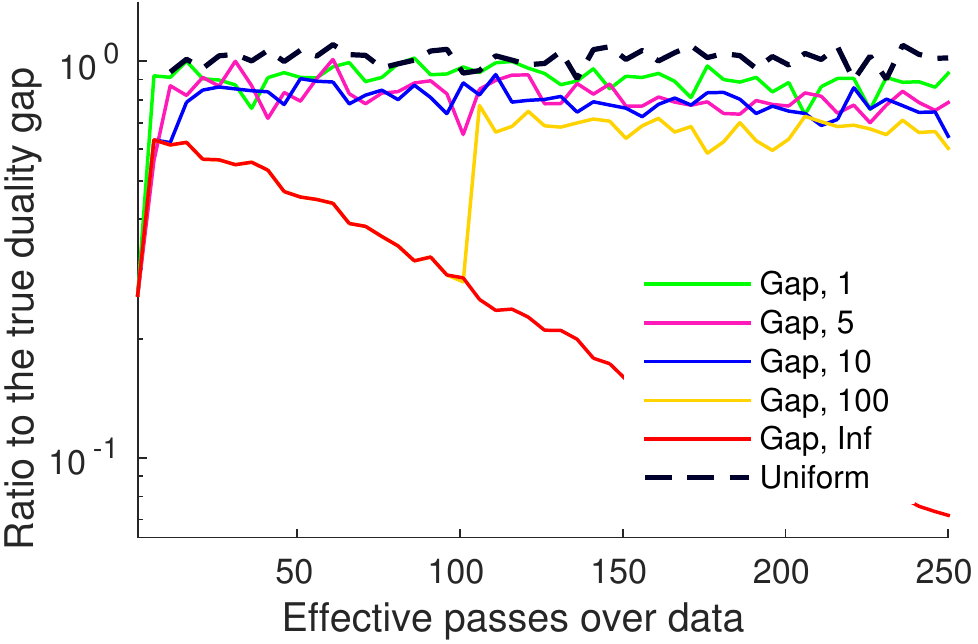}
\caption{Quality of gap estimates\label{fig:gapSampling_b}\vspace{-3.5mm}}
\end{subfigure}
\qquad 
\begin{subfigure}[b]{0.28\textwidth}
\includegraphics[width=\textwidth]{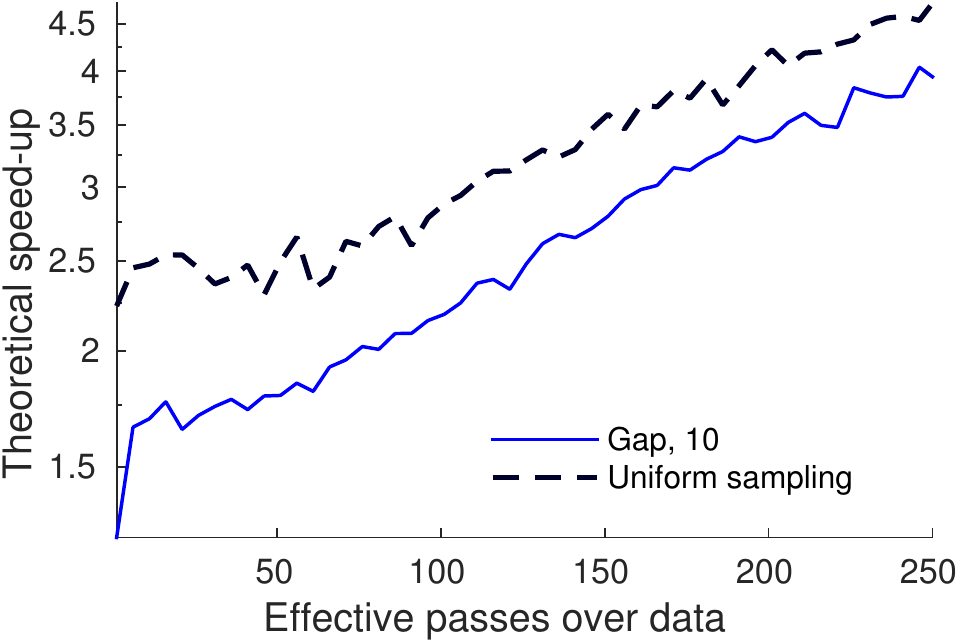}
\caption{Theoretical improvement\label{fig:gapSampling_c}\vspace{-3.5mm}}
\end{subfigure}
\caption{Plot~(a) shows exploitation/staleness trade-off for the gap sampling approach. We report the duality gap against the number of effective passes over the data for uniform sampling and for gap sampling with the different frequencies of batch passes updating the gap estimates (every pass over data, every 5, 10, 100 passes, no batch updates). Plot~(b) shows the quality of heuristic gap estimates obtained by the same methods. We report the ratio of the heuristic gap estimate to the true gap value. Plot~(c) shows the factor of improvement of exact gap sampling predicted by Theorem~\ref{thm:convTheoremGapSampling} for real gaps appearing during a run of BCFW with either uniform or gap sampling.\vspace{-3.5mm}
    \label{fig:gapSampling}}
\end{figure*}

\paragraph{Motivation.}
When optimizing finite sums such as~\eqref{eq:svmstruct_nslack_primal_nonsmooth}, it is often the case that processing some summands does not lead to significant progress of the algorithm. 
At each iteration, the BCFW algorithm selects a training object and performs the block-coordinate step w.r.t.\ the corresponding dual variables. 
If these variables are already close to being optimal, then BCFW does not make significant progress at this iteration. Usually, it is hard to identify whether processing the summand would lead to an improvement without actually doing computations on it. The BCFW algorithm obtains at each iteration the block gap~\eqref{eq:block_gap} quantifying the suboptimality on the block. In what follows, we use the block gaps to randomly choose a block (an object of the training set) at each iteration in such a way that the blocks with larger suboptimality are sampled more often (the sampling probability of a block is proportional to the value of the current gap estimate).

\paragraph{Convergence.}

Assume that at iteration~$k$ of Alg.~\ref{alg:FW_product_SVM}, we have the probability~$p_i^{(k)}$ 
of sampling block~$i$.
By minimizing the descent lemma bound~\eqref{eq:block_descent_line_search} w.r.t.\ $\stepsize$ for each~$i$ independently under the assumption that $g_i(\dualvarv^{(k)}) \leq \Cf^{(i)}$, and then taking the conditional expectation w.r.t.\ $i$, we get
\begin{equation}
\label{eq:expectedImprovement}
\E\big[ f( {\scriptstyle \dualvarv^{(k+1)}})\,|\, {\scriptstyle \dualvarv^{(k)} }\big]
  \le f(\dualvarv^{(k)}) - \frac{1}{2} \sum_{i=1}^n \p_i^{(k)} \frac{g_i^2(\dualvarv^{(k)})}{\Cf^{(i)}}.
\end{equation}
Intuitively, by adapting the probabilities $p_i^{(k)}$, we can obtain a better bound on the expected improvement of~$f$.
In the ideal scenario, one would choose deterministically the block~$i$ with the maximal value of $g_i^2(\dualvarv^{(k)})/\Cf^{(i)}$.

In practice, the curvature~$\Cf^{(i)}$ is unknown, and having access to all $g_i( {\scriptstyle \dualvarv^{(k)} })$'s at each step is prohibitively expensive.
However, the values of the block gaps obtained at the previous iterations can serve as estimates of the block gaps at the current iteration.
We use them in the following \emph{non-uniform} gap sampling scheme:
$
p_i^{(k)} \propto {g}_i(\dualvarv^{(k_i)}).
$ 
 where $k_i$ records the last iteration at which the gap~$i$ was computed.
Alg.~\ref{alg:FW_product_SVM_gapsampling} in App.~\ref{app:algorithms} summarizes the method.

We also motivate this choice by Theorem~\ref{thm:convTheoremGapSampling} below which shows that BCFW with (exact) gap sampling converges with a better constant in the rate than BCFW with uniform sampling when the gaps are non-uniform enough (and is always better when the curvatures~$\Cf^{(i)}$'s are uniform). See the proof and discussion in App.~\ref{app:proof_gap_sampling}.

\begin{theorem}\label{thm:convTheoremGapSampling}
Consider the same notation as in Theorem~\ref{thm:convergence_FW_product}.
Assume that at each iterate~$\dualvarv^{(k)}$, BCFW with gap sampling (Alg.~\ref{alg:FW_product_SVM_gapsampling}) has access to the exact values of the block gaps. 
Then, at each iteration, it holds that
$
\E \big[f(\dualvarv^{(k)})\big] - f(\dualvarv^*) \le \frac{2n}{k+2n}\big(\CfTotal \nonuniformityTotal + h_0\big)
$
where the constant~$\nonuniformityTotal$ is an upper bound on $\E \Big[ \frac{\nonuniformity(\Cf^{(:)})}{\nonuniformity(\bm{g}_{:}(\dualvarv^{(k)}))^3} \Big]$. The \emph{non-uniformity measure} $\nonuniformity(\bm{x})$ of a vector~$\bm{x}\in\R^n_+$ is defined as
$
\nonuniformity(\bm{x}) 
:=  
\sqrt{1+n^2\Var \big[\bm{p}\big]}
$
where~$\bm{p} := \frac{\bm{x}}{\Vert \bm{x} \Vert_1}$ is the probability vector obtained by normalizing~$\bm{x}$.
\end{theorem}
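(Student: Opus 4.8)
The plan is to reduce the gap-sampling recursion to \emph{exactly} the uniform-sampling recursion behind Theorem~\ref{thm:convergence_FW_product}, but with the total curvature $\CfTotal$ replaced by the effective curvature $\CfTotal\,\nonuniformityTotal$; the $\bigO(1/k)$ induction of \citet{lacosteJulien13bcfw} then applies verbatim. The first preparatory step is to put the non-uniformity measure into a usable closed form. Since $\bm{p} := \bm{x}/\Vert\bm{x}\Vert_1$ has entries summing to one, its empirical mean is $1/n$, so that $1+n^2\Var[\bm{p}] = n\sum_i p_i^2 = n\Vert\bm{x}\Vert_2^2/\Vert\bm{x}\Vert_1^2$ and hence
\begin{equation*}
  \nonuniformity(\bm{x}) = \sqrt{n}\,\Vert\bm{x}\Vert_2/\Vert\bm{x}\Vert_1 \ge 1,
\end{equation*}
with equality iff $\bm{x}$ is uniform (Cauchy--Schwarz). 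This identity is what will let me convert every $\ell_2/\ell_1$ ratio appearing below into a non-uniformity factor.

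Next I would start from the block descent lemma~\eqref{eq:block_descent_line_search}. Because $\dualvarv^{(k+1)}$ is produced by a line search, the bound is valid for \emph{every} common step size $\stepsize\in[0,1]$, so I may take the conditional expectation over the gap-sampled block ($\p_i^{(k)} = g_i(\dualvarv^{(k)})/g(\dualvarv^{(k)})$) while keeping a single $\stepsize$:
\begin{equation*}
  \E\big[f(\dualvarv^{(k+1)})\,\big|\,\dualvarv^{(k)}\big]
  \le f(\dualvarv^{(k)}) - \stepsize\,\bar g + \tfrac{\stepsize^2}{2}\,\bar C,
\end{equation*}
with $\bar g := \sum_i \p_i^{(k)} g_i = \Vert\bm{g}\Vert_2^2/\Vert\bm{g}\Vert_1$ and $\bar C := \sum_i \p_i^{(k)}\Cf^{(i)} = \langle\bm{g},\Cf^{(:)}\rangle/\Vert\bm{g}\Vert_1$, where $\bm{g} := \bm{g}_{:}(\dualvarv^{(k)})$ and $g := g(\dualvarv^{(k)}) = \Vert\bm{g}\Vert_1$. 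The closed form gives exactly $\bar g = \nonuniformity(\bm{g})^2\,g/n$, and Cauchy--Schwarz gives $\bar C \le \Vert\bm{g}\Vert_2\Vert\Cf^{(:)}\Vert_2/\Vert\bm{g}\Vert_1 = \nonuniformity(\bm{g})\,\nonuniformity(\Cf^{(:)})\,\CfTotal/n$. The crucial move is then to pick the iterate-dependent step $\stepsize := \stepsize_0/\nonuniformity(\bm{g})^2$, which lies in $[0,1]$ whenever $\stepsize_0\le 1$ since $\nonuniformity(\bm{g})\ge 1$. This cancels the $\nonuniformity(\bm{g})^2$ factor in the linear term and leaves the single factor $\rho^{(k)} := \nonuniformity(\Cf^{(:)})/\nonuniformity(\bm{g})^3$ in the quadratic term; together with $g\ge h(\dualvarv^{(k)}) := f(\dualvarv^{(k)}) - f(\dualvarv^*)$ this produces
\begin{equation*}
  \E\big[h(\dualvarv^{(k+1)})\,\big|\,\dualvarv^{(k)}\big]
  \le \Big(1-\tfrac{\stepsize_0}{n}\Big)\,h(\dualvarv^{(k)})
  + \frac{\stepsize_0^2}{2n}\,\CfTotal\,\rho^{(k)}.
\end{equation*}

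Finally I would take the full expectation and apply the hypothesis $\E[\rho^{(k)}]\le\nonuniformityTotal$, obtaining $\E[h(\dualvarv^{(k+1)})] \le (1-\stepsize_0/n)\,\E[h(\dualvarv^{(k)})] + \tfrac{\stepsize_0^2}{2n}\CfTotal\nonuniformityTotal$. This is identical to the recursion driving Theorem~\ref{thm:convergence_FW_product} with $\CfTotal$ replaced by $\CfTotal\nonuniformityTotal$, so the schedule $\stepsize_0 = \tfrac{2n}{k+2n}$ and the same induction (base case $k=0$ holds since $\CfTotal\nonuniformityTotal\ge 0$) deliver the claimed bound. The main obstacle, and the reason for the step-size rescaling, is the iterate-dependence of the non-uniformity: the naive route through~\eqref{eq:expectedImprovement} (per-block-optimal steps) yields a recursion of the form $\E[h(\dualvarv^{(k+1)})]\le \E[h(\dualvarv^{(k)})] - \tfrac{1}{2n\CfTotal}\,\E\!\big[h(\dualvarv^{(k)})^2/\rho^{(k)}\big]$, and the coupled, nonlinear expectation $\E[h^2/\rho^{(k)}]$ cannot be controlled by $\E[\rho^{(k)}]$ alone. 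Rescaling $\stepsize$ by $\nonuniformity(\bm{g})^{-2}$ is precisely what linearizes the recursion and isolates $\rho^{(k)}$ as a single multiplicative factor, so that the expectation hypothesis defining $\nonuniformityTotal$ suffices. As a consistency check, the per-block-optimal route is reconciled with this one through the Hölder inequality $\sum_i g_i^3/\Cf^{(i)} \ge \Vert\bm{g}\Vert_2^3/\Vert\Cf^{(:)}\Vert_2$, which reproduces exactly the same factor $\rho^{(k)}$.
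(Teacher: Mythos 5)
Your proposal is correct and follows essentially the same route as the paper's own proof: the identity $\nonuniformity(\bm{x}) = \sqrt{n}\,\Vert\bm{x}\Vert_2/\Vert\bm{x}\Vert_1$ (the paper's Lemma on the $\ell_1$/$\ell_2$ relation), the expected descent bound under gap-sampling with a common step size, Cauchy--Schwarz on $\sum_i \Cf^{(i)} g_i$, and then the key step-size choice $\stepsize = \stepsize_0/\nonuniformity(\bm{g})^2$ with $\stepsize_0 = \tfrac{2n}{k+2n}$, which is exactly the paper's $\stepsize = \tfrac{2n}{\nonuniformity(\bm{g})^2(k+2n)}$, followed by the same induction from \citet{lacosteJulien13bcfw}. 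The only difference is organizational (you keep $\stepsize_0$ generic until the end and add a Hölder-based consistency remark), not mathematical.
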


\vspace{-2mm}
\paragraph{Adaptive procedure.} 
Note that this procedure is \emph{adaptive}, meaning that the criterion for choosing an object to optimize changes during the optimization process.
Our adaptive approach differs from more standard techniques that sample proportional to the Lipschitz constants, as e.g., in~\citet{Nesterov:2012fa}.
In App.~\ref{app:toy_example}, we illustrate the advantage of this property by constructing an example where the convergence of gap sampling can be shown \emph{tightly} to be $n$ times faster than when using Lipschitz sampling.

\vspace{-2mm}
\paragraph{Exploitation versus staleness trade-off.}
In practice, having access to the exact block gaps is intractable because it requires a full pass over the dataset after every block update.
However, we have access to the estimates of the block gaps computed from past oracle calls on each block.
Notice that such estimates are outdated, i.e., might be quite far from the current values of the block gaps. We call this effect~``staleness''.
One way to compensate staleness is to refresh the block gaps by doing a full gap computation (a pass over the dataset) after several block-coordinate passes. These gap computations were often already done during the optimization process, e.g., to monitor convergence.

We demonstrate the exploitation/staleness trade-off in our exploratory experiment reported in Figure~\ref{fig:gapSampling}.
On the OCR dataset~\citep{Taskar2003}, we run the gap sampling algorithm with a gap computation pass after 1, 5, 10 and 100 block-coordinate passes (Gap 1, Gap 5, Gap 10, Gap 100) and without any gap computation passes (Gap Inf).
As a baseline, we use BCFW with uniform sampling (Uniform). Figure~\ref{fig:gapSampling_a} reports the duality gap after each number of effective passes over the data.\footnote{An effective pass consists in $n$ calls to the max oracle.}
Figure~\ref{fig:gapSampling_b} shows the ratio of the exact value of the duality gap to the heuristic gap estimate defined as the sum of the current gap estimates.
We observe that when the gap computation is never run, the gap becomes significantly underestimated and the algorithm does not converge.
On another extreme, when performing the gap computation after each pass of BCFW, the algorithm wastes too many computations and converges slowly. Between the two extremes, the method is not very sensitive to the parameter (we have tried 5, 10, 20, 50) allowing us to always use the value of~10.

Comparing adaptive methods to BCFW with uniform sampling, we observe a faster convergence.
Figure~\ref{fig:gapSampling_c} reports the improvement of gap sampling at each iteration w.r.t. uniform sampling that is predicted by Theorem~\ref{thm:convTheoremGapSampling}. 
Specifically, we report the quantity $\nicefrac{\nonuniformity(\bm{g}_{:}(\dualvarv^{(k)}))^3}{\nonuniformity(\Cf^{(:)})}$ with the block gaps estimated at the runs of BCFW with both uniform and gap sampling schemes. To estimate the curvature constants $\Cf^{(i)}$, we use the upper bounds proposed by~\citet[App.~A]{lacosteJulien13bcfw}: $\frac{4R_i^2}{\regularizerweight n^2}$ where
$R_i:= \max_{\outputvarv \in \outputdomain_i}\norm{\featuremapdiffv_i(\outputvarv)}_2$.
We approximate $R_i$ by picking the largest value~$\norm{\featuremapdiffv_i(\outputvarv)}_2$ corresponding to a labeling~$\outputvarv$ observed within the run of BCFW.

\paragraph{Related work.}
Non-uniform sampling schemes have been used over the last few years to improve the convergence rates of well known randomized algorithms~\cite{Nesterov:2012fa,needell2014,ZhaoImportanceSampling_ICML15}.
Most of these approaches use the Lipschitz constants of the gradients to sample more often functions for which gradient changes quickly.
This approach has two main drawbacks.
First, Lipschitz constants are often unknown and heuristics are needed to estimate them.
Second, such schemes are not adaptive to the current progress of the algorithm.
To the best of our knowledge, the only other approach that uses an \emph{adaptive} sampling scheme to guide the optimization with convergence guarantees is the one from~\citet{Csiba15adaSDCA}, in the context of the stochastic dual coordinate ascent (SDCA) algorithm.
A cyclic version of BCFW has been analyzed by~\citet{beck2015cyclicBCFW} while~\citet{wang2014parallelBCFW} analyzed its mini-batch form.

\subsection{Pairwise and away steps \label{sec:pairwise_away_steps}}
\paragraph{Motivation.}
In the batch setting, the convergence rate of the Frank-Wolfe algorithm is known to be sublinear
when the solution is on the boundary~\citep{Wolfe:1970wy}, as is the case for SSVM\@.
Several modifications have been proposed in the literature to address this issue.
All these methods replace (or complement) the FW step with a step of another type: pairwise step~\citep{Mitchell:1974uy}, away step~\citep{Wolfe:1970wy}, fully-corrective step~\citep{Holloway:1974:FCFW} (see~\citet{LacosteJulien2015linearFW} for a recent review and the proof that all these methods have a linear rate on the objective~\eqref{eq:svmstruct_nslack_dual} despite not being strongly convex).
A common feature of these methods is the ability to remove elements of the active set (support vectors in the case of SSVM) in order to reach the boundary, unlike FW which oscillates while never completely reaching the boundary. As we expect the solution of SSVM to be sparse, these variants seem natural in our setting.
In the rest of this section, we present the pairwise steps in the block-coordinate setting (the away-step version is described in Alg.~\ref{alg:aFW_product_SVM_away_steps} of App.~\ref{alg:aFW_product_SVM_away_steps}).

\paragraph{Pairwise steps.}
A (block) pairwise step consists in \emph{removing} mass from the \emph{away corner} on block~$i$ and transferring it to the 
\emph{FW corner} obtained by the max oracle~\eqref{eq:block_max_oracle}. The away corner is the element of the active set~$\activeS_i := \{\outputvarv \in \outputdomain_i \ | \ \alpha_i(\outputvarv)>0\} \subseteq \outputdomain_i$ worst aligned with the current descent direction, which can be found by solving~$\outputvarv_i^{\awayv} := \argmin\nolimits_{\outputvarv\in\activeS_i} H_i(\outputvarv;\weightv)$.
This does not require solving a combinatorial optimization problem because the size of the active set is typically small, e.g., bounded by the number of iterations performed on the block~$i$.
Analogously to the case of BCFW, the optimal step size~$\stepsize$ for the pairwise step can be computed explicitly by
clipping $\frac{\regularizerweight (\weightv_{\awayv}-\weightv_{\sv})^\transpose \weightv^{(k)} + \ell_{\sv} - \ell_{\awayv}}{ \regularizerweight \|\weightv_{\awayv}-\weightv_{\sv}\|^2}$ to the segment $[0,\dualvar_i^{(k)}(\outputvarv_i^{\awayv})]$ where the upper bound~$\dualvar_i^{(k)}(\outputvarv_i^{\awayv})$ corresponds to the mass of the away corner before the step and the quantities 
$\weightv_{\sv} := \frac1{\regularizerweight n} \featuremapdiffv_i(\outputvarv_i^*)$, $\ell_{\sv} := \frac1n \errorterm_i(\outputvarv_i^*)$ and 
$\weightv_{\awayv} := \frac1{\regularizerweight n} \featuremapdiffv_i(\outputvarv_i^a)$, $\ell_{\awayv} := \frac1n \errorterm_i(\outputvarv_i^a)$ represent the FW and away corners.
Alg.~\ref{alg:pFW_product_SVM_pairwise} in App.~\ref{app:algorithms} summarizes the block-coordinate pairwise Frank-Wolfe (BCPFW) algorithm.

In contrast to BCFW, the steps of BCPFW cannot be expressed in terms of the primal variables~$\weightv$ only, thus it is required to explicitly store the dual variables~$\dualvarv_i$. Storing the dual variables is feasible, because they are extremely sparse, but still can lead to computational overheads caused by the maintenance of the data structure.

The standard convergence analysis for pairwise and away-step FW cannot be easily extended to BCFW. We show the geometric decrease of the objective in Theorem~\ref{thm:BCPFW} of App.~\ref{app:BCPFWconvergence} only when \emph{no} block would have a drop step (a.k.a.\ `bad step'); a condition that cannot be easily analyzed due to the randomization of the algorithm. We believe that novel proof techniques are required here, even though we did observe empirically a linear convergence rate when $\regularizerweight$ is big enough.

\paragraph{Related work.}
\citet[Alg.~4]{nanculef2014} used the pairwise FW algorithm on the dual of binary SVM (in batch mode, however). It is related to classical working set algorithms, such as the SMO algorithm used to train SVMs~\cite{platt1999SMO}, also already applied on SSVMs in~\citet[Ch.~6]{taskar04thesis}.
\citet{franc2014fasole} recently proposed a version of pairwise FW for the block-coordinate setting.
Their SDA-WSS2 algorithm uses a different criterion for choosing the away corner than BCPFW:
instead of minimizing~$H_i$ over the active set~$\activeS_i$, they compute the improvement for all possible away corners and pick the best one.
Their FASOLE algorithm also contains a version of gap sampling in the form of variable shrinking: if a block gap becomes small enough, the block is not visited again, until all the counters are reset.

\subsection{Caching \label{sec:caching}}

\paragraph{Motivation.}
At each step, the BCFW and BCPFW algorithms call the max oracle to find the Frank-Wolfe corner.
In cases where the max oracle is expensive, this step becomes a computational bottleneck.
A natural idea to overcome this problem consists in using a ``cheaper oracle'' most of the time hoping that the resulting corner would be good enough.
Caching the results of the max oracle implements this idea by reusing the previous calls of the max oracle to store potentially promising corners.

\paragraph{Caching.}
The main principle of caching consists in maintaining a working set $\cache_i \subset \outputdomain_i$ of labelings/corners for each block $i$, where $|\cache_i|\ll|\outputdomain_i|$.
A \emph{cache oracle} obtains the \emph{cache corner} defined as a corner from the working set best aligned with the descent direction, i.e., $\outputvarv_i^c := \argmax_{\outputvarv\in\cache_i} \ H_i(\outputvarv;\weightv)$. 
If the obtained cache corner passes a \emph{cache hit criterion}, i.e., there is a \emph{cache hit}, we do a Frank-Wolfe (or pairwise) step based on the cache corner.
A step defined this way is equivalent to the corresponding step on the convex hull of the working set, which is a subset of the block domain~$\outputdomain_i$.
If a cache hit criterion is not satisfied, i.e., there is a \emph{cache miss}, we call the (possibly expensive) max oracle to obtain a Frank-Wolfe corner over the full domain $\outputdomain_i$.
Alg.~\ref{alg:FW_product_SVM_caching} in App.~\ref{app:algorithms} summarizes the BCFW method with caching.

Note that, in the case of BCPFW, the working set~$\cache_i$ is closely related to the active set~$\activeS_i$. 
On the implementation side, we maintain both sets in the same data structure and keep $\activeS_i \subseteq \cache_i$.

\paragraph{Cache hit criterion.}
An important part of a caching scheme is the criterion deciding whether the cache look up is sufficient or the max oracle needs to be called.
Intuitively, we want to use the cache whenever it allows optimization to make large enough progress.
We use as measure of potential progress the inner product between the candidate direction and the negative gradient (which would give the block gap $g_i$~\eqref{eq:block_gap} if the FW corner is used). For a cache step, it gives $\cachegap_i^{(k)} := \regularizerweight(\weightv_i^{(k)}-\weightv_{\cachev})^\transpose \weightv^{(k)}-{\ell_i}^{(k)}+\ell_{\cachev}$, which is defined by quantities~$\weightv_{\cachev} = \frac{ \featuremapdiffv_i(\outputvarv_i^c)}{\regularizerweight n}$, $\ell_{\cachev}=\frac1n L_i(\outputvarv_i^c)$ similar to the ones defining the block gap.
The quantity~$\cachegap_i^{(k)}$ is then compared to a \emph{cache hit threshold} defined as~$\max(\cacheF g_i^{(k_i)},\frac{\cacheN}{n} g^{(k_0)})$ where~$k_i$ identifies the iteration when the max oracle was last called for the block~$i$, $k_0$ is the index of the iteration when the full batch gap was computed, $\cacheF>0$ and $\cacheN>0$ are cache parameters. 

The following theorem gives a safety convergence result for BCFW with caching (see App.~\ref{app:caching_theorem} for the proof).
\begin{theorem} \label{thm:cacheTheorem}
Consider the same notation as in Theorem~\ref{thm:convergence_FW_product}. Let~$\tilde{\cacheN} := \frac1n\cacheN \leq 1$. The iterate $\dualvarv^{(k)}$ of %
Alg.~\ref{alg:FW_product_SVM_caching} satisfies
$
\E\big[f(\dualvarv^{(k)})\big] - f(\dualvarv^*) \le \frac{2n}{\tilde{\cacheN}k+2n}\big(\frac{1}{\tilde{\cacheN}}\CfTotal+ h_0\big)
$
for $k\geq0$.
\end{theorem}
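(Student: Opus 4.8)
The plan is to reduce the analysis to a one-step contraction of the suboptimality $h_k := f(\dualvarv^{(k)}) - f(\dualvarv^*)$ that mirrors the proof of Theorem~\ref{thm:convergence_FW_product} but carries an extra factor of $\tilde{\cacheN}$ in the linear term. First I would record two elementary facts. Since every step (cache or full) is a Frank-Wolfe-type step with a non-negative gap followed by the line search of line~\ref{alg:FW_product_SVM:line_search}, $f$ is non-increasing along the iterates, so $h_k$ is non-increasing in $k$. Second, the Frank-Wolfe gap~\eqref{eq:duality_gap} upper bounds the suboptimality: by convexity, $g(\dualvarv^{(k)}) \ge \langle \dualvarv^{(k)} - \dualvarv^*, \nabla f(\dualvarv^{(k)})\rangle \ge h_k$. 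Combining these with $k_0 \le k$ (the last iteration at which the full batch gap was evaluated) yields the crucial inequality $g^{(k_0)} \ge h_{k_0} \ge h_k$, which is what lets us translate the cache-hit threshold into a bound on the \emph{current} suboptimality.

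Next I would write the block descent lemma~\eqref{eq:block_descent_line_search} for the step actually performed on the selected block $i$. Whatever corner is used (the cache corner on a hit, the max-oracle corner on a miss), the update is a Frank-Wolfe step on a convex subset of $\domain^{(i)}$; hence its curvature is bounded by $\Cf^{(i)}$ and~\eqref{eq:block_descent_line_search} holds with the gap $\bar g_i^{(k)}$ of that step replacing $g_i(\dualvarv^{(k)})$, for every $\stepsize \in [0,1]$. Taking the expectation over the uniformly sampled block gives $\E[h_{k+1}\mid\dualvarv^{(k)}] \le h_k - \tfrac{\stepsize}{n}\sum_{i}\bar g_i^{(k)} + \tfrac{\stepsize^2}{2n}\CfTotal$. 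The heart of the argument --- and the step I expect to be the main obstacle --- is the lower bound $\sum_{i}\bar g_i^{(k)} \ge \tilde{\cacheN}\,h_k$. I would prove it by a two-case split on the (state-determined) hit/miss pattern: if at least one block $j$ is a cache hit, then since all $\bar g_i^{(k)} \ge 0$ the single hit term already gives $\sum_i \bar g_i^{(k)} \ge \cachegap_j^{(k)} \ge \tfrac{\cacheN}{n}g^{(k_0)} \ge \tilde{\cacheN}h_k$, where the last inequality uses $g^{(k_0)} \ge h_k$ from the previous paragraph; if instead every block is a miss, then $\bar g_i^{(k)} = g_i(\dualvarv^{(k)})$ and $\sum_i \bar g_i^{(k)} = g(\dualvarv^{(k)}) \ge h_k \ge \tilde{\cacheN}h_k$ since $\tilde{\cacheN}\le 1$. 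Substituting yields the conditional recursion $\E[h_{k+1}\mid\dualvarv^{(k)}] \le (1-\tfrac{\tilde{\cacheN}\stepsize}{n})h_k + \tfrac{\stepsize^2}{2n}\CfTotal$.

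Finally I would take the full expectation and prove the claimed rate by induction on $k$ with the schedule $\stepsize = \tfrac{2n}{\tilde{\cacheN}k+2n}$, which lies in $(0,1]$ because $\tilde{\cacheN}\le 1$ and $k\ge 0$. Writing $\tilde C := \tfrac{1}{\tilde{\cacheN}}\CfTotal + h_0$ and $s := \tilde{\cacheN}k + 2n$, the base case $k=0$ holds since $h_0 \le \tilde C$. For the inductive step the recursion gives $\E[h_{k+1}] \le \tfrac{2n\tilde C}{s}\big(1-\tfrac{\tilde{\cacheN}}{s}\big)$ after using $2\tilde{\cacheN}\tilde C - \CfTotal \ge \tilde{\cacheN}\tilde C$ (a consequence of $h_0\ge 0$), and then $(s-\tilde{\cacheN})(s+\tilde{\cacheN}) \le s^2$ closes the induction to $\tfrac{2n\tilde C}{s+\tilde{\cacheN}} = \tfrac{2n\tilde C}{\tilde{\cacheN}(k+1)+2n}$. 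Apart from the gap lower bound, the only points needing care are these purely algebraic inequalities and the admissibility of the step-size schedule; all the structural novelty relative to Theorem~\ref{thm:convergence_FW_product} is concentrated in the estimate $\sum_i\bar g_i^{(k)}\ge\tilde{\cacheN}h_k$, which is precisely what the threshold term $\tfrac{\cacheN}{n}g^{(k_0)}$ was engineered to guarantee.
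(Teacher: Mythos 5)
Your proposal is correct and follows essentially the same route as the paper's proof: a block descent lemma applied to whichever corner (cache or max-oracle) is actually used, a hit/miss case split in which the cache-hit threshold $\frac{\cacheN}{n}g^{(k_0)}$, monotonicity of the suboptimality, and the bound $g \geq h$ combine to give $\sum_i \bar g_i^{(k)} \geq \tilde{\cacheN}\,h_k$, followed by the same induction with step size $\stepsize_k = \frac{2n}{\tilde{\cacheN}k+2n}$ and constant $C = \frac{1}{\tilde{\cacheN}}\CfTotal + h_0$. The only cosmetic difference is that the paper substitutes the threshold lower bound into the hit terms before summing (framing the result as a multiplicative-approximate-oracle analysis in the style of Lacoste-Julien et al.), while you keep the actual step gaps and lower-bound the single hit term afterward; the inequalities are identical.
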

Note that the convergence rate of Theorem~\ref{thm:cacheTheorem} differs from the original rate of  BCFW (Theorem~\ref{thm:convergence_FW_product}) by the  constant~$\tilde{\cacheN}$.
If~$\tilde{\cacheN}$ equals one the rate is the same, but the criterion effectively prohibits cache hits.
If~$\tilde{\cacheN} < 1$ then the convergence is slower, meaning that the method with cache needs more iterations to converge, but the oracles calls might be cheaper because of the cache hits.

\begin{figure}
    \centering
    \begin{subfigure}[b]{0.22\textwidth}
        \includegraphics[width=0.9\textwidth]{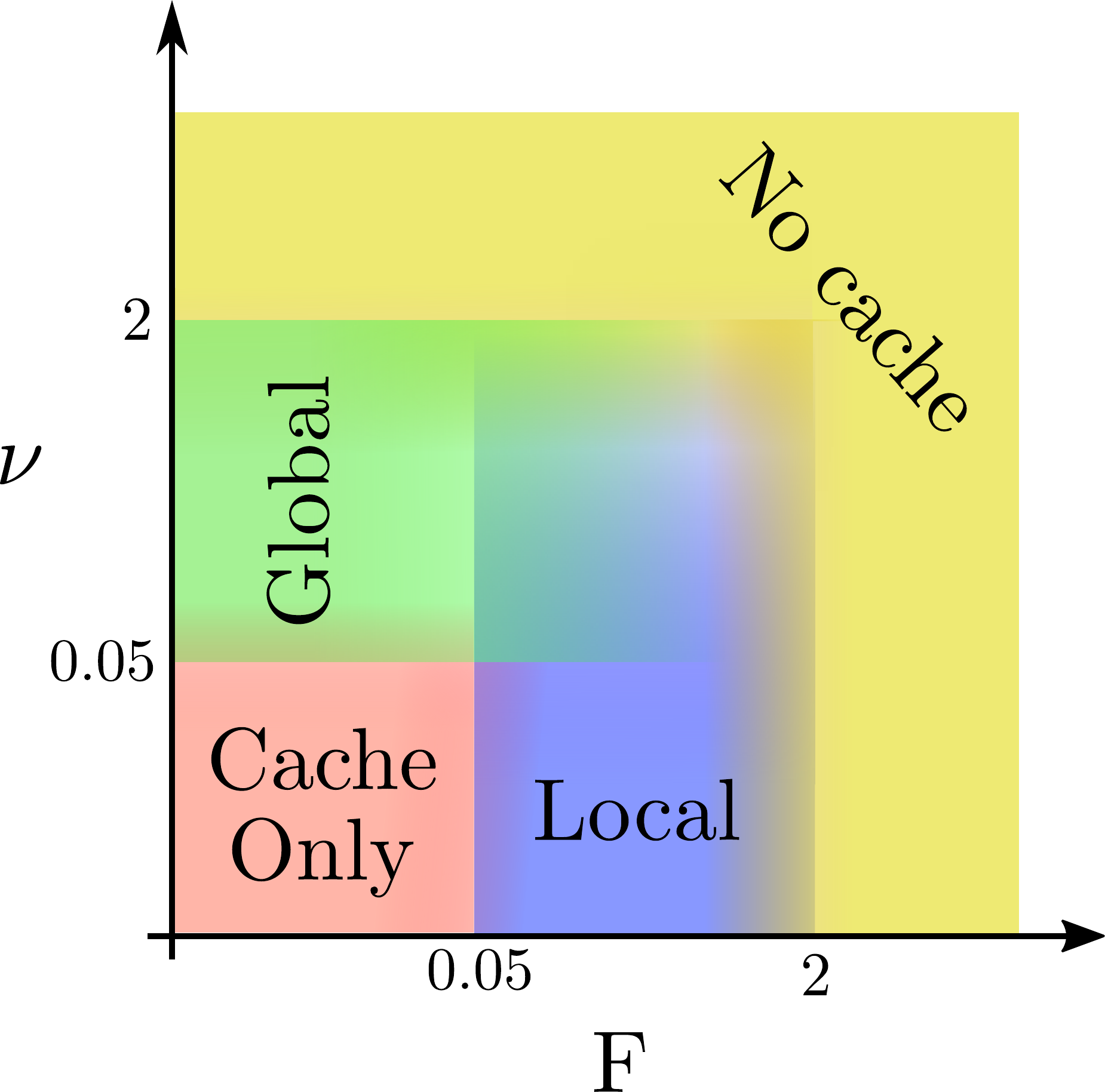}
        \caption{Regimes of the cache\label{fig:caching_a}\vspace{-2mm}}
    \end{subfigure}
    ~
    \begin{subfigure}[b]{0.22\textwidth}
        \includegraphics[width=0.9\textwidth]{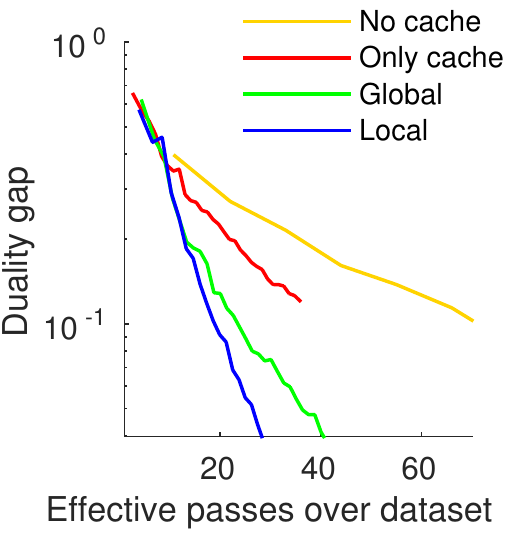}
        \caption{Convergence plots\label{fig:caching_b}\vspace{-2mm}}
    \end{subfigure}
    \caption{ Plot~(a) illustrates different regimes induced by the cache parameters~$\cacheF$ and~$\cacheN$. Plot~(b) shows the evolution of the duality gap within BCFW with gap sampling and with cache parameters in different regimes. \label{fig:caching} \vspace{-3mm}}
\end{figure}

\paragraph{Effect of $F$ and $\nu$.}
The parameter~$\nu$ controls the global component and acts as a safety parameter to ensure convergence (Theorem~\ref{thm:cacheTheorem}).
The parameter~$F$ controls, instead, the local (block-dependent) component of the criterion.
Figure~\ref{fig:caching} illustrates the effect of the parameters on OCR dataset~\citep{Taskar2003} and motivates their choice.
At one extreme, if either $F$ or $\nu$ are too large the cache is almost never hit.
At another extreme, if both values are small the cache is hit almost always, thus the method almost stops calling the oracle and does not converge.
Between the two extremes, one of the components usually dominates.
We observe empirically that the regime with the local component dominating leads to faster convergence.
Our experiments show that the method is not very sensitive to the choice of the parameters, so, in what follows, we use values $F=0.25$ and $\nu=0.01$.

\vspace{-1mm}
\paragraph{Related work.}
In the context of SSVM, the idea of caching was successfully applied to the cutting plane methods by~\citet{Joachims:2009ex}, and, recently, to BCFW by~\citet{shah2015caching}. 
In contrast to~\citet{shah2015caching}, our method chooses whether to call the oracle or to use the cache in an adaptive way by looking at the gap estimates of the current blocks.
In the extreme case, when just one block is hard and requires computation and all the rest are easy, our method would be able to call an oracle on the hard block and to use the cache everywhere else. This will result to $n$ times less oracle calls, compared to their strategy.

\section{Regularization path \label{sec:reg_path}}
According to the definition of~\citet{EfronRegPathLasso_04}, a regularization path is a set of minimizers of a regularized objective in the form of~\eqref{eq:svmstruct_nslack_primal_nonsmooth} for all possible values of the regularization parameter~$\regularizerweight$.
Similarly to LASSO and binary SVM, the general result of~\citet[Proposition~1]{Rosset07_regPathTheory} is applicable to the case of SSVM and implies that the exact regularization path is piecewise linear in~$1/\regularizerweight$.
However, recovering the exact path is, up to our knowledge, intractable in the case of SSVM.
In this paper, we construct an $\epsilon$-approximate regularization path, meaning that, for each feasible~$\regularizerweight$, we have a corresponding primal variables~$\weightv$ which is $\epsilon$-approximate, i.e., the suboptimality $f_{\regularizerweight}(\weightv) - f_{\regularizerweight}^*$ does not exceed~$\epsilon$. 
We use a piecewise \emph{constant} approximation except for the first piece which is linear. 
The approximation is represented by a set of breakpoints~$\{\regularizerweight_j\}_{j=0}^{J+1}$, $\regularizerweight_0 \!=\! +\infty$, $\regularizerweight_{J+1} \!=\! 0$, $\regularizerweight_{j+1} \!\leq\! \regularizerweight_j$, and a set of parameter vectors $\{\weightv^{j}\}_{j=1}^{J}$ with the following properties: for each $\regularizerweight \in [\regularizerweight_{j+1}, \regularizerweight_{j}]$, $j \geq 1$, the vector~$\weightv^{j}$ is $\epsilon$-approximate; for $\regularizerweight \geq \regularizerweight_{1}$, the vector $\frac{\regularizerweight_1}{\regularizerweight}\weightv^{1}$ is $\epsilon$-approximate.

Our algorithm consists of two steps:
(1) at the initialization step, we find the maximal finite breakpoint~$\regularizerweight^\infty := \regularizerweight_{1}$ and the vector $\weightv^\infty:=\weightv^{1}$;
(2) at the induction step, we compute a value~$\regularizerweight_{j+1}$ and a vector $\weightv^{j+1}$ given quantities~$\regularizerweight_{j}$ and $\weightv^{j}$. 
At both steps of our algorithm, we explicitly maintain dual variables $\dualvarv$ that correspond to $\weightv$.
Alg.~\ref{alg:rpAlgorithmSSVM} in App.~\ref{app:algorithms} presents the complete procedure.

\paragraph{Initialization of the regularization path.}
First, note that, for~$\regularizerweight = \infty$, the KKT conditions for \eqref{eq:svmstruct_nslack_primal_nonsmooth} and~\eqref{eq:svmstruct_nslack_dual} imply that $\weightv=\0$ is a solution of the problem~\eqref{eq:svmstruct_nslack_primal_nonsmooth}.
In what follows, we provide a finite value for~$\regularizerweight^\infty$ and explicitly construct~$\dualvarv^\infty$ and $\weightv^\infty$ such that $\frac{\lambda^\infty}{\lambda}\weightv^\infty$ is $\epsilon$-approximate for~$\regularizerweight \geq \regularizerweight^\infty$.

Let 
$
\tilde{\outputvarv}_i = \argmax_{\outputvarv \in \mathcal{Y}_i} H_i(\outputvarv; \0) = \argmax_{\outputvarv \in \mathcal{Y}_i} L_i(\outputvarv)
$
be the output of the max oracle for~$\weightv = \0$.
First, we construct a dual point~$\dualvarv^\infty \in \domain$ by setting $\dualvar^\infty_i(\tilde{\outputvarv}_i) = 1$. For any value of~$\regularizerweight^\infty$, the corresponding weight vector can be easily computed:
$
\weightv^\infty =  \frac{1}{\regularizerweight^\infty n} \sum_{i=1}^n \featuremapdiffv_i (\tilde{\outputvarv_i}).
$
Identity~\eqref{eq:duality_gap_withH} provides the duality gap:\\[-5mm]
\begin{align*}
g(\dualvarv^\infty, \regularizerweight^\infty, \weightv^\infty) = \frac1n \sum_{i=1}^{n} \Big( &\max_{\outputvarv \in \mathcal{Y}_i } \big( L_i(\outputvarv) - \langle\weightv^\infty, \featuremapdiffv(\outputvarv) \rangle \big) \nonumber \\
&- L_i(\tilde{\outputvarv_i}) + \langle(\weightv^\infty, \featuremapdiffv(\tilde{\outputvarv_i})\rangle \Big).
\end{align*}\\[-5mm]
The inequality $\max_x( f(x) + g(x) ) \leq \max_x f(x) + \max_x g(x)$ and the equality $ \max_{\outputvarv \in \mathcal{Y}_i } L_i(\outputvarv) =L_i(\tilde{\outputvarv_i})$ bound the gap:\\[-5mm]
\begin{multline}
g(\dualvarv^\infty, \regularizerweight^\infty, \weightv^\infty) \leq  \frac1n \sum_i \Big( \max_{\outputvarv \in \mathcal{Y}_i } (- \langle\weightv^\infty, \featuremapdiffv(\outputvarv)\rangle ) +\\  \langle\weightv^\infty, \featuremapdiffv(\tilde{\outputvarv_i})\rangle \Big) \nonumber 
= \frac{1}{n \regularizerweight^\infty} \sum_{i=1}^n \theta_i + \frac{1}{\regularizerweight^\infty} \norm{\tilde{\featuremapdiffv}}^2
\end{multline}\\[-4mm]
where the quantities $\theta_i = \max_{\outputvarv \in \outputdomain_i}  \big( - \langle\tilde{\featuremapdiffv}, \featuremapdiffv(\outputvarv)\rangle \big)$ and $\tilde{\featuremapdiffv} := \frac1n \sum_i \featuremapdiffv_i (\tilde{\outputvarv_i})$ are easily computable. 
To ensure that $g(\dualvarv^\infty, \regularizerweight, \frac{\regularizerweight^\infty}{\regularizerweight}\weightv^\infty) \leq\epsilon$ for $\regularizerweight \geq \regularizerweight^\infty$, we can now set\\[-3mm]
\begin{equation*}
\regularizerweight^\infty := \frac{1}{\epsilon} \bigg(\|\tilde{\featuremapdiffv}\|^2 + \frac1n \sum_{i=1}^n \theta_i \bigg).
\end{equation*}
 
\vspace{-3mm}\paragraph{Induction step.}

We utilize the intuition that the expression~\eqref{eq:duality_gap_withH} provides control on the Frank-Wolfe gap for different values of~$\regularizerweight$ if the primal variables~$\weightv$ and, consequently, the results of the max oracles stay unchanged. 
Proposition~\ref{thm:reg_path} formalizes this intuition.
\begin{proposition}
    \label{thm:reg_path}
    Assume that $L_i(\outputvarv_i) = 0$, $i=1,\dots,n$, i.e., the loss on the ground truth equals zero. Let $\regPathStep\!:=\!\frac{\regularizerweight^{\text{new}}}{\regularizerweight^{\text{old}}}\!<\!1$.
    Then, setting $\dualvar_i(\outputvarv) := \regPathStep \dualvar_i^{\text{old}}(\outputvarv)$, $\outputvarv \neq \outputvarv_i$,
    and $\dualvar_i(\outputvarv_i):=1-\sum_{\outputvarv \neq \outputvarv_i} \dualvar_i(\outputvarv)$, we then have $\weightv^{\text{new}} = \weightv^{\text{old}}$ and
    \begin{equation}
    g(\dualvarv,\: \regularizerweight^{\text{new}}) = g(\dualvarv^{\text{old}}\!\!\!,\: \regularizerweight^{\text{old}}) + (1 - \regPathStep) \Delta(\dualvarv^{\text{old}}\!\!\!,\: \regularizerweight^{\text{old}})
    \label{eq:ssvm_rp_gapnew}
    \end{equation}
    where\\[-4.5mm]
    \begin{equation}
    \notag
    \Delta(\dualvarv^{\text{old}}\!\!\!,\: \regularizerweight^{\text{old}}) := \frac1n \sum_{i=1}^n\sum_{\outputvarv \in \outputdomain_i} \dualvar_i^{\text{old}}(\outputvarv) H_i(\outputvarv; \weightv^{\text{old}}).
    \end{equation}
\end{proposition}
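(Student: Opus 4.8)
The plan is to verify the two assertions separately, both of which hinge on the special status of the ground-truth labeling $\outputvarv_i$: by definition $\featuremapdiffv_i(\outputvarv_i) = \featuremapv(\inputvarv_i,\outputvarv_i) - \featuremapv(\inputvarv_i,\outputvarv_i) = \0$, and under the zero-loss hypothesis $H_i(\outputvarv_i;\weightv) = L_i(\outputvarv_i) - \langle\weightv,\featuremapdiffv_i(\outputvarv_i)\rangle = 0$ for every~$\weightv$. These two vanishing facts are the only nontrivial ingredients; everything else is bookkeeping.

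First I would establish $\weightv^{\text{new}} = \weightv^{\text{old}}$ from the primal--dual map $\weightv(\dualvarv) = \frac{1}{\regularizerweight n}\sum_{i,\outputvarv}\dualvar_i(\outputvarv)\featuremapdiffv_i(\outputvarv)$. Because the $\outputvarv=\outputvarv_i$ term contributes nothing (as $\featuremapdiffv_i(\outputvarv_i)=\0$), the weight vector only sees the rescaled entries $\dualvar_i(\outputvarv)=\regPathStep\,\dualvar_i^{\text{old}}(\outputvarv)$ for $\outputvarv\neq\outputvarv_i$. This extracts a factor $\regPathStep$ from the numerator, which cancels exactly against the replacement of $\regularizerweight^{\text{old}}$ by $\regularizerweight^{\text{new}}=\regPathStep\,\regularizerweight^{\text{old}}$ in the denominator, so $\weightv^{\text{new}}=\weightv^{\text{old}}$. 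In particular the functions $H_i(\cdot;\weightv)$ are unchanged, so in the block-gap formula~\eqref{eq:duality_gap_withH} the max term $\max_{\outputvarv}H_i(\outputvarv;\weightv)$ is identical at the old and new points.

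Second, I would subtract the two gaps using~\eqref{eq:duality_gap_withH}. Since the max terms cancel, $g(\dualvarv,\regularizerweight^{\text{new}}) - g(\dualvarv^{\text{old}},\regularizerweight^{\text{old}}) = \frac1n\sum_i\sum_{\outputvarv}\big(\dualvar_i^{\text{old}}(\outputvarv)-\dualvar_i(\outputvarv)\big)H_i(\outputvarv;\weightv^{\text{old}})$. I would then evaluate the coefficient casewise: for $\outputvarv\neq\outputvarv_i$ it equals $(1-\regPathStep)\dualvar_i^{\text{old}}(\outputvarv)$, while the lone ground-truth term carries the factor $H_i(\outputvarv_i;\weightv^{\text{old}})=0$ and therefore drops out irrespective of how much mass was redistributed onto $\outputvarv_i$. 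The remaining sum over $\outputvarv\neq\outputvarv_i$ may be re-extended over all of $\outputdomain_i$ at no cost (again because that extra term is multiplied by $H_i(\outputvarv_i;\weightv^{\text{old}})=0$), which reassembles precisely $(1-\regPathStep)\,\Delta(\dualvarv^{\text{old}},\regularizerweight^{\text{old}})$ and yields~\eqref{eq:ssvm_rp_gapnew}.

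There is no genuine obstacle here; the proof is a short direct computation. The only points needing care are confirming that the two ground-truth contributions genuinely vanish — via $\featuremapdiffv_i(\outputvarv_i)=\0$ for the weight-invariance claim and via $H_i(\outputvarv_i;\weightv)=0$ for the gap identity, the latter being exactly where the assumption $L_i(\outputvarv_i)=0$ is used — and checking that the reweighted $\dualvarv$ is still feasible. Feasibility is immediate: all entries are nonnegative, and $\dualvar_i(\outputvarv_i)=1-\regPathStep\sum_{\outputvarv\neq\outputvarv_i}\dualvar_i^{\text{old}}(\outputvarv)\geq0$ since $\regPathStep<1$ and $\sum_{\outputvarv\neq\outputvarv_i}\dualvar_i^{\text{old}}(\outputvarv)\leq1$, so $\dualvarv\in\domain$.
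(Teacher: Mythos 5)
Your proposal is correct and follows essentially the same route as the paper's proof: weight invariance via $\featuremapdiffv_i(\outputvarv_i)=\0$ together with the cancellation of the $\regPathStep$ rescaling against the change in $\regularizerweight$ (the paper phrases this as $A^{\text{new}}=\frac{1}{\regPathStep}A^{\text{old}}$), then $H_i(\outputvarv_i;\weightv^{\text{old}})=0$ from the zero-loss assumption, and finally the gap identity by differencing~\eqref{eq:duality_gap_withH}. You simply write out explicitly the computation that the paper states "directly follows," and your added feasibility check of the rescaled $\dualvarv$ is a harmless (and welcome) extra.
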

\begin{proof}
    \vspace{-2mm}Consider the problem~\eqref{eq:svmstruct_nslack_dual} for both~$\regularizerweight^{\text{new}}$ and~$\regularizerweight^{\text{old}}$\!\!\!.\:
    Since~$\featuremapdiffv_i(\outputvarv_i)=\bm{0}$ and $A^{\text{new}}=\frac{1}{\regPathStep}A^{\text{old}}$, we have that $\weightv^{\text{old}} = A^{\text{old}} \dualvarv^{\text{old}} = A^{\text{new}} \dualvarv = \weightv^{\text{new}}$.
    The assumption~$L_i(\outputvarv_i) = 0$ implies equalities~$H_i(\outputvarv_i; \weightv^{\text{old}})=0$.
    Under these conditions, the equation~\eqref{eq:ssvm_rp_gapnew} directly follows from the computation of $g(\dualvarv,\: \regularizerweight^{\text{new}})-g(\dualvarv^{\text{old}}\!\!\!,\: \regularizerweight^{\text{old}})$ and the equality~\eqref{eq:duality_gap_withH}.
\end{proof}

Assume that for the regularization parameter~$\regularizerweight^{\text{old}}$ the primal-dual pair $\dualvarv^{\text{old}}$\!, $\weightv^{\text{old}}$ is $\factorRegPath\epsilon$-approximate, $0<\factorRegPath<1$, i.e., $g(\dualvarv^{\text{old}}\!\!,\: \regularizerweight^{\text{old}}) \leq \factorRegPath\epsilon$.
Proposition~\ref{thm:reg_path} ensures that $g(\dualvarv,\: \regularizerweight^{\text{new}}) \leq \epsilon$ whenever\\[-0.15cm]
\begin{equation}\label{eq:reg_path_weight}
\regPathStep 
= 
1 - \frac{\epsilon - g(\dualvarv^{\text{old}}\!,\: \regularizerweight^{\text{old}})}{\Delta(\dualvarv^{\text{old}}\!,\: \regularizerweight^{\text{old}})}
\leq 1 - \frac{\epsilon(1-\factorRegPath)}{\Delta(\dualvarv^{\text{old}}\!,\: \regularizerweight^{\text{old}})}.
\end{equation}
Having~$\factorRegPath<1$ ensures that $\regPathStep < 1$, i.e., we get a new break point $\regularizerweight^{\text{new}} < \regularizerweight^{\text{old}}$.
If the equation~\eqref{eq:reg_path_weight} results in $\regPathStep\leq 0$ then we reach the end of the regularization path, i.e., $\weightv^{\text{old}}$ is $\epsilon$-approximate for all $0 \leq \regularizerweight < \regularizerweight^{\text{old}}$.

To be able to iterate the induction step, we apply one of the algorithms for the minimization of the SSVM objective for~$\regularizerweight^{\text{new}}$ to obtain~$\factorRegPath\epsilon$-approximate pair $\dualvarv^{\text{new}}$, $\weightv^{\text{new}}$\!. Initializing from~$\dualvarv$, $\weightv^{\text{old}}$ provides fast convergence in practice.

\paragraph{Related work.} Due to space constraints, see App.~\ref{app:relatedWorkRegPath}.

\begin{figure*}[t]
    \centering
    \begin{subfigure}[b]{\textwidth}
        \centering
        \includegraphics[width=0.85\textwidth]{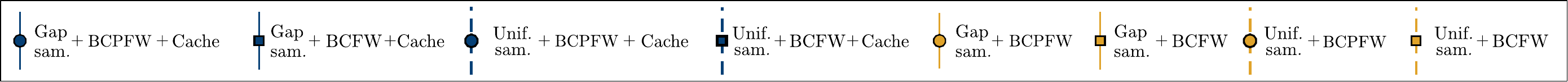}
    \end{subfigure}
    \begin{subfigure}[b]{0.29\textwidth}
        \includegraphics[width=\textwidth]{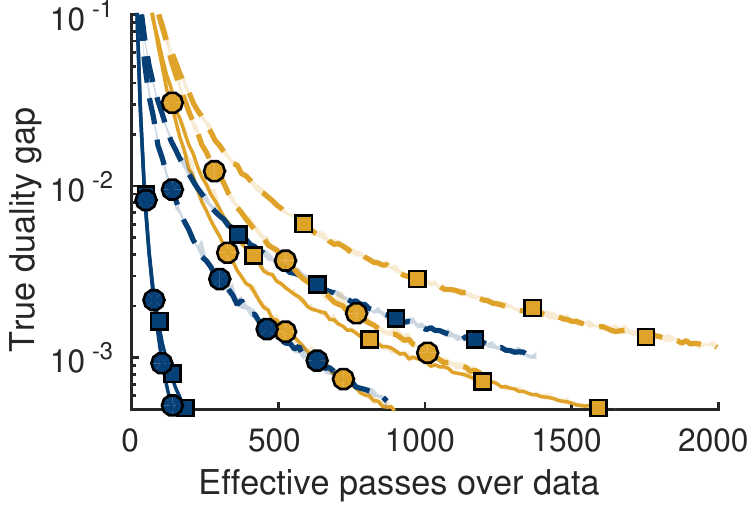}
        \includegraphics[width=\textwidth]{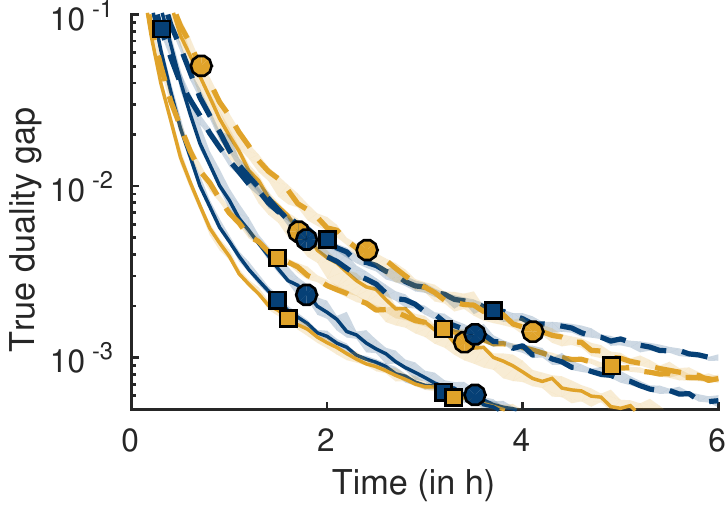}
        \caption{OCR-large, $\regularizerweight=0.001$\label{fig:exp1_main_a}\vspace{-2mm}}
    \end{subfigure}
    \qquad
    \begin{subfigure}[b]{0.29\textwidth}
        \includegraphics[width=\textwidth]{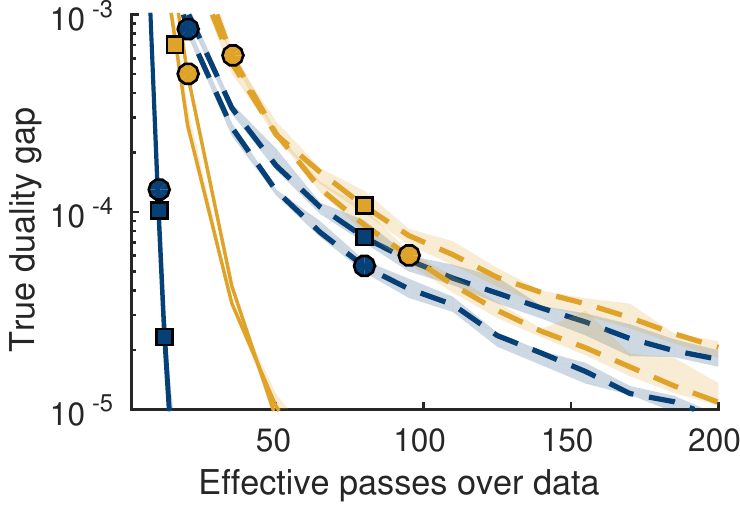}
        \includegraphics[width=\textwidth]{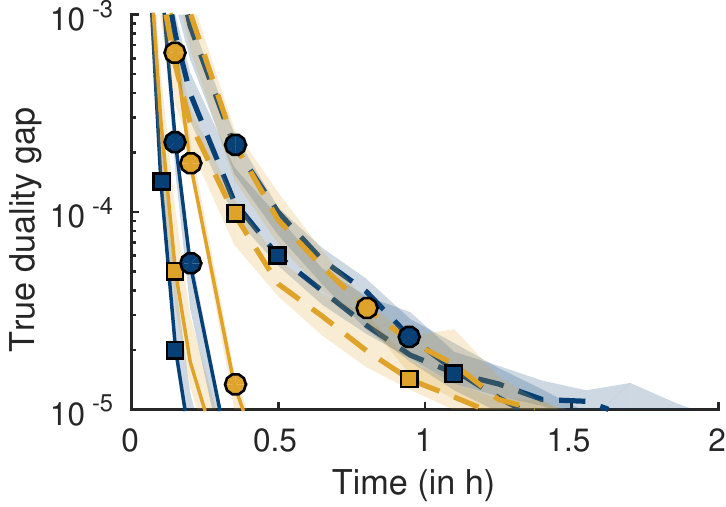}
        \caption{HorseSeg-medium, $\regularizerweight=10$\label{fig:exp1_main_b}\vspace{-2mm}}
    \end{subfigure}
    \qquad
    \begin{subfigure}[b]{0.29\textwidth}
        \includegraphics[width=\textwidth]{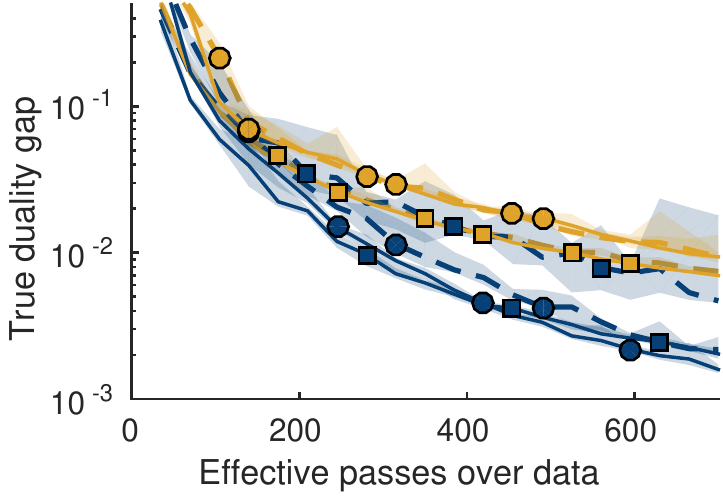}
        \includegraphics[width=\textwidth]{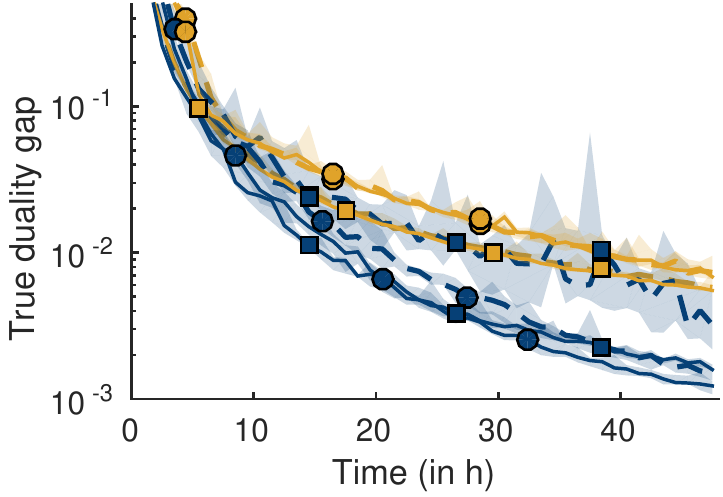}
        \caption{LSP-small, $\regularizerweight=100$\label{fig:exp1_main_c}\vspace{-2mm}}
    \end{subfigure}
    \caption{Summary of the results of Section~\ref{subsec:exp1}: the duality gap against the number of effective passes over data (top) and time (bottom).
        \label{fig:exp1_main} \vspace{-4.5mm}  
    }
\end{figure*}

\section{Experiments}\label{sec:experiments}
The experimental evaluation consists of two parts:
Section~\ref{subsec:exp1} compares the different algorithms presented in Section~\ref{sec:contributions};
Section~\ref{subsec:reg_path} evaluates our approach on the regularization path estimation. %

\textbf{Datasets.}
We evaluate our methods on four datasets for different structured prediction tasks: OCR~\citep{Taskar2003} for handwritten character recognition, CoNLL~\citep{Sang2000} for text chunking, HorseSeg~\citep{kolesnikov2014closed} for binary image segmentation and LSP~\citep{Johnson10} for pose estimation.
The models for OCR and CoNLL were provided by~\citet{lacosteJulien13bcfw}. We build our model based on the one by~\citet{kolesnikov2014closed} for HorseSeg, and the one by~\citet{Chen_NIPS14} for LSP.
For OCR and CoNLL, the max oracle consists of the Viterbi algorithm~\citep{Viterbi67}; for HorseSeg~-- in graph cut~\citep{boykov2004graphcut}, for LSP~-- in belief propagation on a tree with messages passed by a generalized distance transform~\citep{felzenszwalb2005distTrans}.
Note that the oracles of HorseSeg and LSP require positivity constraints on a subset of the weights in order to be tractable. 
The BCFW algorithm with positivity constraints is derived in App.~\ref{app:pos_const}.
We provide a detailed description of the datasets in App.~\ref{app:dataset_table} with a summary in Table~\ref{tab:dataset}.

The problems included in our experimental study vary in the number of objects~$n$ (from $100$ to $25,\!000$), in the number of features~$d$ (from $10^2$ to $10^6$), and in the computational cost of the max oracle (from $10^{-4}$ to $2$ seconds).

\subsection{Comparing the variants of  BCFW \label{subsec:exp1}}

In this section, we evaluate the three modifications of BCFW presented in Section~\ref{sec:contributions}.
We compare 8 methods obtained by all the combinations of three binary dimensions: 
gap-based vs.\ uniform sampling of objects, BCFW vs.\ BCPFW, caching oracle calls vs.\ no caching.

We report the results of each method on 6 datasets (including 3 sizes of HorseSeg) for three values of the regularization parameter~$\regularizerweight$: the value leading to the best test performance, a smaller and a larger value. For each setup, we report the duality gap against both number of oracle calls and elapsed time.
We run each method~$5$ times with different random seeds influencing the order of sampled objects and report the median (bold line), minimum and maximum values (shaded region).
We summarize the results in Figure~\ref{fig:exp1_main} and report the rest in App.~\ref{app:exp1}.

First, we observe that, aligned with our theoretical results, gap sampling always leads to faster convergence (both in terms of time and the number of effective passes).
The effect is stronger when $n$ is large (Figure~\ref{fig:exp1_main_b}).
Second, caching always helps in terms of number of effective passes, but an overhead caused by maintaining the cache is significant when the max oracle is fast (Figure~\ref{fig:exp1_main_a}).
In the case of expensive oracle (Figure~\ref{fig:exp1_main_c}), the cache overhead is negligible.
Third, the pairwise steps (BCPFW) lead to an improvement to get smaller values of duality gaps.
The effect is stronger when the problem is more strongly convex, i.e., $\regularizerweight$ is bigger. However, maintaining the active sets results in computational overheads, which sometimes are significant.
Note that the overhead of cache and active sets are shared, because they are maintained in the same data structure.
Using a cache also greatly limits the memory requirements of BCPFW, because, when the cache is hit, the active set is guaranteed not to grow.

\textbf{Recommendation.} For off-the-shelf usage, we recommend to use the BCPFW + gap sampling + cache method when oracle calls are expensive, and the BCFW + gap sampling method when oracle calls are cheap.

\vspace{-2mm}
\subsection{Regularization path}\label{subsec:reg_path}
In this section, we evaluate our regularization path algorithm presented in Section~\ref{sec:reg_path}.
\begin{figure}
\vspace{-1mm}
     \begin{tabular}{c@{$\:\:$}c}
         \includegraphics[width=0.23\textwidth]{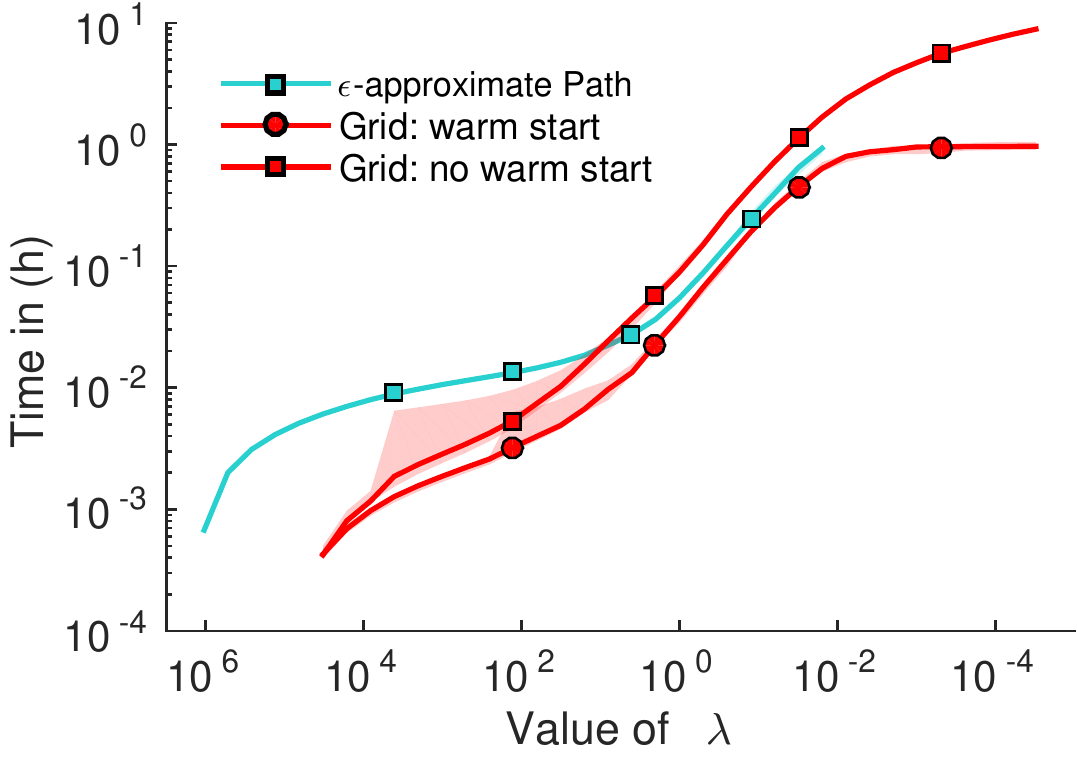} &
         \includegraphics[width=0.23\textwidth]{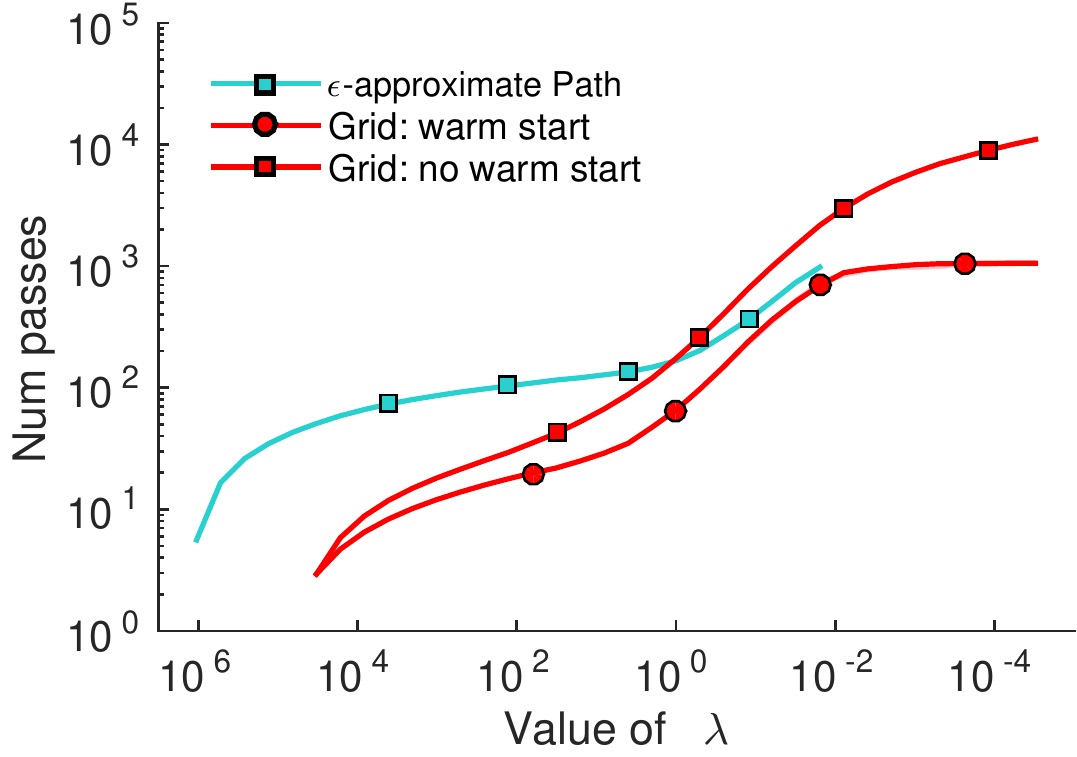}\\[-0.3cm]
        \end{tabular}
        \caption{
            \label{fig:exp2_horseSmall} On HorseSeg-small, we compare $\epsilon$-approximate regularization path against grid search with/without warm start. 
            We report the cumulative running time (left) and the cumulative effective number of passes (right) required to get to each value of the regularization parameter~$\regularizerweight$.
            \vspace{-2mm} 
        }
\end{figure}
We compare an $\epsilon$-approximate regularization path with $\epsilon=0.1$ against the standard grid search approach with/without warm start (we use a grid of 31 values of $\regularizerweight$: $2^{15},2^{14},\dots,2^{-15}$).
In Figure~\ref{fig:exp2_horseSmall}, we report the cumulative elapsed time and cumulative number of effective passes over the data required by the three methods to reach a certain value of~$\regularizerweight$ on the HorseSeg-small dataset (starting from the initialization value for the path method and the maximum values of the grid for the grid search methods).
The methods and additional experiments are detailed in App.~\ref{app:exp2}.

\paragraph{Interpretation.} First, we observe that warm start speeds up the grid search.
Second, the cost of computing the full regularization path is comparable with the cost of grid search.
However, the regularization path algorithm finds solutions for all values of $\regularizerweight$ without the need to predefine the grid.

\section*{Acknowledgments} 
This work was partially supported by the MSR-Inria Joint Center and a Google Research Award.
\bibliography{icml2016_bcfw}
\bibliographystyle{icml2016}

\clearpage
\appendix

\twocolumn[
\icmltitle{Supplementary Material\\
Minding the Gaps for Block Frank-Wolfe optimization of structured SVMs}
]

\paragraph{Outline} 
\begin{description}[itemsep=0mm,topsep=0cm,leftmargin=0cm,font=\normalfont]
\renewcommand\labelitemi{--}
\item[Appendix~\ref{app:relatedWorkRegPath}:] related work on regularization paths.
\item[Appendix~\ref{app:descent_lemma}:] review of the block descent lemma for BCFW.
\item[Appendix~\ref{app:toy_example}:] toy example showing that our adaptive gap sampling technique can be~$n$ times faster than alternatives.
\item[Appendix~\ref{app:algorithms}:] details of the proposed algorithms.
\item[Appendix~\ref{app:proof_gap_sampling}:] the proof of Theorem~\ref{thm:convTheoremGapSampling} on convergence of BCFW with gap sampling.
\item[Appendix~\ref{app:caching_theorem}:] the proof of Theorem~\ref{thm:cacheTheorem} on convergence of BCFW with caching.
\item[Appendix~\ref{app:BCPFWconvergence}:] the proof of Theorem~\ref{thm:BCPFW} on convergence of BCPFW and BCAFW (with conditions).
\item[Appendix~\ref{app:pos_const}:] generalization of BCFW to the case of SSVM with box constraints on the weights.
\item[Appendix~\ref{app:dataset_table}:] the detailed description of the datasets used in our experiments.
\item[Appendix~\ref{app:exp1}:] the full experimental comparison of different variants of BCFW.
\item[Appendix~\ref{app:exp2}:] the full experimental study of our regularization path method.
\end{description}

\section{Related work for regularization path} \label{app:relatedWorkRegPath}

\citet{EfronRegPathLasso_04}, in their seminal paper, introduced the notion of regularization path and showed that the regularization path of  LASSO~\citesup{TibshiraniLasso_96} is piecewise linear.
\citetsup{HastieRegPathSVM_04} proposed the path following method to compute the exact regularization path for the binary SVM with L2-regularization.
Exact path following algorithms suffer from numerical instabilities as they repeatedly invert a potentially badly-conditioned matrix~\citepsup{Allgower931_pathFollowing}.
In addition, \citetsup{Gartner12_regPathComplexity} show that, in the worst case, the regularization path for binary SVM contains an exponential number of break points.
Although the exact path following methods for SVM are still developed~\citepsup{Sentelle15_regPathSVM}, approximate methods might be more suited for practical use cases.
\citetsup{Karasuyama11_regPathSvm} proposed a method providing a trade-off between accuracy of the path and its computational cost.
Recently, \citetsup{Giesen12_regPathTheory} have developed a general framework to construct a piecewise-constant $\epsilon$-approximate path with at most~$\bigO(1/\sqrt{\epsilon})$ break points and applied it, e.g., to binary SVM. 

In contrast to binary SVM, regularization paths for multi-class SVMs are more complex and less studied.
\citetsup{Lee06_regPathMultiSVM} proposed a path following method for multi-class SVM in the MSVM formulation of~\citetsup{Lee04_MSVM}. \citetsup{Wang06_regPathMultiSVM} analyzed the regularization path for the L1 version of MSVM.
Finally, \citetsup{Jun-Tao10_regPathMultiSVM} constructed the regularization path for the multi-class SVM with huberized loss.
We are not aware of any work computing the regularization path for SSVM or, for its predecessor multi-class SVM in the formulation of \citetsup{Crammer01_multiClassSVM}.

The induction step of our method is similar to Alg.~1 of~\citepsup{Giesen12_regPathTheory} applied to the case of binary SVM.
They also construct a piecewise linear $\epsilon$-approximate path by alternating the SVM solver and a procedure to identify the region where the output of the solver is accurate enough. 

In contrast to our method, \citetsup{Giesen12_regPathTheory} construct the path only for the predefined segment of the values of~$\regularizerweight$. We do not require such a segment as input and are able to find the largest and smallest value automatically.
Another difference to~\citepsup{Giesen12_regPathTheory} consists in using the $\regularizerweight$-formulation of SVM instead of the $C$-formulation. In the two formulations, the accuracy parameter~$\epsilon$ is scaled differently for the different values of the regularization parameters. The $\regularizerweight$-formulation requests higher accuracy for the small values of~$\regularizerweight$ and, thus, creates more break points in that region.

\section{Block descent lemma for BCFW \label{app:descent_lemma}}

\begin{definition}[Block curvature constant]
    \label{def:curvature_const}
    Consider a convex function $f$ defined on a separable domain~$\domain=\domain^{(1)} \times\dots\times \domain^{(n)}$.
    The curvature constant~$\Cf^{(i)}$ of the function $f$ w.r.t.\ the individual block of coordinates $\domain^{(i)}$ is defined by
    \begin{align}
    \notag \Cf^{(i)} := \sup\; &\frac{2}{\stepsize^2}\left( f(\dualvarvtwo)-f(\dualvarv)-\langle \dualvarvtwo_{(i)}-\dualvarv_{(i)}, \nabla_{(i)} f(\dualvarv)\rangle \right) \\
    \notag \mbox{s.t.}\; & \dualvarv\in\domain,\,\sv_{(i)}\in \domain^{(i)}, \stepsize\in[0,1], \\
    \label{eqn:CfProduct} & \dualvarvtwo = \dualvarv+\stepsize(\sv_{[i]}-\dualvarv_{[i]}).
    \end{align}
\end{definition}
Here~$\sv_{[i]} \in \R^m$ and $\dualvarv_{[i]} \in \R^m$ are the zero-padded versions of~$\sv_{(i)}\in \domain^{(i)}$ and~$\dualvarv_{(i)}\in \domain^{(i)}$, respectively.
Note that, although $\sv_{[i]} \not\in \domain$ and $\dualvarv_{[i]} \not\in \domain$, we have that $\dualvarvtwo := \dualvarv+\stepsize(\sv_{[i]}-\dualvarv_{[i]}) \in \domain$.

In the case of SSVM, the curvature constant~$\Cf^{(i)}$ can be upper bounded (tightly in the worst case) with $\frac{4R_i^2}{\regularizerweight n^2}$ where
$R_i:= \max_{\outputvarv \in \outputdomain_i}\norm{\featuremapdiffv_i(\outputvarv)}_2$~\citep[Appendix~A]{lacosteJulien13bcfw}.\footnote{More generally, let $\|\cdot \|_i$ be some norm defined on $\domain^{(i)}$. Then suppose that $L_i$ is the Lipschitz-continuity constant with respect to this norm for $\nabla_{(i)} f(\dualvarv)$ when only $\dualvarv_{(i)}$ varies, i.e., $\| \nabla_{(i)} f(\dualvarv) - \nabla_{(i)} f(\dualvarv + \sv_{[i]} - \dualvarv_{[i]}) \|_i^* \leq L_i \| \sv_{(i)} -  \dualvarv_{(i)} \|_i$ for all $\dualvarv \in \domain$, $\sv_{(i)} \in \domain^{(i)}$, where $\|\cdot\|_i^*$ is the dual norm of $\|\cdot\|_i$. Then similarly to Lemma~7 in~\citetsup{Jaggi:2013wg}, we have $\Cf^{(i)} \leq L_i \big(\diam_{\|\cdot\|_i} \domain^{(i)} \big)^2$.
\label{foot:Cfi}}

For reference, we restate below the key descent lemma used for the proof of convergence of BCFW and its variants. We note in passing that this is an affine invariant analog of the standard descent lemmas that use the Lipschitz continuity of the gradient function to show progress during first order optimization algorithms.

\begin{lemma}[Block descent lemma]\label{lem:block_step_improvement}
    For any~$\dualvarv \in \domain$ and block~$i$, let $\sv_{(i)}\in \domain^{(i)}$ be the Frank-Wolfe corner selected by the max oracle of block~$i$ at~$\dualvarv$. Let~$\dualvarv_{LS}$ be obtained by the line search between $\dualvarv_{(i)} \in \domain^{(i)}$ and $\sv_{(i)}$, i.e.,  $f(\dualvarv_{LS}) = \min_{\gamma\in[0,1]} f(\dualvarv_\stepsize)$ where $\dualvarv_\stepsize := \dualvarv+\stepsize(\sv_{[i]}-\dualvarv_{[i]})$. Then, it holds that for each $\stepsize\in[0,1]$: \\[-2mm]
    \begin{equation}
    \label{eq:block_descent_line_search_lemma}
    f(\dualvarv_{LS})
    \le f(\dualvarv) - \stepsize g_i(\dualvarv) + \frac{\stepsize^2}{2} \Cf^{(i)}\\[-1mm]
    \end{equation}
    where~$\Cf^{(i)}$ is the curvature constant of the function $f$ over the factor $\domain^{(i)}$ and $g_i(\dualvarv)$ is the block gap at the point~$\dualvarv$ w.r.t. the block~$i$.
\end{lemma}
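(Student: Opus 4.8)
The plan is to derive the bound directly from the definition of the block curvature constant (Definition~\ref{def:curvature_const}), exploiting the fact that the Frank-Wolfe corner returned by the max oracle is exactly the maximizer appearing in the definition of the block gap $g_i(\dualvarv)$, and then to close the argument with the optimality of the line search.

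First I would fix a step $\stepsize\in[0,1]$ and consider the point $\dualvarv_\stepsize := \dualvarv+\stepsize(\sv_{[i]}-\dualvarv_{[i]})$, which belongs to $\domain$ because only block~$i$ is modified and $\dualvarv_{(i)}+\stepsize(\sv_{(i)}-\dualvarv_{(i)})\in\domain^{(i)}$ by convexity of $\domain^{(i)}$. Since the $i$-th block of $\dualvarv_\stepsize$ minus $\dualvarv_{(i)}$ equals $\stepsize(\sv_{(i)}-\dualvarv_{(i)})$, the triple $(\dualvarv,\sv_{(i)},\stepsize)$ is feasible in the supremum defining $\Cf^{(i)}$. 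Plugging $\dualvarvtwo=\dualvarv_\stepsize$ into~\eqref{eqn:CfProduct} and rearranging therefore yields the quadratic upper bound
\[
f(\dualvarv_\stepsize) \le f(\dualvarv) + \stepsize\,\langle \sv_{(i)}-\dualvarv_{(i)},\, \nabla_{(i)} f(\dualvarv)\rangle + \frac{\stepsize^2}{2}\,\Cf^{(i)}.
\]
Next I would identify the linear term with the block gap. The max oracle returns the corner $\sv_{(i)}$ that maximizes $\langle \sv_{(i)}, -\nabla_{(i)} f(\dualvarv)\rangle$ over $\domain^{(i)}$ by~\eqref{eq:block_max_oracle}, so it is precisely the maximizer in the definition~\eqref{eq:block_gap}; hence $\langle \dualvarv_{(i)}-\sv_{(i)},\, \nabla_{(i)} f(\dualvarv)\rangle = g_i(\dualvarv)$, i.e. $\langle \sv_{(i)}-\dualvarv_{(i)},\, \nabla_{(i)} f(\dualvarv)\rangle = -g_i(\dualvarv)$. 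Substituting gives $f(\dualvarv_\stepsize)\le f(\dualvarv) - \stepsize\, g_i(\dualvarv) + \tfrac{\stepsize^2}{2}\Cf^{(i)}$. Finally, since $\dualvarv_{LS}$ minimizes $f(\dualvarv_\gamma)$ over $\gamma\in[0,1]$, we have $f(\dualvarv_{LS})\le f(\dualvarv_\stepsize)$ for the particular $\stepsize$ at hand; chaining the two inequalities delivers~\eqref{eq:block_descent_line_search_lemma}, and since $\stepsize$ was arbitrary the bound holds for every $\stepsize\in[0,1]$.

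This is essentially a routine unpacking of definitions, so I do not anticipate a genuine obstacle. The only step requiring care is the exact matching between the oracle-selected Frank-Wolfe corner and the maximizer in the block-gap definition, as it is what converts the generic linear term $\langle \sv_{(i)}-\dualvarv_{(i)},\, \nabla_{(i)} f(\dualvarv)\rangle$ into $-g_i(\dualvarv)$ and, crucially, lets the bound hold uniformly in $\stepsize$ (the line search only improves on any fixed $\stepsize$). I would also double-check the zero-padding bookkeeping accompanying Definition~\ref{def:curvature_const}, namely that $\dualvarv_\stepsize\in\domain$ even though $\sv_{[i]},\dualvarv_{[i]}\notin\domain$, to justify feasibility in the curvature supremum.
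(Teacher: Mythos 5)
Your proposal is correct and follows exactly the same route as the paper's own proof: apply the block curvature Definition~\ref{def:curvature_const} with $\dualvarvtwo=\dualvarv_\stepsize$ to get the quadratic bound, identify the linear term with $-g_i(\dualvarv)$ via the fact that the oracle-selected corner is the maximizer in~\eqref{eq:block_gap}, and finish with $f(\dualvarv_{LS})\le f(\dualvarv_\stepsize)$. The extra bookkeeping you flag (feasibility of $\dualvarv_\stepsize\in\domain$ and the corner/maximizer matching) is implicit but accurate, so nothing is missing.
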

\begin{proof}
    From Definition~\ref{def:curvature_const} of the curvature constant and the expression~\eqref{eq:block_gap} for the block gap, we have
    \begin{align*}
    f(\dualvarv_\stepsize) = & f(\dualvarv+\stepsize(\sv_{[i]}-\dualvarv_{[i]})) \\
    \le & f(\dualvarv) + \stepsize \langle \sv_{(i)}-\dualvarv_{(i)}, \nabla_{(i)} f(\dualvarv)\rangle + \frac{\stepsize^2}{2}\Cf^{(i)} \\
    = & f(\dualvarv) - \stepsize g_i(\dualvarv) + \frac{\stepsize^2}{2}\Cf^{(i)} \ .
    \end{align*}
    The inequality~$f(\dualvarv_{LS}) \leq f(\dualvarv_\stepsize)$ completes the proof.
\end{proof}

\section{Toy example for gap sampling \label{app:toy_example}}
In this section, we construct a toy example of the structured SVM problem where the adaptive gap-based sampling is $n$ times faster than non-adaptive sampling schemes such as uniform sampling or curvature-based sampling (the latter being the affine invariant analog of Lipschitz-based sampling).

\paragraph{General idea.} The main idea is to consider a training set where there are $n-1$ ``easy'' objects that need to be visited only once to learn to classify them, and one ``hard'' object that requires at least $K \gg 1$ visits in order to get the optimal parameter. We can design the example in such a way that the curvature or Lipschitz constants are non-informative about which example is hard, and which is easy. The non-adaptive sampling schemes will thus have to visit the easy objects as often as the hard object, whereas the gap sampling technique can adapt to focus only on the single hard object after having visited the easy objects once, thus yielding an overall $\min\{n,K\}$-times speedup.

Note that large-scale learning datasets could have analogous features as this toy example: a subgroup of objects might be easier to learn than another, and moreover, they might share similar information, so that after visiting a subset, we do not need to linger on the other ones from the same subset as all the information has already been extracted. We cannot know in advance which subgroups are these subsets, and thus an adaptive scheme is needed. 

\subsection{Explicit construction}

For simplicity, we set the weight of the regularizer $\regularizerweight$ to $\nicefrac{1}{n}$ so that the scaling factor defining~$A$ in Problem~\eqref{eq:svmstruct_nslack_dual} is $\nicefrac{1}{\regularizerweight n} = 1$. The matrix~$A$ thus consists of the difference feature maps, i.e., $A := \SetOf{\featuremapdiffv_i(\outputvarv):= \featuremapv(\inputvarv_i,\outputvarv_i) - \featuremapv(\inputvarv_i,\outputvarv) \in\R^d}{i\in[n],\outputvarv \in \outputdomain_i}$.
In our example, we use feature maps of dimensionality $d:=K+1 := |\outputdomain_i|$.
Let $\outputdomain_i:=\lbrace 0, 1, \dots, K \rbrace$ be the set of labels for the object~$i$ and the label~$0$ be the correct label.
We consider the zero-one loss, i.e., $\errorterm_i(0)=0$ and  $\errorterm_i(k)=1$ for $k\geq 1$. In the following, let $\{\basis_j\}_{j=1}^d$ be the standard basis vectors for $\R^d$.

\paragraph{Hard and easy objects.}
We construct the feature map~$\featuremapv(\inputvarv,\outputvarv)$ so that only the last coordinate of the parameter vector is needed to classify correctly the easy object, whereas all the other coordinates are needed to classify correctly the hard object. By using a different set of coordinates between the easy and the hard objects, we simplify the analysis as the optimization for both block types decouples (become independent).
Specifically, we set the feature map for the correct label to be $\featuremapv(\inputvarv_i,0):=\bm{0}$ for all objects~$i$. We let $i=1$ be the hard object and we set $\featuremapv(\inputvarv_i,k):={-\frac1{\sqrt{2}} \basis_{k}}$ for $k = 1,\dots,K$. For the easy object, $i\in \{2,\dots, n\}$, we use the constant $\featuremapv(\inputvarv_i,k) := {-\basis_{K+1}}$ for all $k\geq 1$.
The normalization of the feature maps is made so that the curvature constants for all the objects are equal (see below). Note also here that $\featuremapdiffv_1(k) \perp \featuremapdiffv_i(l)$ for any labels~$k, l$, and thus the optimization over block~$1$ decouples with the one for the other blocks $i = 2, \ldots, n$. The SSVM dual~\eqref{eq:svmstruct_nslack_dual} takes here the following simple form:
\begin{align}
    \label{eq:SSVMdualSimple}
    \min_{\substack{ \dualvarv\in\R^{m} \\  \dualvarv \succcurlyeq 0}} \quad  & \;\;
    \frac{1}{n} \sum_{k=1}^K \big(\frac{1}{4}{\dualvar_1(k)}^2 - \dualvar_1(k) \big) 
 	+ \frac{1}{n} (\frac{1}{2} u^2 - u ) \\
    \text{s.t.} ~~ & \;\; \sum_{k=0}^K  \dualvar_i(k) = 1 ~~~\forall i\in[n] \, , ~~  u = \sum_{i=2}^n \sum_{k=1}^K \dualvar_i(k)\, , \notag
\end{align}
where we have introduced the auxiliary variable~$u$ to highlight the simple structure for the optimization over the easy blocks. The unique\footnote{Uniqueness can be proved by noticing that the objective is strongly convex in $\dualvarv_1$ after removing $\dualvar_1(0)$ and replacing the equality constraint with an inequality.} solution for the first (hard) block is easily seen to be $\dualvar_1^*(k) = \frac{1}{K}$ for $k \geq 1$ and $\dualvar^*_1(0) = 0$. For the easy blocks, any feasible combination of dual variables that gives $u^* = 1$ is a solution. This gives the optimal parameter $\weightv^* = A \dualvarv^* = \basis_{K+1}+\frac{1}{K} \sum_{k=1}^K \basis_k$.

\paragraph{Optimization on the hard object.}
The objective for the hard object (block~1) in~\eqref{eq:SSVMdualSimple} is similar to the one used to show a lower bound of $\Omega(1/t)$ suboptimality error after~$t$ iterations for the Frank-Wolfe algorithm for $t$ smaller than the dimensionality (e.g., see Lemma~3 and~4 in~\citetsup{Jaggi:2013wg}), hence showing that the optimization is difficult on this block.
The BCFW algorithm is initialized with $\weightv = \bm{0}$, which corresponds to putting all the mass on the correct label, i.e., $\dualvar_i(0)=1$ and $\dualvar_i(k)=0$, $k\geq 1$. 
At each iteration of BCFW, the mass can be moved only towards one corner, and all the corners (of the simplex) have exactly one non-zero coordinate.
This means that after~$t$ iterations of BCFW on the first block, at most~$t$ non-ground truth dual variables can be non-zero. Minimizing the objective~\eqref{eq:SSVMdualSimple} over the first block with the constraint that at most~$t$ of these variables are non-zero give the similar solution~$\dualvar_1(k)=1/t$ for $k=1, \ldots, t$, which gives a suboptimality of~$\frac{1}{4n}(\frac{1}{t} - \frac{1}{K})$ for $t \leq K$. 
Similarly, this also yields the smallest FW gap\footnote{Recall that the FW gap here is the same as the Lagrangian duality gap (see Section~\ref{sec:duality_gap}), and so if one cares about the SSVM primal suboptimality, one needs a small FW gap.} 
possible for this block after~$t$ iterations, which is~$\frac{1}{n}\frac{1}{2t}$. This means that in order to get a suboptimality error smaller than~$\epsilon$, one needs at least
\begin{equation} \label{eq:HardBlockComplexity}
t \geq \Omega(\min\{K, \frac{1}{n\epsilon}\} ) 
\end{equation} 
BCFW iterations on the first 
block.\footnote{In fact, BCFW also has a $\mathcal{O}(\frac{1}{nt})$ gap after~$t$ iterations on the first block by the standard FW convergence theorem, as $\Cf^{(1)} = \frac{1}{n}$ as we show in~\eqref{eq:Cf1HardBlock}.}

\paragraph{Optimization on easy objects.}
Finally, we now show that after one iteration on \emph{any} easy object, the gaps $g_i$ on all easy objects become zero (i.e., they are all optimal and then stay optimal as the optimization is decoupled with the first block). After this iteration, BCFW with gap sampling visits all the easy objects exactly once and sets their gap estimates to zero, thus never revisiting them again.

Note that before visiting any easy object~$i$, we have $\langle \weightv, \featuremapdiffv_i (k) \rangle = 0$ for all~$k$ as the features for the hard object are orthogonal and $\weightv$ is initialized to zero. Thus, at the first visit of an easy object $i \in \{2,\dots,n\}$, we have $H_i(0;\weightv) = 0$ and $H_i(k;\weightv) = 1$, $k\geq 1$, and the max oracle returns some (any) label~$k \in \{1,\dots,K\}$.
Following the steps of Algorithm~\ref{alg:FW_product_SVM}, we have $\weightv_{\sv} := \frac{1}{\regularizerweight n} \featuremapdiffv (k) = \basis_{K+1}$ and $\ell_s=\frac{1}{n}\errorterm_i(k)=\frac{1}{n}$.
Then $g_i = \frac{1}{n}$ and $\stepsize = 1$ as $\weightv_i = \0$. 
The assignment ${\weightv_i} =  \basis_{K+1}$ implies the update $\weightv \leftarrow \weightv + \basis_{K+1}$ of the parameter vector.
After such an update, at all iterations, for all easy objects $i\in \{2,\dots,n\}$ and for all labels $k \in \{1,\dots,K\}$, we have 
\begin{equation}
\label{eq:easyblockHiszero}
H_i(k;\weightv) = \errorterm_i(k)-\langle \weightv, \basis_{K+1} \rangle=0 
\end{equation}
because the coordinate $\weight_{K+1}$ is never updated again. According to~\eqref{eq:duality_gap_withH}, the equalities~\eqref{eq:easyblockHiszero} imply that the block gaps~$g_i$ equal zero for all the easy objects.

\paragraph{Curvature constants.}
The simple structure of the matrix~$A$ allows us to explicitly compute the curvature constants~$\Cf^{(i)}$ corresponding to both easy and hard objects. 

The SSVM dual~\eqref{eq:svmstruct_nslack_dual} is a quadratic function with a constant Hessian~$H:=  \regularizerweight A^\transpose A$, so the second-order Taylor expansion of $f$ at a point $\dualvarv$ allows us to rewrite the definition~\ref{eqn:CfProduct} as
\begin{align*}
\notag
\Cf^{(i)} &= \sup_{\substack{\dualvarv \in \domain\\ \sv_{(i)} \in \domain^{(i)}}}
(\sv_{[i]}-\dualvarv_{[i]})^\transpose   H (\sv_{[i]}-\dualvarv_{[i]}) \\ %
 &= \regularizerweight\!\!\!\! \sup_{\substack{\dualvarv \in \domain\\ \sv_{(i)} \in \domain^{(i)}}}
\!\!\!\! \Vert A (\sv_{[i]}-\dualvarv_{[i]}) \Vert_2^2 \\
 &= \regularizerweight  \max_{k \, , \, l}  \| \featuremapv(\inputvarv_i,k)-\featuremapv(\inputvarv_i,l) \|_2^2 .
\end{align*}
The last line uses the property that the maximum of a convex function over a convex set is obtained at a vertex.

In the case of the hard object, we can get
\begin{equation} \label{eq:Cf1HardBlock}
\Cf^{(1)} = \regularizerweight  \Vert \featuremapv(\inputvarv_1,1)-\featuremapv(\inputvarv_1,2)\Vert_2^2 = \frac{1}{n}.
\end{equation}
In the case of an easy object, we can get 
\begin{equation}  \label{eq:CfiEasyBlock}
\Cf^{(i)} = \regularizerweight  \Vert \featuremapv(\inputvarv_i,1)-\featuremapv(\inputvarv_i,0)\Vert_2^2 = \frac{1}{n}.
\end{equation}

\paragraph{Adaptive and non-adaptive sampling.}
Let~$t$ be the number of steps needed on the hard block. By~\eqref{eq:HardBlockComplexity}, we need $t \geq \Omega(\min\{K, \frac{1}{n\epsilon}\} )$ to get a suboptimality smaller than~$\epsilon$. 
The uniform sampling scheme visits all the objects with the same probability.
In the setting constructed above, it makes, on average, $t$ visits to each easy object prior to visiting the hard object $t$ times.
Thus, the overall scheme will call the max-oracle $O(nt)$ times.
All the curvature constants~$\Cf^{(i)}$ are equal, so the sampling proportional to the curvature constants is equivalent to uniform sampling.

The adaptive sampling scheme visits each easy object only once after the first visit to any of them.
After such a visit to any easy object, its local gap estimate equals zero and this object is never visited again.
The gap sampling scheme thus makes an overall $O(n + t)$ oracle calls.
The adaptive scheme is thus approximately $\min\{n,t\} = \min\{n, K, \frac{1}{n \epsilon} \}$ times faster than the non-adaptive ones.
The speed-up can be made arbitrary large by setting both $n$ and $K$ large enough, and $\epsilon$ small enough.

\paragraph{Lipschitz and curvature constants.}
The non-uniform sampling techniques used in the work of~\citet{Nesterov:2012fa,needell2014,ZhaoImportanceSampling_ICML15} use Lipschitz constants of partial derivatives to obtain the sampling probabilities.
In our discussion above, we use the curvature constants.
\citet[Appendix C]{LacosteJulien2015linearFW} note that the curvature constants are affine invariant quantities and, thus, are more suited for the analysis of Frank-Wolfe methods compared to Lipschitz constants (which depend on a choice of norm).
We illustrate this point on our toy example by explicitly computing the Lipschitz constants over blocks for the $\ell_2$ and $\ell_1$ norm.
For both easy and hard blocks, the Lipschitz constant of the gradient with respect to the $\ell_2$ norm equals the largest eigenvalues of the corresponding block Hessians.
For an easy object, the block Hessian is a rank one matrix with the only non-zero eigenvalue equal to $\regularizerweight(|\outputdomain_i|-1) = \frac{K}{n}$.
For the hard object, the block Hessian is a diagonal matrix with non-zero entries equal to~$\frac{\regularizerweight}{2} = \frac{1}{2n}$. 
Here the Lipschitz constant for the easy block is about $K$ times bigger than the one for the hard block, and thus for a large number of labels~$K$, sampling according to Lipschitz constants can be much slower than sampling according to the curvature constants, which was itself slower than the adaptive sampling scheme.

This poor scaling of the Lipschitz constants is partly due to the bad choice of norm in relationship to the optimization domain. \citetsup{aspremont2013optimalFW} suggests to use the atomic norm of the domain~$\domain$ for the analysis. In the case of the simplex, we get the $\ell_1$ norm to measure the diameter of the domain, and its dual norm ($\ell_\infty$) to measure the Lipschitz constant of the gradient. With this norm, the Lipschitz constant stays as $\frac{1}{2n}$ for the hard block, but decreases to the more reasonable $\frac{1}{n}$ for the easy blocks. As explained in footnote~\ref{foot:Cfi}, we can use the bound \mbox{$\Cf^{(i)} \leq L_i \big(\diam_{\|\cdot\|_i} \domain^{(i)} \big)^2$} for the curvature constant. As the diameter for the simplex measured with the $\ell_1$-norm is~$2$, we get the bound~$\Cf^{(1)} \leq \frac{2}{n}$ for the hard block, very close to its exact value of~$\frac{1}{n}$ as derived in~\eqref{eq:Cf1HardBlock}. The $\ell_1$ norm thus appears as a more appropriate choice for this problem.

\begin{algorithm}[t]
    \caption{Block-coordinate Frank-Wolfe (BCFW) algorithm with gap sampling for structured SVM}%
    \label{alg:FW_product_SVM_gapsampling}
\begin{algorithmic}[1]
        \STATE Let $\weightv^{(0)}\!\!:=\!{\weightv_i}^{(0)}\!\!:=\!\0$; \: $\ell^{(0)}\!\!:=\!{\ell_i}^{(0)}\!\!:=\!0$; \: $g_i^{(0)}\!\!:=\!+\infty$;
        \STATE $k_i\!:=\!0$
        \COMMENT{the last time $g_i$ was computed}
       \FOR{$k:=0,\dots,\infty$}
                \STATE Pick $i$ at random with probability $\propto~g_i^{(k_i)}$
                \STATE Solve $\outputvarv_i^* := \displaystyle\argmax_{\outputvarv\in\outputdomain_i} \ H_i(\outputvarv;\weightv^{(k)})$  \label{alg:FW_product_SVM_gap_sampling:max_oracle} %
                \STATE Let $k_i := k$
                \STATE Let $\weightv_{\sv} := \frac1{\regularizerweight n} \featuremapdiffv_i(\outputvarv_i^*)$ \;
                and \; $\ell_{\sv} := \frac1n \errorterm_i(\outputvarv_i^*)$
               \STATE Let $g_i^{(k_i)} :=  \regularizerweight (\weightv_i^{(k)}-\weightv_{\sv})^\transpose \weightv^{(k)} - \ell_i^{(k)} + \ell_{\sv}$\label{alg:FW_product_SVM_gap_sampling:block_gap}
               \STATE {\small Let $\stepsize := \frac{ g_i^{(k_i)}}{ \regularizerweight \|\weightv_i^{(k)}-\weightv_{\sv}\|^2}$~and clip to $[0,1]$}\label{alg:FW_product_SVM_gap_sampling:line_search}
                \STATE Update ${\weightv_i}^{(k+1)}:= (1-\stepsize){\weightv_i}^{(k)}+\stepsize \,\weightv_{\sv}$
                \STATE {\small~~~~~~~and~ ${\ell_i}^{(k+1)}:= (1-\stepsize){\ell_i}^{(k)}+\stepsize\, \ell_{\sv}$}
                \STATE Update $\weightv^{(k+1)}\;:= \weightv^{(k)} + {\weightv_i}^{(k+1)} - {\weightv_i}^{(k)}$
                \STATE {\small~~~~~~~and~~ $\ell^{(k+1)}:= ~\ell^{(k)}+{\ell_i}^{(k+1)} \ \  - {\ell_i}^{(k)}$}
              \IF {update global gap}
                \FOR{$i:=1,\dots,n$} 
                    \STATE Let $k_i:= k+1$
                    \STATE Solve $\outputvarv_i^* := \displaystyle\argmax_{\outputvarv\in\outputdomain_i} \ H_i(\outputvarv;\weightv^{(k_i)})$ 
                    \STATE Let $\weightv_{\sv} := \frac1{\regularizerweight n} \featuremapdiffv_i(\outputvarv_i^*)$ \; 
                 and \; $\ell_{\sv} := \frac1n \errorterm_i(\outputvarv_i^*)$
                    \STATE $g_i^{(k_i)} := \regularizerweight (\weightv_i^{(k_i)}-\weightv_{\sv})^\transpose \weightv^{(k_i)} - \ell_i^{(k_i)} + \ell_{\sv}$
                \ENDFOR
              \ENDIF
        \ENDFOR
\end{algorithmic}
\end{algorithm}

\section{Detailed algorithms. \label{app:algorithms}}
In this section, we give the detailed versions of our BCFW variants applied to the SSVM objective presented in the main paper. Algorithm~\ref{alg:FW_product_SVM_gapsampling} describes BCFW with adaptive gap sampling. We give the block-coordinate version of pairwise FW (BCPFW) in Algorithm~\ref{alg:pFW_product_SVM_pairwise}, and of away-step FW (BCAFW) in Algorithm~\ref{alg:aFW_product_SVM_away_steps}. We note that these two algorithms are simply the \emph{blockwise} application of the PFW and AFW algorithms as described in~\citet{LacosteJulien2015linearFW}, but in the context of SSVM which complicates the notation.  Algorithm~\ref{alg:FW_product_SVM_caching} presents the BCFW algorithm with caching. Algorithm~\ref{alg:rpAlgorithmSSVM} presents our method for computing the regularization path and Algorithm~\ref{alg:rpInitSSVM} presents the initialization of the regularization path.

Note that the three modifications proposed in our paper (gap sampling, caching, pairwise/away steps) can be straightforwardly put together in any combination. In our experimental study, we evaluate all the possibilities.

When using gap sampling or caching and to guarantee convergence, we have to do a full pass over the data every so often to refresh the global gap estimates and to compensate for the staleness effect.
In the experiments, we perform this computation every 10 passes over the data (this is the ``update global gap'' condition in the algorithms). This global gap can also be used as a certificate (upper bound) on the current suboptimality. We thus use the same frequency of global gap computation (every 10 passes) when we run a SSVM solver with a specific convergence tolerance threshold. This is used in our regularization path algorithm which runs a SSVM solver up to a fixed convergence tolerance at each breakpoint.

In our description of the regularization path algorithms (Algorithm~\ref{alg:rpInitSSVM} and Algorithm~\ref{alg:rpAlgorithmSSVM}), we explicitly describe how to update the active sets over the dual variables when the regularization parameter is updated. This is needed when using a SSVM solver that requires the active set over the dual variables (such as BCPFW or BCAFW). When using the simpler BCFW solver, then lines~\ref{alg:rpInitSSVM:activeSetBegin}--\ref{alg:rpInitSSVM:activeSetEnd} of Algorithm~\ref{alg:rpInitSSVM}
and lines~\ref{alg:rpAlgorithmSSVM:activeSetBegin}--\ref{alg:rpAlgorithmSSVM:activeSetEnd} of Algorithm~\ref{alg:rpAlgorithmSSVM} can simply be omitted.

\begin{algorithm}[t]
    \caption{Block-coordinate pairwise Frank-Wolfe (BCPFW) algorithm for structured SVM}%
    \label{alg:pFW_product_SVM_pairwise}
\begin{algorithmic}[1]
        \STATE Let $\weightv^{(0)}:= {\weightv_i}^{(0)}:= \0$; \; $\ell^{(0)}:={\ell_i}^{(0)}:=0$; \;  
        \STATE ${\activeS_i}^{(0)}:=\{\outputvarv_i\}$;
        \COMMENT{active sets}
        \STATE $\dualvar_i^{(0)}(\outputvarv):=0$, $\outputvarv\neq \outputvarv_i$; \; $\dualvar_i^{(0)}(\outputvarv_i):=1$
       \FOR{$k:=0,\dots,\infty$}
                \STATE Pick $i$ at random in $\{1,\ldots,n\}$ 
                \STATE Solve $\outputvarv_i^* := \displaystyle\argmax_{\outputvarv\in\outputdomain_i} \ H_i(\outputvarv;\weightv^{(k)})$ %
                     \COMMENT{FW corner}
                \STATE Let $\weightv_{\sv} := \frac1{\regularizerweight n} \featuremapdiffv_i(\outputvarv_i^*)$ \;
                and \; $\ell_{\sv} := \frac1n \errorterm_i(\outputvarv_i^*)$
                \STATE Solve $\outputvarv_i^a := \displaystyle\argmin_{\outputvarv\in\activeS_i^{(k)}} \ H_i(\outputvarv;\weightv^{(k)})$ \COMMENT{away corner}
                \STATE Let $\weightv_{\awayv} := \frac1{\regularizerweight n} \featuremapdiffv_i(\outputvarv_i^a)$ \;
                and  $\ell_{\awayv} := \frac1n \errorterm_i(\outputvarv_i^a)$  
                \STATE Let $\weightv_{\bm{d}} :=  \weightv_{\sv}-\weightv_{\awayv}$ \;
                 and \; $\ell_{\bm{d}}=\ell_{\sv}-\ell_{\awayv}$
               \STATE Let $\stepsize := \frac{-\regularizerweight {\weightv_{\bm{d}}}^\transpose \weightv^{(k)} + \ell_{\bm{d}}}{ \regularizerweight \|\weightv_{\bm{d}}\|^2}$~and clip to $[0,\dualvar_i^{(k)}(\outputvarv_i^a)]$
                \STATE Update ${\weightv_i}^{(k+1)}:= {\weightv_i}^{(k)}+\stepsize \,\weightv_{\bm{d}}$
                \STATE {\small~~~~~~~and~ ${\ell_i}^{(k+1)}:= {\ell_i}^{(k)}+\stepsize\, \ell_{\bm{d}}$}
                \STATE Update $\weightv^{(k+1)}\;:= \weightv^{(k)} + {\weightv_i}^{(k+1)} - {\weightv_i}^{(k)}$
                \STATE {\small~~~~~~~and~~ $\ell^{(k+1)}:= ~\ell^{(k)}+{\ell_i}^{(k+1)} \ \  - {\ell_i}^{(k)}$}
                \STATE Update ${\activeS_i}^{(k+1)}\;:= {\activeS_i}^{(k)} \cup \{\outputvarv_i^* \}$ 
                \STATE {\small~~~~~~~and~~ $\dualvar_i^{(k+1)}(\outputvarv_i^a):= \dualvar_i^{(k)}(\outputvarv_i^a)-\stepsize$}
                 \STATE {\small~~~~~~~and~~ $\dualvar_i^{(k+1)}(\outputvarv_i^*):= \dualvar_i^{(k)}(\outputvarv_i^*)+\stepsize$}
               \IF{$\stepsize=\dualvar_i^{(k)}(\outputvarv_i^a)$}
                 \STATE Set ${\activeS_i}^{(k+1)}:= {\activeS_i}^{(k+1)} \setminus \{\outputvarv_i^a \}$ \COMMENT{drop step} 
                \ENDIF
        \ENDFOR
\end{algorithmic}
\end{algorithm}

\begin{algorithm}[t]
    \caption{Block-coordinate away-step Frank-Wolfe (BCAFW) algorithm for structured SVM}%
    \label{alg:aFW_product_SVM_away_steps}
\begin{algorithmic}[1]
        \STATE Let $\weightv^{(0)}:= {\weightv_i}^{(0)}:= \0$; \; $\ell^{(0)}:={\ell_i}^{(0)}:=0$;
        \STATE $\activeS_i^{(0)}:=\{\outputvarv_i\}$;
        \COMMENT{active sets} 
        \STATE $\dualvar_i^{(0)}(\outputvarv):=0$, $\outputvarv\neq \outputvarv_i$; \; $\dualvar_i^{(0)}(\outputvarv_i):=1$
       \FOR{$k:=0,\dots,\infty$}
                \STATE Pick $i$ at random in $\{1,\ldots,n\}$ 
                \STATE Solve $\outputvarv_i^* := \displaystyle\argmax_{\outputvarv\in\outputdomain_i} \ H_i(\outputvarv;\weightv^{(k)})$ %
                     \COMMENT{FW corner}
                \STATE Let $\weightv_{\sv} := \frac1{\regularizerweight n} \featuremapdiffv_i(\outputvarv_i^*)$ \;
                and \; $\ell_{\sv} := \frac1n \errorterm_i(\outputvarv_i^*)$
                 \STATE Solve $\outputvarv_i^a := \displaystyle\argmin_{\outputvarv\in\activeS_i^{(k)}} \ H_i(\outputvarv;\weightv^{(k)})$ \COMMENT{away corner}
                \STATE Let $\weightv_{\awayv} := \frac1{\regularizerweight n} \featuremapdiffv_i(\outputvarv_i^a)$ \;
                and \; $\ell_{\awayv} := \frac1n \errorterm_i(\outputvarv_i^a)$
				\STATE Let $g_i^{FW} :=  \regularizerweight (\weightv_i^{(k)}-\weightv_{\sv})^\transpose \weightv^{(k)} - \ell_i^{(k)} + \ell_{\sv}$ \label{alg:FW_product_SVM_away_steps:block_gap}
                \STATE Let $g_i^{A} :=  \regularizerweight (\weightv_{\awayv}-\weightv_i^{(k)})^\transpose \weightv^{(k)} + \ell_i^{(k)} - \ell_{\awayv}$\label{alg:FW_product_SVM_away_steps:away_gap}\vspace{1.5mm}
				\IF{$g_i^{FW} > g_i^{A}$
} \label{alg:aFW_product_SVM_startA}  \COMMENT{FW step}
                \STATE Let $\stepsize := \frac{g_i^{FW}}{ \regularizerweight \|\weightv_i^{(k)}-\weightv_{\sv}\|^2}$~and clip to $[0,1]$
		  			\STATE Update ${\activeS_i}^{(k+1)}\;:= {\activeS_i}^{(k)} \cup \{\outputvarv_i^* \}$
		  			\STATE {\small~~~~~~~and~~ $\dualvar_i^{(k+1)}(\outputvarv):= (1-\stepsize)\dualvar_i^{(k)}(\outputvarv)$}
                     \STATE {\small~~~~~~~and~~ $\dualvar_i^{(k+1)}(\outputvarv_i^*):= \dualvar_i^{(k+1)}(\outputvarv_i^*)+\stepsize$}
                     \STATE Set ${\activeS_i}^{(k+1)}\;:= \{\outputvarv_i^* \}$ if $\stepsize=1$
		  		\ELSE \COMMENT{away step}
                    \STATE Let $\stepsize := \frac{g_i^{A}}{ \regularizerweight \|\weightv_i^{(k)}-\weightv_{\awayv}\|^2}$~and clip to $[0,\frac{\dualvar_i(\outputvarv_i^a)}{1-\dualvar_i(\outputvarv_i^a)}]$
		  			\STATE Update {\small~ $\dualvar_i^{(k+1)}(\outputvarv):= (1+\stepsize)\dualvar_i^{(k)}(\outputvarv)$}
                     \STATE {\small~~~~~~~and~~ $\dualvar_i^{(k+1)}(\outputvarv_i^a):= \dualvar_i^{(k+1)}(\outputvarv_i^a)-\stepsize$}		  							 \STATE {\small~~~~~~~~and~~ ${\activeS_i}^{(k+1)}\;:= {\activeS_i}^{(k)} \setminus \{\outputvarv_i^a \}$ if $\dualvar_i^{(k+1)}(\outputvarv_i^a)=0$}	
		  			
		  		\ENDIF \label{alg:aFW_product_SVM_endA}
                \STATE Update ${\weightv_i}^{(k+1)}:= {\weightv_i}^{(k)}+\stepsize \,\weightv_{\bm{d}}$
                \STATE {\small~~~~~~~and~ ${\ell_i}^{(k+1)}:= {\ell_i}^{(k)}+\stepsize\, \ell_{\bm{d}}$}
                \STATE Update $\weightv^{(k+1)}\;:= \weightv^{(k)} + {\weightv_i}^{(k+1)} - {\weightv_i}^{(k)}$
                \STATE {\small~~~~~~~and~~ $\ell^{(k+1)}:= ~\ell^{(k)}+{\ell_i}^{(k+1)} \ \  - {\ell_i}^{(k)}$}
        \ENDFOR
\end{algorithmic}
\end{algorithm}

\section{Proof of Theorem~\ref{thm:convTheoremGapSampling} (convergence of BCFW with gap sampling) \label{app:proof_gap_sampling}}

\begin{lemma}[Expected block descent lemma] \label{lemma:convGapSampling:expectedDescent}
	Let $g_j(\dualvarv^{(k)})$ be the block gap for block~$j$ for the iterate $\dualvarv^{(k)}$. Let $\dualvarv^{(k+1)}$ be obtained by sampling a block~$i$ with probability $p_i$ and then doing a (block) FW step with line-search on this block, starting from $\dualvarv^{(k)}$. Consider \emph{any} set of scalars $\stepsize_j \in [0,1]$, $j = 1, \ldots, n$, which do not depend on the chosen block $i$.
    Then in conditional expectation over the random choice of block $i$ with probabilities~$p_i$, it holds:
    \begin{align}  
    \notag\E\big[ f(\dualvarv^{(k+1)})\,|\, \dualvarv^{(k)}\big]
    &\le f(\dualvarv^{(k)}) -  \sum_{i=1}^n{\gamma_i p_i g_i(\dualvarv^{(k)})} \\[-0.1cm]
    \label{mineq} &\quad + \frac{1}{2}\sum_{i=1}^n{\gamma_i^2 p_i \Cf^{(i)}}.
    \end{align}\\[-1.0cm]
\end{lemma}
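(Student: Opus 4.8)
The plan is to apply the block descent lemma (Lemma~\ref{lem:block_step_improvement}) once for each possible sampled block and then average over the sampling distribution. First I would condition on the event that block~$i$ is the one selected at iteration~$k$. Since $\dualvarv^{(k+1)}$ is then produced by an exact line search along the FW direction of block~$i$, Lemma~\ref{lem:block_step_improvement} guarantees, for the particular choice $\stepsize = \stepsize_i$, that
\[
f(\dualvarv^{(k+1)}) \le f(\dualvarv^{(k)}) - \stepsize_i\, g_i(\dualvarv^{(k)}) + \frac{\stepsize_i^2}{2}\Cf^{(i)}.
\]
The key observation is that the bound~\eqref{eq:block_descent_line_search_lemma} holds for \emph{every} $\stepsize\in[0,1]$, so, since $\stepsize_i\in[0,1]$ by assumption, we may instantiate it at $\stepsize_i$ even though the algorithm's line search does not literally use that step: the line search only makes $f(\dualvarv^{(k+1)})$ smaller than the value obtained by moving with the prescribed step $\stepsize_i$.

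Next I would take the conditional expectation over the random block choice. By the law of total expectation,
\[
\E\big[f(\dualvarv^{(k+1)})\,\big|\,\dualvarv^{(k)}\big] = \sum_{i=1}^n p_i\,\E\big[f(\dualvarv^{(k+1)})\,\big|\,\dualvarv^{(k)},\,\text{block } i\text{ chosen}\big],
\]
and substituting the per-block bound gives
\[
\E\big[f(\dualvarv^{(k+1)})\,\big|\,\dualvarv^{(k)}\big] \le \sum_{i=1}^n p_i\Big( f(\dualvarv^{(k)}) - \stepsize_i\, g_i(\dualvarv^{(k)}) + \tfrac{\stepsize_i^2}{2}\Cf^{(i)}\Big).
\]
Using $\sum_{i=1}^n p_i = 1$ to factor out the constant term $f(\dualvarv^{(k)})$ then yields exactly~\eqref{mineq}.

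There is essentially no hard analytic step here; the only subtlety — and the reason for the hypothesis that the scalars $\stepsize_j$ do not depend on the sampled block~$i$ — is to guarantee that each $\stepsize_i$ is deterministic when we split the expectation across blocks. If the $\stepsize_j$ were allowed to depend on the realized draw, the substitution into the total-expectation sum would no longer decouple block by block and the step-dependent terms could not be factored cleanly. Once that independence is granted, the lemma follows directly from Lemma~\ref{lem:block_step_improvement} and linearity of expectation, requiring no curvature estimate beyond the one already encoded in $\Cf^{(i)}$.
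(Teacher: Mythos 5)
Your proof is correct and follows essentially the same route as the paper's: condition on the sampled block, apply the block descent bound (the curvature inequality combined with the line-search property, i.e.\ Lemma~\ref{lem:block_step_improvement}) at the deterministic step size $\stepsize_i$, and then average over the block choice with probabilities $p_i$ using $\sum_i p_i = 1$. Your remark about why the $\stepsize_j$ must not depend on the sampled block matches the care the paper takes with the expectation, so there is nothing to add.
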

\begin{proof}
    The proof is analogous to the proof of Lemma~\ref{lem:block_step_improvement}, but being careful with the expectation. Let block~$i$ be the chosen one that defined $\dualvarv^{(k+1)}$ and let $\dualvarv_\stepsize := \dualvarv+\stepsize(\sv_{[i]}-\dualvarv_{[i]})$, where $\mathbf{s}_{(i)}\in \domain^{(i)}$ is the FW corner on block~$i$ and~$\mathbf{s}_{[i]} \in \R^m$ is its zero-padded version. By the line-search, we have $f(\dualvarv^{(k+1)}) \leq f(\dualvarv_\stepsize)$ for any $\stepsize \in [0,1]$. By using $\stepsize = \stepsize_i$ in the bound~\eqref{eqn:CfProduct} provided in the curvature Definition~\ref{def:curvature_const}, and by the definition of the Frank-Wolfe gap, we get:
    \begin{align*}
    f(\dualvarv^{(k+1)}) \leq  f(\dualvarv_{\stepsize_i})
    &= 
    f(\dualvarv^{(k)} + \gamma_i (\mathbf{s}_{[i]} - \dualvarv^{(k)}_{[i]})) \\
    & \leq f(\dualvarv^{(k)}) + \gamma_i g_i(\dualvarv^{(k)}) + \frac{\gamma_i^2}{2}\Cf^{(i)}.
    \end{align*}
    Taking the expectation of the bound with respect to $i$, conditioned on $\dualvarv^{(k)}$, proves the lemma.
\end{proof}

\begin{definition}
    \label{def:nonUnifMeasure}
    The \emph{nonuniformity measure} $\nonuniformity(\bm{x})$ of a vector~$\bm{x}\in\R^n_+$ is defined as:
    \begin{equation*}
    \nonuniformity(\bm{x}) 
    :=  
    \sqrt{1+n^2\Var \big[\bm{p}\big]}
    \end{equation*}
    where~$\bm{p} := \frac{\bm{x}}{\Vert \bm{x} \Vert_1}$ is the probability vector obtained by normalizing~$\bm{x}$.
\end{definition}

\begin{lemma}
    \label{lemma:L1L2}
    Let $\bm{x}\in\R^n_+$.
    The following relation between its $\ell_1$-norm and $\ell_2$-norm holds:
    \begin{equation*}
    \Vert \bm{x} \Vert_2 =  \frac{\nonuniformity(\bm{x})}{\sqrt{n}}\Vert \bm{x} \Vert_1 \, .
    \end{equation*}
\end{lemma}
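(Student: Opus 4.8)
The plan is to unpack the definition of $\nonuniformity(\bm{x})$ and observe that the quantity under its square root collapses to a simple multiple of $\|\bm{x}\|_2^2$, after which the claimed identity is just a rearrangement. Throughout I assume $\bm{x}\neq\bm{0}$ so that $\|\bm{x}\|_1>0$ and the probability vector $\bm{p}:=\bm{x}/\|\bm{x}\|_1$ is well-defined (the case $\bm{x}=\bm{0}$ makes both sides vanish trivially).

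First I would write out the empirical variance of the entries of $\bm{p}$, treating them as $n$ data points. Because $\sum_{i=1}^n p_i = 1$, the empirical mean of the entries is $\frac{1}{n}\sum_i p_i = \frac{1}{n}$, and hence
\begin{equation*}
\Var[\bm{p}] = \frac{1}{n}\sum_{i=1}^n p_i^2 - \Big(\frac{1}{n}\Big)^2 = \frac{1}{n}\|\bm{p}\|_2^2 - \frac{1}{n^2}.
\end{equation*}
Multiplying through by $n^2$ and adding $1$ then makes the mean-correction term cancel, yielding
\begin{equation*}
1 + n^2\Var[\bm{p}] = n\,\|\bm{p}\|_2^2 ,
\end{equation*}
so that by Definition~\ref{def:nonUnifMeasure} we obtain $\nonuniformity(\bm{x}) = \sqrt{n}\,\|\bm{p}\|_2$.

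Finally I would substitute $\|\bm{p}\|_2 = \|\bm{x}\|_2/\|\bm{x}\|_1$, which follows from $\bm{p}=\bm{x}/\|\bm{x}\|_1$ and positive homogeneity of the $\ell_2$-norm, to get $\nonuniformity(\bm{x}) = \sqrt{n}\,\|\bm{x}\|_2/\|\bm{x}\|_1$, and then rearrange into the stated identity $\|\bm{x}\|_2 = \frac{\nonuniformity(\bm{x})}{\sqrt{n}}\,\|\bm{x}\|_1$.

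There is no genuine obstacle here: the result is a one-line algebraic identity once the definitions are expanded. The only point requiring care is the convention for $\Var$, namely that it denotes the unnormalized (empirical) variance dividing by $n$ with the $n$ coordinates of $\bm{p}$ playing the role of the sample. The definition of $\nonuniformity$ is calibrated precisely so that the factor $n^2$ and the $+1$ combine to remove the $1/n^2$ mean-correction, leaving exactly $n\|\bm{p}\|_2^2$; any other normalization of the variance would alter the constant and break the clean identity.
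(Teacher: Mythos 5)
Your proof is correct and follows essentially the same route as the paper: both rest on the identity $\Var[\bm{p}] = \frac{1}{n}\|\bm{p}\|_2^2 - \frac{1}{n^2}$ for the empirical variance of the normalized vector, after which the lemma is immediate from the definition of $\nonuniformity$. The only difference is that you spell out the algebraic rearrangement and the $\bm{x}=\bm{0}$ edge case, which the paper leaves implicit.
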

\begin{proof}
    We have that\\[-0.2cm]
    \begin{equation}
    \label{eq:lemma_L1L2_proof1}
    \Var \big[\bm{p}\big] = \E \big[ \bm{p}^2 \big] - \E \big[ \bm{p} \big]^2 = \frac{1}{n} \| \bm{p} \|_2^2 - \frac{1}{n^2}.
    \end{equation}
    Combining~\eqref{eq:lemma_L1L2_proof1} and Definition~\ref{def:nonUnifMeasure} we prove the lemma.
\end{proof}

\textbf{Remark.} For any~$\bm{x} \in \R^n_+$, the quantity $\nonuniformity(\bm{x})$ always belongs to the segment~$[1,\sqrt{n}]$. 
We have $\nonuniformity(\bm{x})=1$ when all the elements of~$\bm{x}$ are equal and~$\nonuniformity(\bm{x})=\sqrt{n}$ when all the elements, except one, equal zero.

\begin{reptheorem}{thm:convTheoremGapSampling}
    Assume that at each iterate~$\dualvarv^{(k)}$, $k\ge 0$, BCFW with gap sampling (Algorithm~\ref{alg:FW_product_SVM_gapsampling}) has access to the exact values of the block gaps. 
    Then, at each iteration, it holds that
    $
    \E \big[f(\dualvarv^{(k)})\big] - f(\dualvarv^*) \le \frac{2n}{k+2n}\big(\CfTotal \nonuniformityTotal + h_0\big)
    $
    where $\dualvarv^*\in \domain$ is a solution of problem~(\ref{eq:svmstruct_nslack_dual}), $h_0 := f(\dualvarv^{(0)}) - f(\dualvarv^*)$ is the suboptimality at the starting point of the algorithm, the constant~$\CfTotal := \sum_{i=1}^n \Cf^{(i)}$ is the sum of the curvature constants, and the constant~$\nonuniformityTotal$ is an upper bound on $\E \Big[ \frac{\nonuniformity(\Cf^{(:)})}{\nonuniformity(\bm{g}_{:}(\dualvarv^{(k)}))^3} \Big]$, which quantities the amount of non-uniformity of the $\Cf^{(i)}$'s in relationship to the non-uniformity of the gaps obtained during the algorithm.
    The expectations are taken over the random choice of the sampled block at iterations $1,\dots,k$ of the algorithm.
\end{reptheorem}

\begin{proof}
    Starting from Lemma~\ref{lemma:convGapSampling:expectedDescent} with $\stepsize_i:=\stepsize$ for some $\stepsize$ to be determined later and $p_i := \frac{g_i}{g}$ where~$g_i:=g_i(\dualvarv^{(k)})$ and~$g:=g(\dualvarv^{(k)})$, we get\\[-0.4cm]
    \begin{align}  
    \notag
    \E\big[ f(\dualvarv^{(k+1)})\,|\, \dualvarv^{(k)}\big]
    &\le f(\dualvarv^{(k)}) -  \gamma \sum_{i=1}^n \frac{g_i^2}{ g } \\[-0.1cm]
    \label{eq:thrmGapSampling_proof_descent1}
    & + \frac{\gamma^2}{2}\sum_{i=1}^n  \Cf^{(i)} \frac{g_i}{ g }.
    \end{align}\\[-0.3cm]
    The Cauchy-Schwarz inequality bounds the dot product between the vectors of curvature constants~$\bm{c} := \Cf^{(:)} := (\Cf^{(i)})_{i=1}^n$ and block gaps~$\bm{g}:= \bm{g}_{:}(\dualvarv^{(k)}) :=(g_i)_{i=1}^n$\\[-0.3cm]
    \begin{equation}
    \label{eq:thrmGapSampling_proof_CS}
    \sum_{i=1}^n  \Cf^{(i)} g_i
    \leq 
    \| \bm{c} \|_2 \;  \| \bm{g} \|_2 \,.
    \end{equation}\\[-0.2cm]
    Combining~\eqref{eq:thrmGapSampling_proof_CS} and the result of Lemma~\ref{lemma:L1L2} for the vectors of curvature constants and block gaps (with $\| \bm{g} \|_1 = g$ and $\| \bm{c} \|_1 = \CfTotal$), we can further bound~\eqref{eq:thrmGapSampling_proof_descent1}: 
    \begin{align}  
    \notag
    \E\big[ f(\dualvarv^{(k+1)})\,|\, \dualvarv^{(k)}\big]
    &\le f(\dualvarv^{(k)}) -  \frac{\gamma g}{n} \nonuniformity( \bm{g} )^2 \\[-0.1cm]
    \label{eq:thrmGapSampling_proof_descent2}
    & + \frac{\gamma^2}{2n} \nonuniformity( \bm{g} ) \, \nonuniformity( \bm{c} ) \, \CfTotal.
    \end{align}
    Subtracting the minimal function value $f(\dualvarv^*)$ from both sides of~\eqref{eq:thrmGapSampling_proof_descent2} and by using 
    $
    h(\dualvarv^{(k)}) := f(\dualvarv^{(k)}) - f(\dualvarv^*) \leq g
    $,
    we bound the conditional expectation of the suboptimality $h$ with\\[-0.4cm]
    \begin{align}
    \notag \E [ h(\dualvarv^{(k+1)}) \mid \dualvarv^{(k)}] 
    \le\;& 
    h(\dualvarv^{(k)}) - \frac{\stepsize}{n} \nonuniformity( \bm{g} )^2\, h(\dualvarv^{(k)}) 
    \\
    \label{eq:thrmGapSampling_proof_descent3}
    +\;&
    \frac{\stepsize^2}{2n} \nonuniformity( \bm{g} ) \, \nonuniformity( \bm{c} ) \,\CfTotal 
    \end{align}\\[-0.3cm]
    which is analogous to~\citep[Eq.~(20)]{lacosteJulien13bcfw}.
    In what follows, we use the modified induction technique of~\citep[Proof of Theorem~C.1]{lacosteJulien13bcfw}.
    
    By induction, we are going to prove the following upper bound on the unconditional expectation of the suboptimality $h$:\\[-0.2cm]
    \begin{equation}
    \label{eq:thrmGapSampling_proof_inductionAssumption}
    \E \big[ h(\dualvarv^{(k)}) \big] \leq \frac{2n C}{k+2n} , \quad \text{for\;$k\geq0$,}
    \end{equation}
    that corresponds to the statement of the theorem with~$C:= \CfTotal \nonuniformityTotal + h_0$.
    
    The basis of the induction $k=0$ follows immediately from the definition of~$C$, given that~$\CfTotal \geq 0$ and $\nonuniformityTotal > 0$.
    
    Consider the induction step.
    Assume that~\eqref{eq:thrmGapSampling_proof_inductionAssumption} is satisfied for~$k \geq 0$.  
    With a particular choice of step size~$\gamma:= \frac{2n}{\nonuniformity( \bm{g} )^2 (k+2n)} \in[0,1]$ (which does not depend on the picked~$i$), we rewrite the bound~\eqref{eq:thrmGapSampling_proof_descent3} on the conditional expectation as\\[-0.4cm]
    \begin{align}
    \notag \E [ h(\dualvarv^{(k+1)}) \mid \dualvarv^{(k)}] 
    \le\;& 
    \Bigl( 1 - \frac{2}{k+2n} \Bigr) \, h(\dualvarv^{(k)}) 
    \\
    \label{eq:thrmGapSampling_proof_inductionStep1}
    +\;&
    \frac{2n}{(k+2n)^2} \frac{ \nonuniformity( \bm{c} ) \,\CfTotal }{\nonuniformity( \bm{g} )^3} \, .
    \end{align}\\[-0.3cm]
    Taking the unconditional expectation of~\eqref{eq:thrmGapSampling_proof_inductionStep1}, then the induction assumption~\eqref{eq:thrmGapSampling_proof_inductionAssumption} and the definition of~$\nonuniformityTotal$ give us the deterministic inequality\\[-0.6cm]
    \begin{align}
    \notag \E [ h(\dualvarv^{(k+1)}) ] 
    \le\;& 
    \Bigl( 1 - \frac{2}{k+2n} \Bigr) \, \frac{2nC}{k+2n} 
    \\
    \label{eq:thrmGapSampling_proof_inductionStep2}
    +\;&
    \frac{2n}{(k+2n)^2} \nonuniformityTotal \,\CfTotal.
    \end{align}\\[-0.3cm]
    Bounding~$\nonuniformityTotal \,\CfTotal$ by~$C$ and rearranging the terms gives\\[-0.4cm]
    \begin{align}
    \notag \E [ h(\dualvarv^{(k+1)}) ]  
    \le&
    \frac{2nC}{k+2n} \left(1 - \frac{2}{k+2n} + \frac{1}{k+2n}\right)
    \\
    \notag=&
    \frac{2nC}{k+2n} \frac{k+2n-1}{k+2n}
    \\
    \notag\le&
    \frac{2nC}{k+2n} \frac{k+2n}{k+2n+1}
    \\
    \notag=&
    \frac{2nC}{(k+1)+2n} \ ,
    \end{align}\\[-0.3cm]
    which completes the induction proof.  
\end{proof}

\paragraph{Comparison with uniform sampling.} 
We now compare the rates obtained by Theorem~\ref{thm:convTheoremGapSampling} for BCFW with gap sampling and by Theorem~\ref{thm:convergence_FW_product} for BCFW with uniform sampling. The only difference is in the constants: Theorem~\ref{thm:convTheoremGapSampling} has~$\CfTotal \nonuniformityTotal$ and Theorem~\ref{thm:convergence_FW_product} has~$\CfTotal$. 

Recall that by definition\\[-0.4cm]
\begin{equation*}
\nonuniformityTotal = \max_k \;\E \Big[ \frac{\nonuniformity(\Cf^{(:)})}{\nonuniformity(\bm{g}_{:}(\dualvarv^{(k)}))^3} \Big]
\end{equation*}\\[-0.3cm]
In the best case for gap sampling, the curvature constants are uniform, $\nonuniformity(\Cf^{(:)})=1$, and the gaps are nonuniform~$\nonuniformity(\bm{g}_{:}(\dualvarv^{(k)})) \approx \sqrt{n}$. Thus, $\nonuniformityTotal \approx \frac{1}{n\sqrt{n}}$.

In the worst case for gap sampling, the curvature constants are very non-uniform, $\nonuniformity(\Cf^{(:)}) \approx \sqrt{n}$. The constant for gap sampling is still better if the gaps are non-uniform enough, i.e., $\nonuniformity(\bm{g}_{:}(\dualvarv^{(k)})) \geq n^{\frac{1}{6}}$. 

We note that to design a sampling scheme that always dominates uniform sampling (in terms of bounds at least), we would need to include the $\Cf^{(i)}$'s in the sampling scheme (as was essentially done by~\citet{Csiba15adaSDCA} for SDCA). Unfortunately, computing good estimates for $\Cf^{(i)}$'s is too expensive for structured SVM, thus motivating our simpler yet practically efficient scheme. See also the discussion after~\eqref{eq:expectedImprovement}.

\section{Proof of Theorem~\ref{thm:cacheTheorem} (convergence of BCFW with caching)\label{app:caching_theorem}}
\begin{reptheorem}{thm:cacheTheorem} \label{thm:convergence_cache_supp}
Let~$\tilde{\cacheN} := \frac1n\cacheN \leq 1$. 
Then, for each $k\ge 0$, the iterate $\dualvarv^{(k)}$ of %
Algorithm~\ref{alg:FW_product_SVM_caching} satisfies
$
\E\big[f(\dualvarv^{(k)})\big] - f(\dualvarv^*) \le \frac{2n}{\tilde{\cacheN}k+2n}\big(\frac{1}{\tilde{\cacheN}}\CfTotal+ h_0\big)
$
where $\dualvarv^*\in \domain$ is a solution of problem~(\ref{eq:svmstruct_nslack_dual}), $h_0 := f(\dualvarv^{(0)}) - f(\dualvarv^*)$ is the suboptimality at the starting point of the algorithm, $\CfTotal := \sum_{i=1}^n \Cf^{(i)}$ is the sum of the curvature constants (see Definition~\ref{def:curvature_const}) of $f$ with respect to the domains~$\domain^{(i)}$ of individual blocks.
The expectation is taken over the random choice of the sampled blocks at iterations $1,\dots,k$ of the algorithm.
\end{reptheorem}
\begin{proof}
The key observation of the proof consists in the fact that the combined oracle (the cache oracle in the case of a cache hit and the max oracle in the case of a cache miss) closely resembles an oracle with multiplicative approximation error~\citep[Eq.~(12) of Appendix~C]{lacosteJulien13bcfw}.

In the case of a cache hit, Definition~\ref{def:curvature_const} of curvature constant for any step size $\gamma \in [0,1]$ gives us
\begin{align*}
 &f(\dualvarv^{(k+1)}_\stepsize) := f(\dualvarv^{(k)}+\stepsize(\cachev_{[i]}-\dualvarv^{(k)}_{[i]})) \\
    \leq \; & f(\dualvarv^{(k)}) + \stepsize \langle \cachev_{(i)}\!-\!\dualvarv^{(k)}_{(i)}, \nabla_{(i)} f(\dualvarv^{(k)})\rangle + \frac{\stepsize^2}{2}\Cf^{(i)} \\
     = \;& f(\dualvarv^{(k)})- \stepsize \cachegap_i^{(k)} + \frac{\stepsize^2}{2}\Cf^{(i)} \\
     \leq \;& f(\dualvarv^{(k)}) - \gamma \tilde{\cacheN} g^{(k_0)} + \frac{\stepsize^2}{2}\Cf^{(i)}
\end{align*}
where the corner~$\cachev_{(i)} \in \domain^{(i)}$ and its zero-padded version~$\cachev_{[i]} \in \R^m$ are provided by the cache oracle, and $\tilde{\cacheN} = \frac1n \cacheN$ is the constant controlling the global part of the cache-hit criterion.
In the case of a cache miss, similarly to Lemma~\ref{lem:block_step_improvement}, we get
\begin{equation*}
    f(\dualvarv^{(k+1)}_\stepsize) \leq f(\dualvarv^{(k)}) - \stepsize g_i^{(k)} + \frac{\stepsize^2}{2}\Cf^{(i)} \ .
\end{equation*}

Combining the two cases we get 
\begin{equation}
\label{eq:cache_proof:block_descent}
    f(\dualvarv^{(k+1)}_\stepsize) \leq f(\dualvarv^{(k)}) - \stepsize \tilde{g}_i^{(k)} + \frac{\stepsize^2}{2}\Cf^{(i)}
\end{equation}
where
\[
\tilde{g}_i^{(k)}\! := [i \text{ is a cache miss}] g_i^{(k)}  +   [i \text{ is a cache hit}] \tilde{\cacheN} g^{(k_0)}.
\]

\begin{algorithm}[t!]
    \caption{Block-coordinate Frank-Wolfe (BCFW) algorithm with cache for structured SVM \label{alg:FW_product_SVM_caching}}%
    \begin{algorithmic}[1]
        \STATE Let $\weightv^{(0)}\!:=\!{\weightv_i}^{(0)}\!:=\!\0$; \; $\ell^{(0)}\!:=\!{\ell_i}^{(0)}\!:=\!0$; \; $\cache_i\!:=\!\{\outputvarv_i\}$;
        \STATE $g^{(0)}\!:=\!g_i^{(0)}\!=\!+\infty$
        \STATE $k_0\!:=\!k_i\!:=\!0$ ; \; \COMMENT{the last time $g$ / $g_i$ was computed}
        \FOR{$k:=0,\dots,\infty$}
        \STATE Pick $i$ at random in $\{1,\ldots,n\}$ \label{alg:FW_product_SVM_caching:randomSample}
        \COMMENT{either uniform or \\ \hfill with probability $\propto g_i^{(k_i)}$ for gap sampling}
        \STATE Solve $\outputvarv_i^c := \argmax_{\outputvarv\in\cache_i} \ H_i(\outputvarv;\weightv)$ 
        	\COMMENT{cache corner}
        \STATE Let $\weightv_{\cachev} := \frac{ \featuremapdiffv_i(\outputvarv_i^c)}{\regularizerweight n}$ \; 
        and \; $\ell_{\cachev}:=\frac1n L_i(\outputvarv_i^c)$
        \STATE Let $\cachegap_i^{(k)} := \regularizerweight(\weightv_i^{(k)}-\weightv_{\cachev})^\transpose \weightv^{(k)}-{\ell_i}^{(k)}+\ell_{\cachev}$
        \IF{$\cachegap_i^{(k)} \geq \max(\cacheF g_i^{(k_i)},\frac{\cacheN}{n} g^{(k_0)})$} \COMMENT{cache hit}
        \STATE $\weightv_{\sv}:= \weightv_{\cachev}$, $\ell_{\sv} := \ell_{\cachev}$, $\cachegap_i := \cachegap_i^{(k)}$
        \ELSE \COMMENT{cache miss}
        \STATE Solve $\outputvarv_i^* := \displaystyle\argmax_{\outputvarv\in\outputdomain_i} \ H_i(\outputvarv;\weightv^{(k)})$  \COMMENT{FW corner}
        \STATE Let $\weightv_{\sv} := \frac1{\regularizerweight n} \featuremapdiffv_i(\outputvarv_i^*)$ \;
        and\; $\ell_{\sv} := \frac1n \errorterm_i(\outputvarv_i^*)$
        \STATE Let $g_i^{(k)} :=  \regularizerweight (\weightv_i^{(k)}-\weightv_{\sv})^\transpose \weightv^{(k)} - \ell_i^{(k)} + \ell_{\sv}$\label{alg:FW_product_SVM_caching:block_gap}
        \STATE Set $k_i := k$, $\cachegap_i := g_i^{(k)}$
        \STATE Update ${\cache_i}\;:= {\cache_i} \cup \{\outputvarv_i^* \}$  
        \ENDIF 
        \STATE Let $\stepsize := \frac{ \cachegap_i }{ \regularizerweight \|\weightv_i^{(k)}-\weightv_{\sv}\|^2}$~and clip to $[0,1]$ \label{alg:FW_product_SVM_caching:line_search}
        \STATE Update ${\weightv_i}^{(k+1)}:= (1-\stepsize){\weightv_i}^{(k)}+\stepsize \,\weightv_{\sv}$
        \STATE {\small~~~~~~~and~ ${\ell_i}^{(k+1)}:= (1-\stepsize){\ell_i}^{(k)}+\stepsize\, \ell_{\sv}$}
        \STATE Update $\weightv^{(k+1)}\;:= \weightv^{(k)} + {\weightv_i}^{(k+1)} - {\weightv_i}^{(k)}$
        \STATE {\small~~~~~~~and~~ $\ell^{(k+1)}:= ~\ell^{(k)}+{\ell_i}^{(k+1)} \ \  - {\ell_i}^{(k)}$}
        \IF {update global gap}
        \STATE Let $g^{(k_0)} := 0$, $k_0 := k+1$
        \FOR{$i:=1,\dots,n$} 
        \STATE Solve $\outputvarv_i^* := \displaystyle\argmax_{\outputvarv\in\outputdomain_i} \ H_i(\outputvarv;\weightv^{(k_0)})$ 
        \STATE Let $\weightv_{\sv} := \frac1{\regularizerweight n} \featuremapdiffv_i(\outputvarv_i^*)$ \;
        and \; $\ell_{\sv} := \frac1n \errorterm_i(\outputvarv_i^*)$
        \STATE $g^{(k_0)} +\!\!= \regularizerweight (\weightv_i^{(k_0)}-\weightv_{\sv})^\transpose \weightv^{(k_0)} - \ell_i^{(k_0)} + \ell_{\sv}$
        \STATE Set $k_i:=k_0$
        \ENDFOR
        \ENDIF
        \ENDFOR
    \end{algorithmic}
\end{algorithm}

Subtracting~$f(\dualvarv^*)$ from both sides of~\eqref{eq:cache_proof:block_descent} and taking the expectation of~\eqref{eq:cache_proof:block_descent} w.r.t.\ the block index~$i$ we get
\begin{equation}
\label{eq:cache_proof:block_descent_expected_h}
    \E\big[ h(\dualvarv^{(k+1)}_\stepsize)\,|\, \dualvarv^{(k)}\big] \leq h(\dualvarv^{(k)}) - \frac{\stepsize}{n} \tilde{g}^{(k)} + \frac{\stepsize^2}{2n}\CfTotal
\end{equation}
where~$h(\dualvarv):=f(\dualvarv)-f(\dualvarv^*)$ is the suboptimality of the function~$f$ and $\tilde{g}^{(k)} := \sum_{i=1}^n\tilde{g}_i^{(k)}$.
We know that the duality gap upper-bounds the suboptimality, i.e., $g(\dualvarv)\geq h(\dualvarv)$, and that cache miss steps, as well as cache hit steps, always decrease suboptimality, i.e., $h(\dualvarv^{(k)}) \leq h(\dualvarv^{(k_0)})$. 

If at iteration~$k$ there is at least one cache hit, then we can bound the quantity~$\tilde{g}^{(k)}$ from below:\\[-2mm]
\begin{equation}
\label{eq:cache_proof:case_one_cache_hit}
\tilde{g}^{(k)} \geq \tilde{\cacheN} g(\dualvarv^{(k_0)}) \geq \tilde{\cacheN} h(\dualvarv^{(k_0)}) \geq \tilde{\cacheN} h(\dualvarv^{(k)}).
\end{equation}
In the case of no cache hits, we have\\[-2mm]
\[
\tilde{g}^{(k)} = g(\dualvarv^{(k)}) \geq h(\dualvarv^{(k)}) \geq \tilde{\cacheN} h(\dualvarv^{(k)})
\]
where the last inequality holds because~$\tilde{\cacheN} \leq 1$. Applying the lower bound on~$\tilde{g}^{(k)}$ to~\eqref{eq:cache_proof:block_descent_expected_h}, we get
\begin{equation}
\begin{aligned}
\label{eq:cache_proof:block_descent_expected_h_bounded}
    \E\big[ h({ \dualvarv^{(k+1)}_\stepsize})\,|\, { \dualvarv^{(k)}}\big] &\leq h(\dualvarv^{(k)}) - \frac{\stepsize\tilde{\cacheN}}{n} h(\dualvarv^{(k)}) \\ &\quad+ \frac{\stepsize^2}{2n}\CfTotal.
\end{aligned}
\end{equation}

\begin{algorithm}[t!]
    \caption{{\sc init-reg-path}: Initialization of the regularization path for structured SVM}
    \label{alg:rpInitSSVM}
    \begin{algorithmic}[1]
        \INPUT $\factorRegPath$, tolerance $\epsilon$ %
        \STATE  $\weightv := \weightv_i := 0$; \; $\ell := \ell_i := 0$; \; $\tilde{\featuremapdiffv} := 0$
        \FOR{$i := 1, \dots , n$}
        \STATE $\tilde{\outputvarv}_i := \argmax_{\outputvarv \in \outputdomain_i} H_i (\outputvarv; \0)$
        \STATE $\ell_i := \frac1n \errorterm(\outputvarv_i, \tilde{\outputvarv}_i)$
        \STATE $\ell := \ell + \ell_i$
        \STATE $\tilde{\featuremapdiffv} := \tilde{\featuremapdiffv} + \frac1n \featuremapdiffv(\tilde{\outputvarv}_i)$
        \ENDFOR
        \FOR{$i := 1, \dots, n$}
        \STATE $\theta_i := \max_{\outputvarv \in \mathcal{Y}_i}  \big( - \tilde{\featuremapdiffv}^{\transpose} \featuremapdiffv(\outputvarv) \big)$
        \ENDFOR
        \STATE Let $\regularizerweight^\infty := \frac{1}{\factorRegPath\epsilon}\bigg( \|\tilde{\featuremapdiffv}\|^2 + \frac1n \sum_{i=1}^n \theta_i \bigg)$
        \STATE Let $\weightv := \frac{1}{\regularizerweight^\infty} \tilde{\featuremapdiffv}$; \; $\weightv_i := \frac{1}{n \regularizerweight^\infty} \featuremapdiffv(\tilde{\outputvarv}_i)$
        \STATE \textbf{for} $i:=1,\dots,n$ \;\textbf{do}\; $g_i := \frac{1}{\numsamples \regularizerweight^\infty}\theta_i + \regularizerweight^\infty \weightv_i^{\transpose} \weightv$
        \FOR{$i:=1,\dots,n$} \COMMENT{optional} \label{alg:rpInitSSVM:activeSetBegin}
        \STATE $\activeS_i := \{\tilde{\outputvarv}_i\}$ 
        \STATE $\dualvar_i(\tilde{\outputvarv}_i):=1$ and $\dualvar_i(\outputvarv):=0$ for $\outputvarv \neq \tilde{\outputvarv}_i$
        \ENDFOR \label{alg:rpInitSSVM:activeSetEnd}
        \STATE \textbf{return} $\weightv$, $\weightv_i$, $\ell$, $\ell_i$, $g_i$, $\regularizerweight^\infty$, $\activeS_i$, $\dualvarv$
    \end{algorithmic}
\end{algorithm}

\begin{algorithm}[t!]
    \caption{Regularization path for structured SVM}
    \label{alg:rpAlgorithmSSVM}
    \begin{algorithmic}[1]
        \INPUT $\factorRegPath$, tolerance $\epsilon$, $\regularizerweight_{min}$
        \STATE Initialize regularization path using Algorithm~\ref{alg:rpInitSSVM}. $\{\weightv^0\!\!, \weightv_i^0, \ell^0\!\!, \ell_i^0, g_i^0, \regularizerweight^0\!\!, \activeS_i^0, \dualvarv^0 \}$ \!:=\! {\sc init-reg-path} $(\factorRegPath, \epsilon)$
        \STATE $J := 0$
        \REPEAT
        \STATE \textbf{For}\; $i := 1,\dots,n$ \;\textbf{do}\; $\delta_i := \ell_i^J -\regularizerweight^J \langle\weightv^J, \weightv_i^J\rangle $
        \STATE Compute excess gap\; $\tau := \epsilon - \sum_{i=1}^n g_i^J$
        \STATE Let\; $\Delta := \sum_i^{n} \delta_i$
        \IF{$\Delta \leq  \tau$}
        \STATE Let $\regPathStep := 1 - \frac{\tau}{\Delta}$
        \ELSE
        \STATE $\weightv^J$ is $\epsilon$-approximate for any $\regularizerweight < \regularizerweight^J$
        \STATE \textbf{return}\quad$\{\regularizerweight^j\}_{j=0}^J$, $\{\weightv^j\}_{j=0}^J$
        \ENDIF
        \STATE Let\; $\regularizerweight^{J+1} := \regPathStep\regularizerweight^{J}$,\; $\ell^{J+1} := \regPathStep\ell^{J}$,\; 
        \FOR{$i:=1,\dots,n$} \COMMENT{update gaps using~\eqref{eq:ssvm_rp_gapnew}}
        \STATE Let\; $g^{J+1}_i := g^{J}_i + (1 - \regPathStep ) \delta_i$ \; and \; $\ell_i^{J+1} := \regPathStep\ell_i^{J}$
        \ENDFOR
        \FOR{$i:=1,\dots,n$} \COMMENT{optional: update duals} \label{alg:rpAlgorithmSSVM:activeSetBegin}
            \STATE $\activeS_i^{J+1} := \activeS_i^{J} \cup \{\outputvarv_i\}$
            \STATE $\dualvar_i^{J+1}(\outputvarv) := \regPathStep \dualvar_i^{J}(\outputvarv)$ for  $\outputvarv \in \activeS_i^{J+1} \setminus \{\outputvarv_i\}$
            \STATE $\dualvar_i^{J+1}(\outputvarv_i):=1-\sum_{\outputvarv \in \activeS_i^{J+1}\setminus \{\outputvarv_i\}} \dualvar_i^{J+1}(\outputvarv)$
        \ENDFOR \label{alg:rpAlgorithmSSVM:activeSetEnd}
        \STATE Run SSVM-optimizer with tolerance $\factorRegPath\,\epsilon$ \label{alg:rpAlgorithmSSVM:SSVM} \\ \quad\; to update $\weightv^{J+1}$\!\!, $\weightv_i^{J+1}$\!\!, $\ell^{J+1}$\!\!, $\ell_i^{J+1}$\!\!, $g_i^{J+1}$\!\!, $\activeS_i^{J+1}$\!\!, $\dualvarv^{J+1}$\!\! 
        \COMMENT{to have $\epsilon$-appr. path, gaps $g_i^{J+1}$ have to be exact}
        \STATE $J := J+1$
        \UNTIL{$\regularizerweight^{J+1} < \regularizerweight_{min}$}
        \STATE \textbf{return}\quad$\{\regularizerweight^j\}_{j=0}^J$, $\{\weightv^j\}_{j=0}^J$
    \end{algorithmic}
\end{algorithm}

Inequality~\eqref{eq:cache_proof:block_descent_expected_h_bounded} is identical to the inequality~\citep[Eq.~(20)]{lacosteJulien13bcfw} in the proof of convergence of BCFW with a multiplicative approximation error in the oracle.
We recopy their argument below for reference to finish the proof.
First, we take the expectation of~\eqref{eq:cache_proof:block_descent_expected_h_bounded} w.r.t.\ the choice of previous blocks:
\begin{equation}
\label{eq:cache_proof:block_descent_expected_h_bounded2}
    \E\big[ h(\dualvarv^{(k+1)}_\stepsize)] \leq (1-\frac{\stepsize\tilde{\cacheN}}{n})\E\big[h(\dualvarv^{(k)})]+ \frac{\stepsize^2}{2n}\CfTotal.
\end{equation}

Following the proof of Theorem~C.1 in~\citet{lacosteJulien13bcfw}, we prove the bound of Theorem~\ref{thm:cacheTheorem} by induction. The induction hypothesis consists in inequality
\begin{equation*}
\E\big[h(\dualvarv^{(k)})] \le \frac{2nC}{\tilde{\cacheN}k+2n} \, \text{ for } k\geq 0
\end{equation*}
where $C:=\big(\frac{1}{\tilde{\cacheN}}\CfTotal+ h_0\big)$. 

The base-case $k=0$ follows directly from $C\geq h_0$.
We now prove the induction step. Assume that the hypothesis is true for a given $k\geq 0$. 
Let us now prove that the hypothesis is true for $k+1$.
We use inequality~\eqref{eq:cache_proof:block_descent_expected_h_bounded} with the step size $\stepsize_k:=\frac{2n}{\tilde{\mapprox} k+2n} \in[0,1]$:
\begin{align*}
\E\big[ h(\dualvarv^{(k+1)}_{\stepsize_k})] \le&\: (1- \frac{\stepsize_k\tilde{\mapprox}}{n}) \E\big[h(\dualvarv^{(k)})] + (\stepsize_k)^2 \frac{C\tilde{\mapprox}}{2n} \\[3pt]
=& \: (1-\frac{2\tilde{\mapprox}}{\tilde{\mapprox} k+2n}) \E\big[h(\dualvarv^{(k)})] + (\frac{2n}{\tilde{\mapprox} k+2n})^2 \frac{C\tilde{\mapprox}}{2n}  \\[3pt]
\le& \: (1-\frac{2\tilde{\mapprox}}{\tilde{\mapprox} k+2n}) \frac{2n C}{\tilde{\mapprox} k+2n} + (\frac{1}{\tilde{\mapprox} k+2n})^2 2nC\tilde{\mapprox}
\end{align*}
where, in the first line, we use inequality $\CfTotal \le C \tilde{\mapprox}$, and, in the last line, we use the induction hypothesis for $\E\big[h(\dualvarv^{(k)})]$.

By rearranging the terms, we have
\begin{align*}
    \E\big[ h(\dualvarv^{(k+1)})] \le& \frac{2nC}{\tilde{\mapprox} k+2n} \left(1 - \frac{2\tilde{\mapprox}}{\tilde{\mapprox} k+2n} + \frac{\tilde{\mapprox}}{\tilde{\mapprox} k+2n}\right) \\[3pt]
                      =& \frac{2nC}{\tilde{\mapprox} k+2n} \frac{\tilde{\mapprox} k+2n-\tilde{\mapprox}}{\tilde{\mapprox} k+2n} \\[3pt]
                     \le& \frac{2nC}{\tilde{\mapprox} k+2n} \frac{\tilde{\mapprox} k+2n}{\tilde{\mapprox} k+2n+\tilde{\mapprox}} \\[3pt]
                      =& \frac{2nC}{\tilde{\mapprox} (k+1)+2n} \ ,
\end{align*}  
which finishes the proof.
\end{proof}

\section{Convergence of BCPFW and BCAFW} \label{app:BCPFWconvergence}

In this section, we prove Theorem~\ref{thm:BCPFW} that states that the suboptimality error on~\eqref{eq:svmstruct_nslack_dual} decreases geometrically in expectation for BCPFW and BCAFW for the iterates at which \emph{no} block would have a drop step, i.e., when no atom would be removed from the active sets. We follow closely the notation and the results from~\citet{LacosteJulien2015linearFW} where the global linear convergence of the (batch) pairwise FW (PFW) and away-step FW (AFW) algorithms was shown. The main insight to get our result is that the ``pairwise FW gap'' decomposes also as a sum of block gaps.
We give our result for the following more general setting (the block-separable analog of the setup in Appendix~F of~\citet{LacosteJulien2015linearFW}):
\begin{equation}
\begin{aligned} \label{eq:blockPFWproblem}
\min_{\bx \in \domain} \, f(\bx) \quad 
& \text{with} \quad f(\bx) := q(A \bx) + \bv^\top \bx \\
& \text{and} \quad \domain=\domain^{(1)} \times\dots\times \domain^{(n)},
\end{aligned}
\end{equation}
where $q$ is a strongly convex function, and $\domain^{(i)} := \conv(\atoms^{(i)})$ for each $i$, where $\atoms^{(i)} \subseteq \R^{m_i}$ is a \emph{finite} set of vectors (called \emph{atoms}). In other words, each $\domain^{(i)}$ is a polytope. For the example of the dual SSVM objective~\eqref{eq:svmstruct_nslack_dual}, $q(\cdot) := \frac{\regularizerweight}{2} \| \cdot \|^2$ and $\atoms^{(i)}$ are the corners of a probability simplex in $m_i := |\outputdomain_i|$ dimensions.

Suppose that we maintain an active set $\activeS_i$ for each block (as in the BCPFW algorithm). We first relate the batch PFW direction with the block PFW directions, as well as their respective batch and blockwise \emph{PFW gaps} (the PFW gap is replacing the FW gap~\eqref{eq:duality_gap} in the analysis of PFW).

\begin{definition}[Block PFW gaps] \label{def:blockPFW}
Consider the problem~\eqref{eq:blockPFWproblem} and suppose that the point $\bx$ has each
of its block $\bx_{(i)}$ with current active set $\activeS_i \subseteq \atoms^{(i)}$.\footnote{That is, $\bx_{(i)}$ is a convex combination of \emph{all} the elements of $\activeS^{(i)}$ with non-zero coefficients.}
We define the corresponding \emph{batch PFW gap} at~$\bx$ with active set $\activeS := \activeS_1 \times \cdots \times \activeS_n$ as:
\begin{align}
g^{\PFW}(\bx ; \activeS) &:= \max_{\sv \in \domain, \vv \in \activeS}  \,\langle -\nabla f(\bx) \,  , \, \sv - \vv  \rangle \label{eq:batchPFW} \\
&= \max_{\sv \in \domain, \vv \in \activeS}  \sum_i \langle  -\nabla_{(i)} f(\bx) \, , \, \sv_{(i)} - \vv_{(i)} \rangle \notag \\
&= \sum_i  \underbrace{\max_{\substack{ \sv_{(i)} \in \domain^{(i)} \\ \vv_{(i)} \in \activeS_i } } \langle   -\nabla_{(i)} f(\bx) \, , \, \sv_{(i)} - \vv_{(i)}\rangle} \notag \\
&=: \qquad   \sum_i \qquad g_i^\PFW(\bx \, ; \, \activeS_i) , \label{eq:blockPFW} 
\end{align}
where $g_i^\PFW$ is the PFW gap for block~$i$. We recognize that the maximizing arguments for $g_i^\PFW$ are the FW corner $\sv_{(i)}$ and the away corner $\vv_{(i)}$ for block $i$ that one would obtain when running BCPFW on this block.
\end{definition}

We note that by maintaining independent active sets $\activeS_i$ for each block, the number of potential away corner combinations is exponential in the number of blocks, yielding many more possible directions of movement than in the batch PFW algorithm where the number of away corners is bounded by the number of iterations. Moreover, suppose that we have an explicit expansion for each block $\bx_{(i)}$ as a convex combination of atoms in the active set: $\bx_{(i)} = \sum_{\vv_{(i)} \in \activeS_i} \beta_i(\vv_{(i)}) \, \vv_{(i)}$, where $\beta_i(\vv_{(i)}) > 0$ is the convex combination coefficient associated with atom~$\vv_{(i)}$. Then we can also express $\bx$ as an explicit convex combination of the (exponential size) active set $\activeS$ as follows:
$\bx = \sum_{\vv \in \activeS} \beta(\vv) \, \vv$, where $\beta(\vv) := \prod_{i=1}^n \beta_i(\vv_{(i)})$.

We can now prove an analog of the expected block descent lemma (Lemma~\ref{lemma:convGapSampling:expectedDescent} for BCFW) in the case of BCPFW and BCAFW\@. %
For technical reasons, we need a slightly different block curvature constant~$\CfAi$ (cf.~Eq.~(26) in~\citet{LacosteJulien2015linearFW}).

\begin{lemma}[Expected BCPFW descent lemma] \label{lem:BCPFWdescent}
Consider running the BCPFW algorithm on problem~\eqref{eq:blockPFWproblem}. Let~$\bx^{(k)}$ be the current iterate, and suppose that $\activeS_j$ is the current active set for each block $\bx_{(j)}^{(k)}$. Let $\activeS^{(k)} := \activeS_1 \times \cdots \times \activeS_n$ be the current (implicit) active set for $\bx^{(k)}$. Suppose that \emph{there is no drop set at~$\bx^{(k)}$}, that is, that for each possible block~$i$ that could be picked at this stage, the PFW step with line-search on block~$i$ will not have its step size truncated (we say that the line-search \emph{will succeed}). Then, conditioned on the current state,
in expectation over the random choice of block~$i$ with uniform probability and for any $\stepsize \in [0,1]$, it holds for the next iterate 
$\bx^{(k+1)}$ of BCPFW:
    \begin{multline}   \label{eq:BCPFWlemma}
    \E\big[ f(\bx^{(k+1)})\,|\, \bx^{(k)}, \activeS^{(k)} \big]
    \le \\ 
    f(\bx^{(k)}) -  \frac{\stepsize}{n} g^\PFW(\bx^{(k)} \,;\, \activeS^{(k)})
      + \frac{\stepsize^2}{2n} \CfATotal,
    \end{multline}
where $\CfATotal := \sum_{i=1}^n \CfAi$ is the total (away) curvature constant, and where $\CfAi$ is defined as in Definition~\ref{def:curvature_const}, but allowing the reference point~$\dualvarv_{(i)}$ in~\eqref{eqn:CfProduct} to be any point $\vv_{(i)} \in \domain^{(i)}$ instead, thus allowing a pairwise FW direction $\sv_{[i]} - \vv_{[i]}$ to be used in its definition.

Moreover,~\eqref{eq:BCPFWlemma} also holds for BCAFW (again under the assumption of \emph{no drop step}), but with an extra $\nicefrac{1}{2}$ factor in front of $g^\PFW(\bx^{(k)} \,;\, \activeS^{(k)})$ in the bound.
\end{lemma}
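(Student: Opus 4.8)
The plan is to prove a per-block descent inequality valid for every block $i$ that could be drawn, and then average it over the uniform choice of block, collapsing the sum with the additive decomposition $g^\PFW=\sum_i g_i^\PFW$ of Definition~\ref{def:blockPFW}. The structure mirrors the BCFW case (Lemma~\ref{lem:block_step_improvement} and Lemma~\ref{lemma:convGapSampling:expectedDescent}); the only genuinely new ingredient is handling the truncated line search of the pairwise/away steps.

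First I would fix the drawn block $i$. The BCPFW update moves $\bx^{(k)}$ along the pairwise direction $\sv_{[i]}-\vv_{[i]}$, where $\sv_{(i)}$ and $\vv_{(i)}$ are the FW and away corners realizing $g_i^\PFW(\bx^{(k)};\activeS_i)$. Applying the pairwise curvature constant $\CfAi$ (Definition~\ref{def:curvature_const} with the reference point taken to be $\vv_{(i)}$ so that the admissible direction is $\sv_{[i]}-\vv_{[i]}$) gives, for every $\stepsize\in[0,1]$,
\[
f(\bx^{(k)}+\stepsize(\sv_{[i]}-\vv_{[i]})) \le f(\bx^{(k)}) + \stepsize\langle \nabla_{(i)} f(\bx^{(k)}),\, \sv_{(i)}-\vv_{(i)}\rangle + \tfrac{\stepsize^2}{2}\CfAi,
\]
and since $\langle \nabla_{(i)} f(\bx^{(k)}),\,\sv_{(i)}-\vv_{(i)}\rangle = -g_i^\PFW(\bx^{(k)};\activeS_i)$ by Definition~\ref{def:blockPFW}, the right-hand side equals $f(\bx^{(k)}) - \stepsize\, g_i^\PFW + \tfrac{\stepsize^2}{2}\CfAi$.

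The crux --- and the step I expect to be the main obstacle --- is to show that the line-search iterate $\bx^{(k+1)}$ obeys $f(\bx^{(k+1)})\le f(\bx^{(k)}+\stepsize(\sv_{[i]}-\vv_{[i]}))$ for \emph{all} $\stepsize\in[0,1]$, and not merely for the step size actually returned, even though the line search is constrained to the subinterval $[0,\beta_i(\vv_{(i)})]$ that may be strictly contained in $[0,1]$. Here I would use convexity of the univariate map $\phi(\stepsize):=f(\bx^{(k)}+\stepsize(\sv_{[i]}-\vv_{[i]}))$: the no-drop-step hypothesis means the minimizer of $\phi$ over $[0,\beta_i(\vv_{(i)})]$ is not truncated at the right endpoint, and combined with $\phi'(0)=-g_i^\PFW\le 0$ this makes the minimizer the global minimizer of the convex $\phi$. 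Therefore $f(\bx^{(k+1)})=\min\phi\le\phi(\stepsize)$ for every $\stepsize\ge 0$, so combining with the curvature bound yields the per-block inequality for all $\stepsize\in[0,1]$.

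Finally I would take the conditional expectation over the uniform draw of $i$: averaging the per-block bounds with weights $1/n$ and substituting $\sum_i g_i^\PFW(\bx^{(k)};\activeS_i)=g^\PFW(\bx^{(k)};\activeS^{(k)})$ and $\sum_i\CfAi=\CfATotal$ gives exactly~\eqref{eq:BCPFWlemma}. For BCAFW the same argument applies after noting $g_i^\PFW = g_i^{FW}+g_i^A$, so the larger-gap step chosen by the algorithm satisfies $\max(g_i^{FW},g_i^A)\ge\tfrac12 g_i^\PFW$; running the identical curvature-plus-no-drop-step argument on the selected FW or away direction (both covered by $\CfAi$) replaces $g_i^\PFW$ by $\tfrac12 g_i^\PFW$ in the per-block bound, producing the advertised extra factor of $\tfrac12$ after averaging.
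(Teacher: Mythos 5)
Your proposal is correct and takes essentially the same route as the paper's proof: a per-block curvature bound using $\CfAi$ along the pairwise direction, the key observation that the no-drop-step assumption plus convexity of the line-search objective makes the truncated line-search minimum equal to the unconstrained one (so the bound holds for every $\stepsize \in [0,1]$), followed by averaging over the uniform block choice via $g^\PFW = \sum_i g_i^\PFW$, and for BCAFW the chosen direction capturing at least half of the block PFW gap. No gaps to report.
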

\begin{proof}
Let block~$i$ be the chosen one that defined $\bx^{(k+1)}$ and let $\bx_\stepsize := \bx^{(k)}+\stepsize(\sv_{[i]}-\vv_{[i]})$, where $\mathbf{s}_{(i)} \in \domain^{(i)}$ is the FW corner on block~$i$ with~$\mathbf{s}_{[i]} \in \R^m$ its zero-padded version, and similarly $\vv_{(i)} \in \activeS_i$ is the chosen away corner on block~$i$. By assumption, we have that the line-search succeeds, i.e., the minimum of $\min_{\stepsize \in [0,\stepmax]} f(\bx_\stepsize)$ is achieved for $\stepsize^* < \stepmax$, where $\stepmax$ is the maximum step size for this block for the PFW direction (this is because the optimal step size for the line-search cannot be truncated at $\stepmax$, as otherwise it would be a drop step). As $f$ is a convex function, this means that $f(\bx^{(k+1)}) = \min_{\stepsize \in [0,\stepmax]} f(\bx_\stepsize) = \min_{\stepsize \geq 0}  f(\bx_\stepsize)$ (removing inactive constraints does not change its minimum). By definition of~$\CfAi$, we thus have for any $\stepsize \in [0,1]$:
\begin{align}
f({\scriptstyle \bx^{(k+1)}}) &\leq  f(\bx_\stepsize) \notag \\
&= f({\scriptstyle \bx^{(k)}} + \gamma (\sv_{[i]} - \vv_{[i]})) \notag \\
& \leq f({\scriptstyle \bx^{(k)}})\! +\! \gamma \langle  \nabla_{(i)} f(\bx) , \, \sv_{(i)} \!\! -\! \vv_{(i)}\rangle \!+\! \frac{\gamma^2}{2}\CfAi \notag \\
&=  f({\scriptstyle \bx^{(k)}}) - \gamma \, g_i^\PFW({\scriptstyle \bx^{(k)}} ;  \activeS_i) + \frac{\gamma^2}{2}\CfAi \!\!\!\!\!\!. \label{eq:blockDescentPFW}
\end{align}
Taking the expectation of the bound with respect to~$i$, conditioned on~$\bx^{(k)}$ and~$\activeS^{(k)}$, yields~\eqref{eq:BCPFWlemma} by using the block-decomposition relationship~\eqref{eq:blockPFW} in the definition of~$g^\PFW(\bx^{(k)} \,;\, \activeS^{(k)})$. This completes the proof for BCPFW.

In the case of BCAFW, let $\dd_i$ be the chosen direction for block~$i$ (either a FW direction or an away direction). Then since $\dd_i$ is chosen to maximize the inner product with $-\nabla_{(i)} f(\bx^{(k)})$, we have
$\langle -\nabla_{(i)} f(\bx^{(k)}), \dd_i \rangle \geq \frac{1}{2} g_i^\PFW(\bx^{(k)} \, ; \, \activeS_i)$ (with a similar argument as used to get Eq.~(6) in~\citet{LacosteJulien2015linearFW} for~AFW). We then follow the same argument to derive~\eqref{eq:blockDescentPFW}, but using $\dd_i$ instead of $(\sv_{(i)} \!\! -\! \vv_{(i)})$, which gives an extra $\nicefrac{1}{2}$ factor as $\langle -\nabla_{(i)} f(\bx^{(k)}), \dd_i \rangle$ is potentially only half of $g_i^\PFW(\bx^{(k)} \, ; \, \activeS_i)$. Taking again the expectation of~\eqref{eq:blockDescentPFW} completes the proof.
\end{proof}

\begin{remark}
The important condition that there is \emph{no drop step} at $\bx^{(k)}$ in the BCPFW descent lemma~\ref{lem:BCPFWdescent} is to allow the bound~\eqref{eq:blockDescentPFW} to hold for \emph{any} $\stepsize \in [0,1]$. Otherwise, let $I$ be the (non-empty) set of blocks for which there would be a drop step at $\bx^{(k)}$ and let $\stepsize_I := \min_{i \in I}  \stepmax^{(i)}$, where $\stepmax^{(i)}$ is the maximum step size for block~$i$. Then in this case we could only show the bound~\eqref{eq:blockDescentPFW} for $\stepsize \leq \stepsize_I$. But $\stepsize_I$ could be arbitrarily small,\footnote{Small maximum step sizes happen when the current coordinate value for an away corner is small (perhaps because a small step size was used by the line-search when they were added as a FW corner previously).} and so no significant progress is guaranteed in expectation in this case. 

We also note that $\CfAi$ is used instead of $\Cf^{(i)}$ in the lemma because $\Cf^{(i)}$ can only be used with a feasible step from $\bx^{(k)}$, and thus again, the bound would only be valid for $\stepsize \leq \stepmax$ (as bigger step sizes can take you outside of $\domain^{(i)}$). If the gradient of $f$ is Lipschitz continuous, one can bound $\CfAi \leq \tilde{L}_i \big(\diam_{\|\cdot\|_i} \domain^{(i)} \big)^2$, which is almost the same bound as for~$\Cf^{(i)}$ explained in footnote~\ref{foot:Cfi}, but with $\tilde{L}_i$ being the Lipschitz constant of $\nabla_{(i)} f$ for variations in the slightly extended domain $\domain^{(i)} + (\domain^{(i)} - \domain^{(i)})$ (with set addition in the Minkowski sense).
\end{remark}

\begin{theorem}[Geometric convergence of BCPFW] \label{thm:BCPFW}
Consider running BCPFW (or BCAFW) on problem~\eqref{eq:blockPFWproblem} where $q$ is a strongly convex function and $\domain$ is a block-separable polytope. Let $h_k := f(\bx^{(k)}) - f(\bx^*)$ be the suboptimality of the iterate~$k$, where $\bx^*$ is any optimal solution to~\eqref{eq:blockPFWproblem}. Conditioned on any iterate $\bx^{(k)}$ with active set $\activeS^{(k)}$ such that \emph{no block could give a drop set} (as defined in the conditions for Lemma~\ref{lem:BCPFWdescent}), then the expected new suboptimality decreases geometrically, that is:
\begin{equation} \label{eq:BCPFWgeometric}
\E\big[ h_{k+1} \,|\, \bx^{(k)}, \activeS^{(k)} \big] \leq (1-\rho) h_k , 
\end{equation}
with rate:
\begin{align}
\rho &:= \frac{1}{2n} \min \{1, 2 \frac{\strongConvGeneralized}{\CfATotal} \} \,\, \text{for the BCPFW algorithm}, \label{eq:BCPFWrate} \\ 
\rho &:= \frac{1}{4n} \min \{1, \frac{\strongConvGeneralized}{\CfATotal} \} \,\,\,\, \text{for the BCAFW algorithm}, \label{eq:BCAFWrate} 
\end{align}
where $\CfATotal := \sum_{i=1}^n \CfAi$ is the total (away) curvature constant for problem~\eqref{eq:blockPFWproblem} as defined in Lemma~\ref{lem:BCPFWdescent}, and $\strongConvGeneralized$ is the generalized strong convexity constant for problem~\eqref{eq:blockPFWproblem} as defined in~Eq.~(39) of~\citet{LacosteJulien2015linearFW} ($\strongConvGeneralized$ is strictly greater than zero when $q$ is strongly convex and $\domain$ is a polytope).
\end{theorem}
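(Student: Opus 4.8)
The plan is to reduce the analysis to the step-size optimization already carried out for batch PFW/AFW in \citet{LacosteJulien2015linearFW}, but with every progress term scaled by $1/n$ coming from the expected block descent Lemma~\ref{lem:BCPFWdescent}. First I would abbreviate $g := g^\PFW(\bx^{(k)};\activeS^{(k)})$ and $C := \CfATotal$, and record the two facts about the batch PFW gap on the product polytope that drive the argument: the PFW gap dominates the FW gap, which in turn lower bounds the suboptimality, so that $g \ge h_k$; and, from the generalized strong convexity of $f$ over $\domain$, the quadratic bound $g^2 \ge 2\strongConvGeneralized\, h_k$. The point to justify carefully is that both inequalities hold with the \emph{product} active set $\activeS^{(k)} = \activeS_1 \times \cdots \times \activeS_n$: since $\bx^{(k)}$ is a proper convex combination of the atoms of $\activeS^{(k)}$ with positive weights $\beta(\vv) = \prod_i \beta_i(\vv_{(i)})$ (as noted after Definition~\ref{def:blockPFW}), and $\activeS^{(k)}$ is a subset of the vertices of the product polytope $\domain$, the geometric results of \citet{LacosteJulien2015linearFW} apply to $\domain$ with this active set.

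Next I would take the expected descent bound of Lemma~\ref{lem:BCPFWdescent}, which for BCPFW reads $\E[h_{k+1}\mid \bx^{(k)},\activeS^{(k)}] \le h_k - \frac{1}{n}(\stepsize g - \tfrac{\stepsize^2}{2}C)$, valid for every $\stepsize\in[0,1]$ precisely because we condition on the no-drop-step event. I would then maximize $\stepsize g - \tfrac{\stepsize^2}{2}C$ over $[0,1]$ by the usual two-case split. When $g \le C$, the unconstrained optimum $\stepsize = g/C$ is feasible and yields improvement $\tfrac{g^2}{2C}\ge \tfrac{\strongConvGeneralized}{C}h_k$, giving $\E[h_{k+1}]\le(1-\tfrac{\strongConvGeneralized}{nC})h_k$. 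When $g > C$, the optimum is at $\stepsize=1$ with improvement $g-\tfrac{C}{2}\ge \tfrac{g}{2}\ge\tfrac{h_k}{2}$, giving $\E[h_{k+1}]\le(1-\tfrac{1}{2n})h_k$. Taking the worse of the two cases yields $\rho = \tfrac{1}{2n}\min\{1, 2\strongConvGeneralized/C\}$, the claimed BCPFW rate.

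For BCAFW the same computation goes through with the extra $\tfrac12$ factor that Lemma~\ref{lem:BCPFWdescent} puts in front of $g$: maximizing $\tfrac{\stepsize}{2}g - \tfrac{\stepsize^2}{2}C$ gives optimum $\stepsize = g/(2C)$, so the $g\le 2C$ branch produces improvement $\tfrac{g^2}{8C}\ge \tfrac{\strongConvGeneralized}{4C}h_k$ and the $g>2C$ branch produces improvement $\ge \tfrac{g}{4}\ge \tfrac{h_k}{4}$, yielding $\rho = \tfrac{1}{4n}\min\{1,\strongConvGeneralized/C\}$. I expect the only genuinely delicate step to be the justification of the quadratic gap bound for the product geometry, i.e.\ confirming that $\strongConvGeneralized>0$ and that the pyramidal-width argument of \citet{LacosteJulien2015linearFW} transfers to the product polytope with the product active set; everything else is the elementary step-size optimization. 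The no-drop-step hypothesis is essential throughout, since it is exactly what lets the descent bound hold for all $\stepsize\in[0,1]$ rather than only up to a possibly tiny maximal step size.
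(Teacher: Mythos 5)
Your proposal is correct and follows essentially the same route as the paper's proof: apply the expected block descent lemma (valid for all $\stepsize\in[0,1]$ thanks to the no-drop-step hypothesis), optimize the step size with the two-case split $\stepsize^*\lessgtr 1$, and invoke $g^\PFW \geq h_k$ together with the generalized-strong-convexity bound $h_k \leq (g^\PFW)^2/(2\strongConvGeneralized)$ to obtain the two candidate rates, taking the worse one; the BCAFW case with the extra $\nicefrac{1}{2}$ factor is handled identically in both. The only cosmetic difference is that you spell out the BCAFW step-size computation explicitly where the paper says ``re-using the same argument,'' and your flagged concern about the product geometry is resolved exactly as the paper does, by noting the quadratic gap bound holds by definition of $\strongConvGeneralized$ while its strict positivity follows from the pyramidal-width/Hoffman-constant results of \citet{LacosteJulien2015linearFW}.
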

\begin{proof}
We first do the argument for BCPFW. Let $g_k := g^\PFW(\bx^{(k)} \,;\, \activeS^{(k)})$, and notice that $g_k \geq h_k$ always. Because we assume that there is no drop step at $\bx^{(k)}$, we can use the expected BCPFW descent lemma~\ref{lem:BCPFWdescent}. By subtracting $f(\bx^*)$ on both side of the descent inequality~\eqref{eq:BCPFWlemma}, we get (for any $\stepsize \in [0,1]$):
\begin{equation} \label{eq:hkNextBCPFW}
\E\big[h_{k+1} \,|\, \bx^{(k)}, \activeS^{(k)} \big]
\le h_k -  \frac{\stepsize}{n} g_k
  + \frac{\stepsize^2}{2n} \CfATotal . 
\end{equation}
We can minimize the RHS of~\eqref{eq:hkNextBCPFW} with $\stepsize^* = \frac{g_k}{\CfATotal}$. If $g_k > \CfATotal$ (i.e. $\stepsize^* > 1$), then use $\stepsize = 1$ in~\eqref{eq:hkNextBCPFW} to get:
\begin{equation} \label{eq:rhoBigBCPFW}
\E\big[h_{k+1} \,|\, \bx^{(k)}, \activeS^{(k)} \big]
\le h_k -  \frac{1}{2n} g_k \leq (1-\frac{1}{2n}) h_k .
\end{equation}
This gives a geometric rate of $\rho = \frac{1}{2n}$. So now suppose that $g_k \leq \CfATotal$ (so that $\stepsize^* \leq 1$); putting $\stepsize = \stepsize^*$ in~\eqref{eq:hkNextBCPFW}, we get:
\begin{equation} \label{eq:hkBCPFWprogress}
\E\big[h_{k+1} \,|\, \bx^{(k)}, \activeS^{(k)} \big]
\le h_k -  \frac{1}{2n \CfATotal} \, {g_k}^2 .
\end{equation}
We now use the key relationship between the suboptimality~$h_k$ and the PFW gap~$g_k$ derived in inequality~(43) of~\citet{LacosteJulien2015linearFW} (which is true for any function~$f$ by definition of~$\strongConvGeneralized$ if we allow it to be zero):
\begin{equation} \label{eq:hkPFWgap}
h_k \leq \frac{{g_k}^2}{2 \strongConvGeneralized} .\textsl{}
\end{equation}
Substituting~\eqref{eq:hkPFWgap} into~\eqref{eq:hkBCPFWprogress}, we get:
\begin{equation} \label{eq:rhoBCPFW}
\E\big[h_{k+1} \,|\, \bx^{(k)}, \activeS^{(k)} \big] \leq (1-\frac{\strongConvGeneralized}{n  \CfATotal}) h_k ,
\end{equation}
which gives the $\rho = \nicefrac{\strongConvGeneralized}{n  \CfATotal}$ rate. Taking the worst rate of~\eqref{eq:rhoBigBCPFW} and~\eqref{eq:rhoBCPFW} gives the rate~\eqref{eq:BCPFWrate}, completing 
the proof for BCPFW.

In the case of BCAFW, Lemma~\ref{lem:BCPFWdescent} yields the inequality~\eqref{eq:hkNextBCPFW} but with an extra $\nicefrac{1}{2}$ factor in front of $g_k$. Re-using the same argument as above, we get a rate of $\rho = \nicefrac{1}{4n}$ when $\stepsize^* > 1$, and $\rho = \nicefrac{\strongConvGeneralized}{4n  \CfATotal}$ when $\stepsize^* \leq 1$, showing~\eqref{eq:BCAFWrate} as required.

Finally, the fact that $\strongConvGeneralized > 0$ when $q$ is $\mu$-strongly convex and $\domain$ is a polytope comes from the lower bound given in Theorem~10 of~\citet{LacosteJulien2015linearFW} in terms of the \emph{pyramidal width} of~$\domain$ (a strictly positive geometric quantity for polytopes), and the \emph{generalized strong convexity} of $f$ as defined in Lemma~9 of~\citet{LacosteJulien2015linearFW}. The generalized strong convexity of $f$ is simply $\mu$ if $f$ is $\mu$-strongly convex. In the more general case of problem~\eqref{eq:blockPFWproblem} where only~$q$ is $\mu$-strongly convex, the generalized strong convexity depends both on~$\mu$ and the \emph{Hoffman constant}~\citepsup{hoffman1952constant} associated with the linear system of problem~\eqref{eq:blockPFWproblem}. See~\citet{LacosteJulien2015linearFW} for more details, as well as Lemma~2.2 of~\citetsup{beck2015AFW}.
\end{proof}

\paragraph{Interpretation.} Theorem~\ref{thm:BCPFW} only guarantees progress of BCPFW or BCAFW when there would not be any drop step for \emph{any} block~$i$ for the current iterate. For the batch AFW algorithm, one can easily lower bound the number of times that these ``good steps'' can happen as a drop step reduces the size of the active set and thus cannot happen more than half of the time. On the other hand, in the block coordinate setting, we can be unlucky and always have one block that could give a drop step (while we pick other blocks during the algorithm, this bad block affects the expectation). This means that without a refined analysis of the drop step possibility, we cannot guarantee any progress in the worst case for BCPFW or BCAFW. As a safeguard, one can modify BCPFW or BCAFW so that it also has the option to do a standard BCFW step on a block if it yields better progress on $f$ -- this way, the algorithm inherits at least the (sublinear) convergence guarantees of BCFW.

\paragraph{Empirical linear convergence.} In our experiments, we note that BCPFW always converged empirically, and had an empirical linear convergence rate for the SSVM objective when $\lambda$ was big enough ($q(\cdot) = \frac{\lambda}{2} \| \cdot \|^2$ for the SSVM objective~\eqref{eq:svmstruct_nslack_dual}). See Figure~\ref{fig:app:exp1_dataset1} for OCR-large~(c) for example. We also tried the modified BCPFW algorithm where a choice is made between a FW step, a pairwise FW step or an away step on a block by picking the one which gives the biggest progress. We did not notice any significant speed-up for this modified method.

\paragraph{On the dimension of SSVM.} Finally, we note that the rate constant~$\rho$ in Theorem~\ref{thm:BCPFW} has an implicit dependence on the dimensionality (in particular, through the pyramidal width of~$\domain$). \citet{LacosteJulien2015linearFW} showed that the largest possible pyramidal width of a polytope in dimension~$m$ (for a fixed diameter) is achieved by the probability simplex and is $\Theta(1/\sqrt{m})$. For the SSVM in the general form~\eqref{eq:svmstruct_nslack_dual}, the dimensionality of~$\domain^{(i)}$ is the number of possible structured outputs for input~$i$, which is typically an exponentially large number, and thus the pyramidal width lower bound would be useless in this case. Fortunately, the matrix~$A$ (feature map) and vector~$\bv$ (loss function) are highly structured, and thus many~$\dualvarv$'s are mapped to the same objective value. For a feature mapping~$\featuremapdiffv_i(\outputvarv)$ representing the sufficient statistics for an energy function associated with a graphical model (as for a conditional random field~\citepsup{laffery2001CRF}), then the SSVM objective is implicitly optimizing over the \emph{marginal polytope} for the graphical model~\citepsup{wainwright2008graphical}. More specifically, let $A_i$ be the $d \times m_i$ submatrix of~$A$ associated with example~$i$. Then we can write $A_i = B_i M_i$ where $M_i$ is a $p \times m_i$ \emph{marginalization} matrix, that is, $\bm{\mu} = M_i \dualvarv$ is an element of the marginal polytope for the graphical model, where $p$ is the dimensionality of the marginal polytope -- which is a polynomial number in the size of the graph, rather than exponential. By the affine invariance property of the FW-type algorithms, we can thus instead use the pyramidal width of the marginal polytope for the convergence analysis (and similarly for the Hoffman constant). \citet{LacosteJulien2015linearFW} conjectured that the pyramidal width of a marginal polytope in dimension $p$ was also~$\Theta(1/\sqrt{p})$, thus giving a more reasonable bound for the convergence rate of BCPFW for SSVM.

\section{BCFW for SSVM with box constraints}
\label{app:pos_const}

\subsection{Problem with box constraints}
Problem~\eqref{eq:svmstruct_nslack_primal_nonsmooth} can be equivalently rewritten as a quadratic program (QP) with an exponential number of constraints:
\begin{align}
    \label{eq:svmstruct_nslack_primal}
    \min_{\weightv,\, \bm{\xi}} \quad &   \frac{\regularizerweight}{2}\norm{\weightv}^2 +
    \frac1n \sum_{i=1}^n \xi_i\\
    \text{s.t.} \quad &
    \langle \weightv, \featuremapdiffv_i(\outputvarv) \rangle
    \geq \errorterm(\outputvarv_i,\outputvarv) - \xi_i \quad  \forall i, \, \forall \outputvarv \in \outputdomain_i
      \notag
\end{align}
where the slack variable~$\xi_i$ measures the surrogate loss for the $i$-th datapoint.
Problem~\eqref{eq:svmstruct_nslack_primal} is often referred to as the $n$-slack structured SVM with margin-rescaling~\citep[Optimization Problem 2]{Joachims:2009ex}.

In this section, we consider the problem~(\ref{eq:svmstruct_nslack_primal}) with additional box constraints on the parameter vector~$\weightv$:
\begin{align}
    \label{eq:ssvn_nslack_primal_bounded}
    \min_{\weightv,\, \bm{\xi}} \quad &   \frac{\regularizerweight}{2}\norm{\weightv}^2 +
    \frac1n \sum_{i=1}^n \xi_i\\
    \text{s.t.} \quad &
    \langle \weightv, \featuremapdiffv_i(\outputvarv) \rangle
    \geq \errorterm(\outputvarv_i,\outputvarv) - \xi_i \quad  \forall i, \, \forall \outputvarv \in  \outputdomain_i, \notag \\
    \quad & \lb \preccurlyeq \weightv \preccurlyeq \ub
      \notag,
\end{align}

where $\lb \in \R^d$ and $\ub \in \R^d$ denote the lower and upper bounds, respectively, and the symbol ``$\preccurlyeq$'' is the element-wise ``less or equal to'' sign.
In the following, we assume that the box constraints are feasible, i.e., $\lb \preccurlyeq \ub$.
Note that the following discussion can be directly extended to the case where only some dimension of the weight vector have to respect the box constraints.
The Lagrangian of problem~\eqref{eq:ssvn_nslack_primal_bounded} can be written as
\begin{multline}
 \label{eq:ssvn_nslack_lagrang_bounded}
L(\weightv,\bm{\xi},\dualvarv, \duallv, \dualuv)
  = \frac\regularizerweight2 \langle\weightv,\weightv\rangle+\frac1n\sum_{i=1}^n \xi_i   \\
  + \sum_{i\in[n],\,\outputvarv \in \outputdomain_i}
  \frac1n\dualvar_i(\outputvarv)
  \left( -\xi_i +
     \langle \weightv, -\featuremapdiffv_i(\outputvarv) \rangle
     + \errorterm_i(\outputvarv)
  \right)  \\
  +\regularizerweight \langle \dualuv, \weightv - \ub \rangle + \regularizerweight \langle \duallv, -\weightv + \lb \rangle
\end{multline}
where $\duallv \in \R^d$ and  $\dualuv \in \R^d$ are the dual variables associated with the lower and upper bound constraints, respectively.
From the KKT conditions, we obtain
\begin{align}
\label{eq:kkt_box_w}
\weightv =& \ A\dualvarv-(\dualuv-\duallv), \\
\label{eq:kkt_box_alp}
\sum_{\outputvarv \in \outputdomain_i}
  \dualvar_i(\outputvarv) = & \ 1 ~~~\forall i\in[n].
\end{align}
Finally, the dual of problem~\eqref{eq:ssvn_nslack_primal_bounded} (here written in a minimization form) can be written as follows:
\begin{align}
\label{eq:ssvn_nslack_dual_bounded}
    \min_{\substack{ \dualvarv\in\R^{m} \\  \dualvarv \succcurlyeq 0}}  f(\dualvarv,\duallv,\dualuv) :=&  \;
    \frac{\regularizerweight}{2}
    \big\|A \dualvarv - (\dualuv - \duallv)\big\|^2 - \bv^\transpose \dualvarv \notag \\[-3mm]
    & + \regularizerweight ( \dualuv^{\transpose} \ub - \duallv^{\transpose} \lb )
    \notag \\[1mm]
    \text{s.t. } & \sum_{\outputvarv \in \outputdomain}  \dualvar_i(\outputvarv) = 1 ~~~ \forall i\in[n], \notag \\
      \text{and } & \dualuv \succcurlyeq 0 , \duallv \succcurlyeq 0.
\end{align}

\paragraph{A modified block optimization method.}
Ideally, we should optimize $f(\dualvarv,\duallv,\dualuv)$ jointly w.r.t.\ all the dual variables.
This task is not directly suitable for the Frank-Wolfe approach as the domain for $\duallv$ and $\dualuv$ is unbounded.
However, joint optimization w.r.t.~$\duallv$ and $\dualuv$ with $\dualvarv$ kept fixed can be done in closed form.
After that, optimization w.r.t.~$\dualvarv$ can be performed using the Frank-Wolfe blockwise approach.
Therefore, we resort to optimizing in a blockwise fashion: we iterate either a batch FW or a BCFW step on $\dualvarv$ with an exact block-update on $(\dualuv,\duallv)$. As we will see below, this principled approach is similar to a commonly used heuristic of truncating the value of $\weightv$ to make it feasible during an algorithm which works on the dual. In fact, our approach will be equivalent to run FW or BCFW with a truncation making $\weightv(\dualvarv)$ feasible after each FW step, but with a \emph{change in the optimal step-size computation} (line~\ref{alg:lineStepsizeFWpositive} in Algorithm~\ref{alg:FW_SVM_box} for FW; line~\ref{alg:lineStepsizeBCFWpositive} in Algorithm~\ref{alg:FW_product_SVM_box} for BCFW) due to the different nature of the optimization problem.

\begin{algorithm}[t!]
    \caption{Batch Frank-Wolfe algorithm for structured SVM with box constraints} %
    \label{alg:FW_SVM_box}
    \begin{algorithmic}[1]
        \STATE Let $\notruncv^{(0)}:= \0$; $\ell^{(0)}:=0$
        \STATE $\weightv^{(0)}:= [ \notruncv^{(0)} ]_{\lb}^{\ub}$
        \COMMENT{truncation to the feasible set}
        \FOR{$k:=0,\dots,\infty$}
        \FOR{$i:=1,\dots,n$}
        \STATE Solve $\outputvarv_i^* := \displaystyle\argmax_{\outputvarv\in\outputdomain_i} \ H_i(\outputvarv;\weightv^{(k)})$ \vspace{-2mm}%
        \ENDFOR %
        \STATE Let $\notruncv_{\sv} := {\displaystyle\sum_{i=1}^n} \frac1{\regularizerweight n} \featuremapdiffv_i(\outputvarv_i^*)$ \;
        and \; $\ell_{\sv} := \frac1n \displaystyle\sum_{i=1}^n \errorterm_i(\outputvarv_i^*)$
        \STATE Let $\stepsize := \frac{\regularizerweight (\notruncv^{(k)}-\notruncv_{\sv})^\transpose \weightv^{(k)} - \ell^{(k)} + \ell_{\sv} }{ \regularizerweight \|\notruncv^{(k)}-\notruncv_{\sv}\|^2 }$ ~and clip  to $[0,1]$ \label{alg:lineStepsizeFWpositive}
        \STATE Update $\notruncv^{(k+1)}:= (1-\stepsize)\notruncv^{(k)}+\stepsize\, \notruncv_{\sv}$
        \STATE {\small~~~~~~~and~ $\ell^{(k+1)}:= (1-\stepsize)\ell^{(k)}+\stepsize\, \ell_{\sv}$}
        \STATE {\small~~~~~~~and~ $\weightv^{(k+1)}:= [ \notruncv^{(k+1)} ]_{\lb}^{\ub}$}
        \ENDFOR
    \end{algorithmic}
\end{algorithm}

\begin{algorithm}[t]
    \caption{Block-coordinate Frank-Wolfe algorithm for structured SVM with box constraints}%
    \label{alg:FW_product_SVM_box}
    \begin{algorithmic}[1]
        \STATE Let $\notruncv^{(0)}:= {\notruncv_i}^{(0)} := \0$; \; $\ell^{(0)}:={\ell_i}^{(0)}:=0$;
        \STATE $\weightv^{(0)}:= [ \notruncv^{(0)} ]_{\lb}^{\ub}$
        \COMMENT{truncation to the feasible set}
        \FOR{$k:=0,\dots,\infty$}
        \STATE Pick $i$ at random in $\{1,\ldots,n\}$
        \STATE Solve $\outputvarv_i^* := \displaystyle\argmax_{\outputvarv\in\outputdomain_i} \ H_i(\outputvarv;\weightv^{(k)})$ %
        \STATE Let $\notruncv_{\sv} := \frac1{\regularizerweight n} \featuremapdiffv_i(\outputvarv_i^*)$
        \; and \;  $\ell_{\sv} := \frac1n \errorterm_i(\outputvarv_i^*)$
        \STATE Let $\stepsize := \frac{ \regularizerweight (\notruncv_i^{(k)}-\notruncv_{\sv})^\transpose \weightv^{(k)} - \ell_i^{(k)} + \ell_{\sv} }{ \regularizerweight \|\notruncv_i^{(k)}-\notruncv_{\sv}\|^2}$~and clip to $[0,1]$ \label{alg:lineStepsizeBCFWpositive}
        \STATE Update ${\notruncv_i}^{(k+1)}:= (1-\stepsize){\notruncv_i}^{(k)}+\stepsize \,\notruncv_{\sv}$
        \STATE {\small~~~~~~~and~ ${\ell_i}^{(k+1)}:= (1-\stepsize){\ell_i}^{(k)}+\stepsize\, \ell_{\sv}$}
        \STATE Update $\notruncv^{(k+1)}\;:= \notruncv^{(k)} + {\notruncv_i}^{(k+1)} - {\notruncv_i}^{(k)}$
        \STATE {\small~~~~~~~and~~ $\ell^{(k+1)}:= ~\ell^{(k)}+{\ell_i}^{(k+1)} \ \  - {\ell_i}^{(k)}$}
        \STATE Let $\weightv^{(k+1)}:= [ \notruncv^{(k+1)} ]_{\lb}^{\ub}$
        \ENDFOR
    \end{algorithmic}
\end{algorithm}

\subsection{Optimizing w.r.t \texorpdfstring{$\bm{\beta}_u$}{betaU} and \texorpdfstring{$\duallv$}{betaL} while fixing \texorpdfstring{$\dualvarv$}{alpha}}

The optimization w.r.t.~$\dualuv$ with $\dualvarv$ and $\duallv$ fixed can be easily solved in closed form via a simple thresholding operation:
\begin{equation}
\label{eq:optimalBetaU}
\bm{\beta}_u^* = [A\dualvarv + \duallv - \ub ]_+ \ .
\end{equation}
The optimization w.r.t.~$\duallv$ with $\dualvarv$ and $\dualuv$ fixed is analogous:
\begin{equation}
\label{eq:optimalBetaL}
\duallv^* = [ - A\dualvarv + \dualuv + \lb ]_+ \ .
\end{equation}

Denote the $p$-th variable of $\dualuv$ and $\duallv$ with $\dualuv(p)$ and $\duallv(p)$, respectively.
For any index $p$, both $\dualuv(p)$ and $\duallv(p)$ cannot be nonzero simultaneously, because if one of the constraints is violated (either the upper or the lower bound), then the other constraint must be satisfied.
Hence, $\dualuv(p) \neq 0$ implies $\duallv(p) = 0$ and vice versa. %
Therefore, the final update equations can equivalently be written as
\begin{align}
\label{eq:optimalBetaULfinal}
\dualuv^*(\dualvarv) &= [A\dualvarv - \ub ]_+ \ , \\
\duallv^*(\dualvarv) &= [-A\dualvarv + \lb ]_+ \ .
\end{align}

Introducing $\notruncv(\dualvarv):=A\dualvarv$, we get $\bm{\beta}_u^* = [\notruncv - \ub ]_+$ and $\bm{\beta}^*_l = [-\notruncv + \lb ]_+$.
Hence, the operation $\weightv = \notruncv - (\bm{\beta}^*_u - \bm{\beta}^*_l)$ is simply the projection (truncation) of $\notruncv$ on the feasible set defined by the upper and lower bounds.
In the final algorithm, we maintain $\notruncv(\dualvarv)$ and directly update the primal variables~$\weightv$ without updating $\dualuv$ and $\duallv$.

\subsection{Batch setting: optimizing w.r.t \texorpdfstring{$\dualvarv$}{alpha} with  \texorpdfstring{$\bm{\beta}_u$}{betaU} and \texorpdfstring{$\duallv$}{betaL} fixed}
When the variables $\dualuv$ and $\duallv$ are fixed, the convex problem~\eqref{eq:ssvn_nslack_dual_bounded}
has a compact domain and so we can use the Frank-Wolfe algorithm on it.
In the following, we highlight the differences with the setting without box constraints.
We denote by $\weightv(\dualvarv)$ the truncation of $\notruncv(\dualvarv)$ on the box constraints, i.e.,
\begin{equation} \label{eq:wTruncation}
\weightv(\dualvarv) := \notruncv(\dualvarv) - \left(\bm{\beta}^*_u(\dualvarv) - \bm{\beta}^*_l (\dualvarv)\right).
\end{equation}
The derivations below assume that $\dualuv$ and $\duallv$ are fixed to their optimal values $\bm{\beta}^*_u(\dualvarv)$ and $\bm{\beta}^*_l (\dualvarv)$ for a specific $\dualvarv$.

\paragraph{Linear subproblem.}
The Frank-Wolfe linear subproblem can be written as
\begin{align}
\label{eq:linearizationProblemFWbox}
\bm{s} = \argmin_{\bm{s}' \in \mathcal{M}} \langle \bm{s}', \nabla_{\dualvarv} f(\dualvarv, \bm{\beta}_l, \bm{\beta}_u) \rangle
\end{align}
where $\nabla_{\dualvarv} f(\dualvarv, \bm{\beta}_l, \bm{\beta}_u)$ can be easily computed:
\begin{align}
\label{eq:gradient_ssvmdual_bounded}
\nabla_{\dualvarv} f(\dualvarv, \bm{\beta}_l, \bm{\beta}_u) &= \regularizerweight A^{\transpose} (A\dualvarv - (\bm{\beta}_u - \bm{\beta}_l)) - \bm{b} \notag \\
&= \regularizerweight A^{\transpose} \weightv - \bm{b}.
\end{align}
Analogously to the problem without box constraints, the linear subproblem used by the Frank-Wolfe algorithm is equivalent to the loss-augmented decoding subproblem~\eqref{eq:subproblem_loss_augm}.
The update of $\dualvarv$ can be made using the corner $\bm{s}$.
In what follows, we show that this update can be performed without explicitly keeping the dual variables at the cost of storing the extra vector~$\notruncv$.

\paragraph{The duality gap.} The Frank-Wolfe gap for problem~\eqref{eq:ssvn_nslack_dual_bounded} can be written as
\[
\begin{array}{rl}
\label{eq:FWdualgap_box}
  g(\dualvarv) :=& \displaystyle\max_{\sv' \in \domain} \,\langle \dualvarv - \sv',\nabla_{\dualvarv} f(\dualvarv, \bm{\beta}_l, \bm{\beta}_u) \rangle \\
    =& (\dualvarv - \sv)^\transpose (\regularizerweight A^{\transpose} (A\dualvarv - (\bm{\beta}_u - \bm{\beta}_l)) - \bv) \\
    =& \regularizerweight (\notruncv-\notruncv_{\sv})^\transpose \weightv - \bv^\transpose \dualvarv + \bv^\transpose \sv
   \vspace{-1mm}
\end{array}
\]
where $\notruncv_{\sv}:=A\sv$.
Below, we prove that the Frank-Wolfe duality gap~$g(\dualvarv)$ for the problem~\eqref{eq:ssvn_nslack_dual_bounded} when $\dualuv$ and $\duallv$ are fixed at their current optimal value for the current $\dualvarv$ equals to a Lagrange duality gap, analogously to the case without box constraints~\citep[Appendix~B.2]{lacosteJulien13bcfw}.\footnote{We stress that this relationship is only valid for the pair $\weightv = \weightv(\dualvarv)$ in the primal, and $\dualuv = \dualuv^*(\dualvarv), \duallv = \duallv^*(\dualvarv)$ in the dual.}

\begin{proof}
Consider the difference between the primal objective of~\eqref{eq:ssvn_nslack_primal_bounded} at $\weightv := A\dualvarv-(\dualuv-\duallv)$  with the optimal slack variables~$\bm{\xi}$ and the dual objective of~\eqref{eq:ssvn_nslack_dual_bounded} at $\dualvarv$ (in the maximization form).
We get
\begin{align*}
g_{\text{\tiny Lag.}}(\weightv,\dualvarv)&=\frac\regularizerweight2 \weightv^\transpose \weightv + \frac1n \sum_{i=1}^n \tilde{H}_i(\weightv) \\
& \qquad - \left(\bv^\transpose \dualvarv - \frac\regularizerweight2 \weightv^\transpose \weightv - \regularizerweight ( \dualuv^{\transpose} \ub - \duallv^{\transpose} \lb ) \right) \\
&= \regularizerweight \weightv^\transpose \weightv - \bv^\transpose \dualvarv + \frac1n \sum_{i=1}^n \max_{\outputvarv\in\outputdomain_i} H_i(\outputvarv;\weightv) \\
&\qquad + \regularizerweight ( \dualuv^{\transpose} \ub - \duallv^{\transpose} \lb ) \ .
\end{align*}

Recalling
\begin{align*}
\frac1n \sum_{i=1}^n \max_{\outputvarv\in\outputdomain_i} H_i(\outputvarv;\weightv) &=  \max_{\sv'\in \domain} -\sv'^\transpose \nabla_{\dualvarv} f(\dualvarv, \bm{\beta}_l, \bm{\beta}_u) \\ &= -\sv^\transpose \nabla_{\dualvarv} f(\dualvarv, \bm{\beta}_l, \bm{\beta}_u),
\end{align*}
we can write
\begin{align*}
g_{\text{\tiny Lag.}}(\weightv,\dualvarv)  &=  \regularizerweight \weightv^\transpose (A\dualvarv-(\dualuv-\duallv)) - \bv^\transpose \dualvarv  \\
&\qquad - \sv^{\transpose}\nabla_{\dualvarv} f(\dualvarv, \bm{\beta}_l, \bm{\beta}_u) + \regularizerweight ( \dualuv^{\transpose} \ub - \duallv^{\transpose} \lb ) \\
&= (\regularizerweight \weightv^\transpose  A - \bv^\transpose ) \dualvarv - \sv^\transpose\nabla_{\dualvarv} f(\dualvarv, \bm{\beta}_l, \bm{\beta}_u)  \\
&\qquad -\regularizerweight ( \weightv^\transpose  \dualuv - \weightv^\transpose \duallv)
+ \regularizerweight ( \dualuv^{\transpose} \ub - \duallv^{\transpose} \lb )  \\
	&= (\dualvarv - \sv)^\transpose \nabla_{\dualvarv} f(\dualvarv, \bm{\beta}_l, \bm{\beta}_u)
	\\&\qquad +  \regularizerweight(\dualuv^{\transpose}(\ub-\weightv)
	-\duallv^\transpose (\lb-\weightv)).
\end{align*}

As we assumed that $\dualuv = \dualuv^{*}(\dualvarv)$ and $\duallv = \duallv^{*}(\dualvarv)$, we have $\dualuv^{*\transpose}(\ub-\weightv)=0$ and $\duallv^{*\transpose}(\lb-\weightv)=0$, and thus
\begin{align*}
g_{\text{\tiny Lag}}(\weightv,\dualvarv,\dualuv^*,\duallv^*) = g(\dualvarv).
\end{align*}
\end{proof}

\paragraph{Line-Search.} Line search can be performed efficiently using 
$\stepsize_{\text{\tiny opt}s} := \frac{\langle \dualvarv - \sv, \nabla f(\dualvarv) \rangle}{\regularizerweight\norm{A(\dualvarv-\sv)}^2}
= \frac{g(\dualvarv)}{\regularizerweight\norm{\notruncv-\notruncv_{\sv}}^2}$.

\paragraph{Algorithm.}
Algorithm~\ref{alg:FW_SVM_box} contains the batch Frank-Wolfe algorithm with box constraints.
The main idea consists in maintaining the vector $\notruncv := A\dualvarv$ in order to perform all the updates of $\dualvarv$ using only the primal variables.
Optimization w.r.t.\ $\duallv$ and $\dualuv$ corresponds to the truncation of $\notruncv$.
Given these variables, the gap and the optimal step size are easy to compute.

\subsection{The block-coordinate setting}
Algorithm~\ref{alg:FW_product_SVM_box} describes the block-coordinate version of the Frank-Wolfe algorithm with box constraints.
The algorithm is obtained from the batch version (Algorithm~\ref{alg:FW_SVM_box}) in exactly the same way as BCFW (Algorithm~\ref{alg:FW_product_SVM}) is obtained from the batch Frank-Wolfe method~\citep[Algorithm~2]{lacosteJulien13bcfw}.

\newlength{\citationLengthTable}
\setlength{\citationLengthTable}{\widthof{Citation Citation More}} %

\begin{table*}[t]
    \centering
    \caption{ Statistics of the datasets used in the experimental evaluation. For the oracle time, we report the sum of the average running times for both the max oracle and the joint feature map computation (the starred numbers indicate that the input features were stored on disk instead of RAM, thus slowing down the computation). \vspace{3mm}\label{tab:dataset}}
    \resizebox{\textwidth}{!}{ %
        \begin{tabular}{@{}lllccrrccl@{}}
        \toprule
            Dataset                   & Task & Sructure (Oracle)                             & Citation          & Version              & $n$     & $d$                           & Sparsity & 
            \parbox{\widthof{constraints}}{\centering Box \\ constraints} & 
            \parbox{\widthof{Oracle time}}{\centering Oracle time \\ (in s)} \\ \midrule
            \multirow{2}{*}{OCR}      & \multirow{2}{*}{character recognition}  & \multirow{2}{*}{chain (Viterbi)}  & \multirow{2}{\citationLengthTable}{\centering \citep{Taskar2003}} & small                & $626$   & $4,082$                        & No       & No              & $5\times 10^{-4}$ 
            \\
            &                   &                   &                   & large                & $6,251$  & $4,082$                        & No       & No              & $5\times 10^{-4}$ 
            \\ \midrule
            CoNLL                     & text chunking      & \multirow{1}{*}{chain (Viterbi)}             & \parbox{\citationLengthTable}{\centering \citep{Sang2000}}                  & \multicolumn{1}{l}{} & $8,936$  & \multicolumn{1}{l}{$1,643,026$} & Yes      & No              &         $2\times 10^{-2}$                            \\   %
            \midrule
            \multirow{3}{*}{HorseSeg} & \multirow{3}{*}{binary segmentation} & \multirow{3}{*}{grid (graph cut)} & \multirow{3}{\citationLengthTable}{\centering \citep{kolesnikov2014closed}} & small                & $147$   & $1,969$                        & No       & Yes             & $1\times 10^{-3} $
            \\
            &                   &                   &                   & medium               & $6,121$  & $1,969$                        & No       & Yes             & $1\times 10^{-3} $
            \\
            &                   &                   &                   & large                & $25,438$ & $1,969$                        & No       & Yes             & $2 \times 10^{-2}$ (*)
            \\ \midrule
            LSP                       & pose estimation          & \multirow{1}{*}{tree (max sum)}            &   \parbox{\citationLengthTable}{\centering \citep{Johnson10}}                & small                & $100$   & $2,676$                        & No       & Yes             &  $2$ (*) \\ %
            \bottomrule
        \end{tabular}
    }
\end{table*}

\section{Dataset description \label{app:dataset_table}}
In our experiments, we use four structured prediction datasets: OCR~\cite{Taskar2003} for character recognition; CoNLL~\cite{Sang2000} for text chunking; HorseSeg~\cite{kolesnikov2014closed} for binary image segmentation; LSP~\cite{Johnson10} for pose estimation.
In this section, we provide the description of the datasets and the corresponding models.
Table~\ref{tab:dataset} summarizes quantitative statistics for all the datasets.
For the OCR and CoNLL datasets, the features and models described below are exactly the same as used by~\citet{lacosteJulien13bcfw}; we give a detailed description for reference. For HorseSeg and LSP, we had to build the models ourselves from previous work referenced in the relevant section.

\subsection{OCR}
The Optical Character Recognition (OCR) dataset collected by~\citet{Taskar2003} poses the task of recognizing English words from sequences of handwritten symbols represented by binary images.
The average length of sequences equals~$7.6$ symbols.
For a sequence of length~$T$,  the input feature representation~$\inputvarv$ consists of~$T$ binary images of size~$16 \times 8$.
The output object~$\outputvarv$ is a sequence of length~$T$ with each symbol taking 26 possible values.

The OCR dataset contains $6,877$ words.
In the small version, $626$ words are used for training and the rest for testing.
In the large version, $6,251$ words are used for training and the rest for testing.

The prediction model is a chain. The feature map~$\featuremapv(\inputvarv, \outputvarv)$ contains features of three types: emission, transition and bias.
The $16 \times 8 \times 26$ emission features count the number of times along the chain a specific position of the $16\times8$ binary image equals~$1$ when associated with a specific output symbol.
The $26 \times 26$ transition features count the number of times one symbol follows another.
The $26 \times 3$ bias features represent three biases for each element of the output alphabet: one model bias, and a bias for when the letter appears at the beginning or at the end of the sequence.

As the structured error between output vectors $\errorterm(\outputvarv_i,\outputvarv)$, the OCR dataset uses the Hamming distance normalized by the length of the sequences.
The loss-augmented structured score $H_i(\outputvarv;\weightv)$ is a function with unary and pairwise potentials defined on a chain and is exactly maximized with the dynamic programming algorithm of~\citet{Viterbi67}.

\subsection{CoNLL}
The CoNLL dataset released by~\citet{Sang2000} poses the task of text chunking.
Text chunking, also known as shallow parsing~\citesup{sha2003shallow}, consists in dividing the input text into syntactically related non-overlapping groups of words, called phrase or chunks.
The task of text chunking can be cast as a sequence labeling where a sequence of labels $\outputvarv$ is predicted from an input sequence of tokens $\inputvarv$.
For a given token $\inputvar_t$ (a word with its corresponding part-of-speech tag), the associated label $\outputvar_t$ gives the type of phrase the token belongs to, i.e., says whether or not it corresponds to the beginning of a chunk, or encodes the fact that the token does not belong to a chunk.

The CoNLL dataset contains $8,936$ training English sentences extracted from the Wall Street Journal part of the Penn Treebank II~\citesup{marcus1993pennTreebank}. Each output label $\outputvar_t$ can take up to $22$ different values.

We use the feature map~$\featuremapv(\inputvarv, \outputvarv)$ proposed by~\citetsup{sha2003shallow}.
First, for each position~$t$ of the input sequence~$\inputvarv$, we construct a unary feature representation, containing the local information.
We start with extracting several attributes representing the words and the part-of-speech tags at the positions neighboring to~$t$.\footnote{We extract the attributes with the CRFsuite library~\citesup{CRFsuite} and refer to its documentation for the exact list of attributes: \url{http://www.chokkan.org/software/crfsuite/tutorial.html}.}
Each attribute is encoded with an indicator vector of length equal to either the dictionary size or the number of part-of-speech tags.
We concatenate these vectors to get a unary feature representation, which is a sparse binary vector of dimensionality~$74,658$.
Note that these representation can be precomputed outside of the training process.

Given a labeling $\outputvarv$ and the unary representations, the feature map $\featuremapv$ is constructed by concatenating features of three types (as in the chain model for OCR): emission, transition and bias.
The $74,658 \times 22$ emission features count the number of times each coordinate of the unary representation of token~$x_t$ is nonzero and the corresponding output variable~$y_t$ is assigned a particular value.
The transition map of size $22 \times 22$ encodes the number of times one label follows another in the output~$\outputvarv$.
The $22 \times 3$ bias features encode biases for all the possible values of the output variables and, specifically, biases for the first and the last variables.

As in the OCR task, the structured error~$\errorterm(\outputvarv_i,\outputvarv)$ is the normalized Hamming distance and thus the max-oracle can be efficiently implemented using the dynamic programming algorithm of~\citet{Viterbi67}.

\subsection{HorseSeg}
The HorseSeg dataset\footnote{\url{https://pub.ist.ac.at/~akolesnikov/HDSeg/HDSeg.tar}} released by~\citet{kolesnikov2014closed} poses the task of object/background segmentation of images containing horses, i.e., assigning a label ``horse'' or ``background'' to each pixel of the image.
HorseSeg contains $25,438$ training images, $147$ of which are manually annotated, $5,974$~annotations are constructed from object bounding boxes by the automatic method of~\citetsup{guillaumin2014autoAnnotation}, while the remaining $19,317$~annotations were constructed by the same method but without any human supervision.
The test set of HorseSeg consists of $241$~images with manual annotations.
In our experiments, we use training sets of three different sizes: $147$ images for HorseSeg-small, $6,121$~images for HorseSeg-medium and $25,438$ for HorseSeg-large.

In addition to images and their pixel-level annotations, \citet{kolesnikov2014closed} released\footnote{\url{https://pub.ist.ac.at/~akolesnikov/HDSeg/data.tar}} oversegmentations (superpixels) of the images precomputed with the SLIC algorithm~\citesup{achanta2012slicSuperpixels} and the unary features of each superpixel computed similarly to the work of~\citetsup{lempitsky2011pylon}.
On average, each image contains $147$~superpixels. The $1,969$ unary features include $1$~constant feature, $512$-bin histograms of densely sampled visual SIFT words~\citesup{lowe2004sift}, $128$-bin histograms of RGB colors, $16$-bin histograms of locations (each pixel of a region of interest is matched to a cell of the $4 \times 4$ uniform grid).
The three aforementioned histograms are computed on the superpixels themselves, on the superpixels together with their neighboring superpixels, and on the second-order neighborhoods.

For each pair of adjacent superpixels, we construct $100$ pairwise features: a constant feature; and
quantities $\exp{(-\eta d_{pq})}$ where~$d_{pq}$ is a $\chi^2$-distance between $9$ pairs of the corresponding histograms of each type for neighbors $p$ and $q$, and $\eta$ is a parameter taking $11$~values from the set $2^{-5}, 2^{-4}, \dots, 2^5$.

The structured feature map is defined in such a way that the corresponding structured score function contains unary and pairwise Potts potentials\\[-4mm]
\begin{align*}
\langle \weightv, \featuremapv(\inputvarv_i,\outputvarv) \rangle
&=
\sum_{p \in \mathcal{V}_i} \langle \weightv_U, \inputvarv_{i, p}^U \rangle ([y_p = 1] - [y_p=0]) \\
&+
\sum_{\{p,q\} \in \mathcal{E}_i} \langle \weightv_P,\inputvarv_{i, pq}^P \rangle [y_p \neq y_p]
\end{align*}
where the vector $\inputvarv_i = ( (\inputvarv_{i, p}^U)_{p\in\mathcal{V}_i}, (\inputvarv_{i, pq}^P)_{\{p,q\} \in \mathcal{E}_i})$ denotes all the features of image~$i$, the vector~$\outputvarv = (\outputvar_p)_{p\in \mathcal{V}_i} \in \{0,1\}^{\mathcal{V}_i}$ is a feasible labeling, the set~$\mathcal{V}_i$ is the set of the superpixels of the image~$i$, and the set $\mathcal{E}_i$ represents the adjacency graph.

The structured error is measured with a Hamming loss with class-normalized penalties 
\begin{equation*}
\errorterm(\outputvarv_i,\outputvarv)
=
\sum_{p \in \mathcal{V}_i} \omega_{y_{i,p}} [y_{i,p} \neq y_{p}]
\end{equation*}
where $\outputvarv_i = (\outputvar_{i,p})_{p\in \mathcal{V}_i}$ is the labeling of superpixels closest to the ground-truth annotation and the weights~$\omega_{0}$ and $\omega_{1}$ are proportional to the ground-truth area of each class.

The loss-augmented score function
$$
\errorterm(\outputvarv_i,\outputvarv) - \langle \weightv, \featuremapv(\inputvarv_i,\outputvarv)\rangle
$$
is a discrete function defined w.r.t.\ a cyclic graph and can be maximized in polynomial time when it is supermodular. By construction, all our pairwise features are nonnegative, so we can ensure supermodularity by adding positivity constraints on the weights corresponding to the pairwise features~$\weightv_P \succcurlyeq 0$. 
The version of BCFW with positivity constraints is described in Appendix~\ref{app:pos_const}.

The discrete optimization problem arising in the max-oracle is solved by the min-cut/max-flow algorithm of~\citet{boykov2004graphcut}.
The running time or the max-flow is small compared to the operations with the features required to compute the potentials.

\subsection{LSP}
The Leeds Sports Pose (LSP) dataset introduced by~\cite{Johnson10} poses the tasks of full body pose estimation from still images containing sports activities.
Based on the input image with a centered prominent person, the task is to predict the locations of $14$ body-parts (joints), e.g., ``left knee'' or ``right ankle''.

We cast the task of pose estimation as a structured prediction problem and build our model based on the work of~\citet{Chen_NIPS14}, which is one of the state-of-the-art methods for pose estimation.
First, we construct an acyclic graph  where the nodes $p\in\mathcal{V}$ correspond to the different body-parts. The set of body parts is extended from the original $14$ parts of interest by the midway points to get the $26$ nodes of the graph. Second, the graph is converted into the directed one by utilizing the arcs of both orientations for each original edge. We denote the resulting graph by $\mathcal{G}=(\mathcal{V},\vec{\mathcal{E}})$.

In the model of~\citet{Chen_NIPS14}, each node~$p\in\mathcal{V}$ has a variable $\bm{l}_p \in \mathcal{P} \subset \R^2$ denoting the spatial position of the corresponding joint that belongs to a finite set of possibilities~$\mathcal{P}$; each arc~$(p,q) \in \vec{\mathcal{E}}$ has a variable $t_{pq} \in \mathcal{T} = \{1,\dots,13\}$ representing the type of spacial relationship between the two nodes. The output variable~$\outputvarv$ is constructed by concatenating the unary and pairwise variables $\outputvarv = \bigl( (\bm{l}_p)_{p \in \mathcal{V}}, \: (t_{pq})_{(p,q) \in \vec{\mathcal{E}}} \bigr)$.

The structure score function is a function of discrete variables $\bm{l}_p$ and $t_{pq}$ that is defined w.r.t.\ the graph~$\mathcal{G}$:
\begin{align*}
\langle \weightv, \featuremapv(\inputvarv_i,\outputvarv) \rangle
&=
\sum_{p \in \mathcal{V}} \weight_{U,p} \phi^U_p( \bm{I}_i, \bm{l}_p ) \\
&+
\sum_{(p,q) \in \mathcal{E}} 
\weight_{T,pq}
\phi^P_{pq}(  \bm{I}_i, \bm{l}_p, t_{pq} )
\\
&+
\sum_{(p,q) \in \mathcal{E}} 
\langle \weightv_{P,pq, t_{pq}},
\Delta( \bm{l}_p - \bm{l}_q - \bm{r}_{pq}^{t_{pq}})\rangle.
\end{align*}
Here, the input
$
\inputvarv_i = \bigl(  \bm{I}_i, (\bm{r}_{pq}^{t})_{(p,q) \in \vec{\mathcal{E}}}^{t\in \mathcal{T}} \bigr)
$
consists of the original image~$\bm{I}_i$ and the mean relative positions $\bm{r}_{pq}^{t} \in \R^2$ of each type of spatial relationship corresponding to each arc~$(p,q) \in \vec{\mathcal{E}}$.
Functions~$\phi^U_p(\bm{I}_i, \cdot)$ and $\phi^P_{pq}(\bm{I}_i, \cdot, \cdot)$ compute the scores for each possible value of the discrete variables $\bm{l}_p$ and $(\bm{l}_p,t_{pq})$, respectively.
The vector-valued function $\Delta(\bm{l}) = (l_1, l_1^2, l_2, l_2^2)$, $\bm{l} \in \R^2$, measures different types of mismatch between the preferred relative displacement~$\bm{r}_{pq}^{t_{pq}}$ and the displacement $\bm{l}_p - \bm{l}_q$ coming from the labeling~$\outputvarv$.
The vector~$\weightv = \bigl(  (\weight_{U,p})_{p \in \mathcal{V}}, (\weight_{T,pq})_{(p,q)\in \vec{\mathcal{E}}}, (\weightv_{P,pq, t})_{(p,q)\in \vec{\mathcal{E}}}^{t \in \mathcal{T}} \bigr)$ is the joint vector of parameters learned by structured SVM. Overall, this setup has $2,676$ parameters.

Displacements $\bm{r}_{pq}^{t}$ and functions~$\phi^U_p$, $\phi^P$ are computed at the preprocessing stage of the structured SVM.
We follow~\citet{Chen_NIPS14} and obtain the displacements $\bm{r}_{pq}^{t}$ with the K-means clustering of the displacements of the training set. Functions~$\phi^U_p$, $\phi^P$ consists in a Convolutional Neural Network (CNN) and are also learned from the training set. We refer the reader to the work of~\citet{Chen_NIPS14} and their project page\footnote{\url{http://www.stat.ucla.edu/~xianjie.chen/projects/pose_estimation/pose_estimation.html}} for further details.
Note that the last training stage of~\citet{Chen_NIPS14} is different from ours, i.e., it consists in binary SVM on the carefully sampled sets of positive and negative examples. 

To run SSVM, we define the structured error $\errorterm(\outputvarv_i,\outputvarv)$ as a decomposable function w.r.t. the positions of the joints
\begin{equation*}
\errorterm(\outputvarv_i,\outputvarv) = \frac{1}{|\mathcal{V}|}\sum_{p \in \mathcal{V}} \max \left(1,\frac{\Vert \bm{l}_{i,p}-\bm{l}_{p}\Vert_2^2}{s_i^2}\right)
\end{equation*}
where $\bm{l}_{i,p}$ belongs to the ground-truth labeling~$\outputvarv_i$ and $\bm{l}_{p}$ belongs to the labeling~$\outputvarv$.
The quantity $s_i\in\R$ is the scaling factor and is defined by the distance between the left shoulder and the right hip in the ground-truth labeling~$\outputvarv_i$.
Similarly to~\citetsup{osokin14_ECCV}, the complex dependence of the loss on the ground-truth labeling does not influence the complexity of the max oracle.

For the defined structured score and loss, the optimization problem of the max oracle can be exactly solved by the max-sum belief propagation algorithm on an acyclic graph with messages computed with the generalized distance transform (GDT)~\cite{felzenszwalb2005distTrans}.
The usage of GDTs allows to significantly reduce the oracle running time, but requires positivity constraints on the connection weights~$\weightv_{P,pq, t}$.
BCFW with positivity constraints is described in Appendix~\ref{app:pos_const}.

The resulting max oracle is quite slow (2 seconds per image) even when the belief propagation algorithm is optimized with GDTs.
Slow running time made it intractable for us to run the experiments on the full original training set consisting of $1,000$ images.
We use only the first $100$ images and refer to this dataset as LSP-small. However, both the CNN training and clustering of the displacements were done on the original training set.

\newpage

\section{Full experimental evaluation: comparing BCFW variants}
\label{app:exp1}
In Figures~\ref{fig:app:exp1_dataset1} and~\ref{fig:app:exp1_dataset2}, we give the detailed results of the experiments described in section~\ref{subsec:exp1}.
As a reminder, we are comparing different methods (caching vs no caching, gap sampling vs uniform sampling, pairwise FW steps vs regular FW steps) on different datasets in three main regimes w.r.t.~$\regularizerweight$.
For each dataset, we use the good value of~$\regularizerweight$ (``good" meaning the smallest possible test error) together with its smaller and larger values.
The three regimes are displayed in the middle~(b), top~(a) and bottom~(c) of each subfigure, respectively.

\section{Full experimental evaluation: regularization path}
\label{app:exp2}
In this section, we evaluate the regularization path method proposed in Section~\ref{sec:reg_path}.
Our experiments are organized in three stages.
First, we choose the $\factorRegPath$ parameter for Algorithm~\ref{alg:rpAlgorithmSSVM} computing the $\epsilon$-approximate regularization path.
Second, we define and evaluate the heuristic regularization path.
Finally, we compare both $\epsilon$-approximate and heuristic paths against the grid search on multiple datasets.

\paragraph{$\epsilon$-approximate path.}
Algorithm~\ref{alg:rpAlgorithmSSVM} for computing the $\epsilon$-approximate regularization path has one parameter~$\factorRegPath$ controlling how large are the induction steps in terms of~$\regularizerweight$.
This parameter provides the trade-off between the number of breakpoints and the accuracy of optimization for each breakpoint.
We explore this trade-off in Figure~\ref{fig:regPath_epsApprox_ocrSmall}.
For several values of $\factorRegPath$, we report the cumulative number of effective passes (bottom plots) and cumulative time (top plots) required to get  $\epsilon$-approximate solution for each $\regularizerweight$. 
We report both plots for the two methods used as the SSVM solver: BCPFW with gap sampling and caching; BCFW with gap sampling and without caching.
We conclude that both too small ($< 0.5$) and too large ($\approx 1$) values of parameter~$\factorRegPath$ result in slower methods, but overall the method is not too sensitive to~$\factorRegPath$. In all remaining experiments, we use $\factorRegPath=0.9$ when computing the $\epsilon$-approximate path.

\paragraph{Heuristic path.}
When computing an $\epsilon$-approximate regularization path, Algorithm~\ref{alg:rpAlgorithmSSVM} needs to perform at least one pass over the dataset at each breakpoint to check the convergence criterion, i.e., that the gap is not larger than~$\epsilon$.
When having many breakpoints, this extra pass can be of significant cost, especially for large values of~$\regularizerweight$ where the SSVM solver converges very quickly.
However, in practice, we observe that the stale gap estimates are often good enough to determine convergence and actually checking the convergence criterion is not necessary.
At the cost of loosing the guarantees, we can expect computational speed-up.
We refer to the result of Algorithm~\ref{alg:rpAlgorithmSSVM} when the SSVM solver (at line~\ref{alg:rpAlgorithmSSVM:SSVM}) uses stale gap estimates instead of the exact gaps to check the convergence as a \emph{heuristic path}.
In the case of heuristic path, the parameter~$\factorRegPath$ provides a trade-off between the running time and accuracy of the path.
In Figure~\ref{fig:regPath_heuristic_ocrSmall}, we illustrate this trade-off on the OCR-small dataset.
For several values of~$\factorRegPath$, we report the true value of the duality gap at each breakpoint (the SSVM solver terminates when the stale gap estimate is below~$\factorRegPath\epsilon$, $\epsilon=0.1$) and the cumulative time.
We use BCFW + gap sampling and BCPFW + gap sampling + cache as the SSVM solvers.
For large $\factorRegPath$, we observe that the solutions for small $\regularizerweight$ are not $\epsilon$-approximate and that the method in this regime requires less number of passes over the data, i.e., runs faster.
In what follows we always use $\factorRegPath =0.7$ for the heuristic path as it is the largest value providing accurate enough results.

\paragraph{Comparison of paths against grid search.}
Finally, we compare the regularization path methods to the standard grid search approach.
We define a grid of 31 values chosen to cover all the values of $\regularizerweight$ used in the experiments of Section~\ref{app:exp1} ($2^{15},2^{14},\dots,2^{-15}$).
For each value of the grid, we optimize the SSVM objective optimize either independently or by warm-starting from the nearest larger value.
We have considered two variants of warm start: keeping the primal variables and rescaling the dual variables, or keeping the dual variables and rescaling the primal variables.
In our experiments, we do not notice any consistent difference between the two approaches, so we only use the first type of warm start.

Figure~\ref{fig:regPath_4datasets} presents the results on the four datasets: HorseSeg-small (Figure~\ref{fig:regPath_horseSmall}),
OCR-small (Figure~\ref{fig:regPath_ocrSmall}),
HorseSeg-medium (Figure~\ref{fig:regPath_horseMedium}),
OCR-large (Figure~\ref{fig:regPath_ocrLarge}).

For both the OCR-large and HorseSeg-medium datasets, neither heuristic, nor $\epsilon$-approximate path methods did not reach their stopping criterion and were terminated at the time limit of 24 hours.
In the case of OCR-large, the regularization path reached a value of $\regularizerweight$ smaller than the lower limit of the predefined grid, i.e., $2^{-15}$. 
In the case of HorseSeg-medium, the grid search methods did not reach the lower bound of the grid and were terminated at the 24-hour time limit.

\newpage
\def \tableHeight {3cm}
\begin{figure*}
\resizebox{0.98\textwidth}{!}{
\begin{tabular}{c@{\; \; }c@{\qquad \quad}c@{\; \; }c}
\multicolumn{4}{c}{\includegraphics[width=1.1\textwidth]{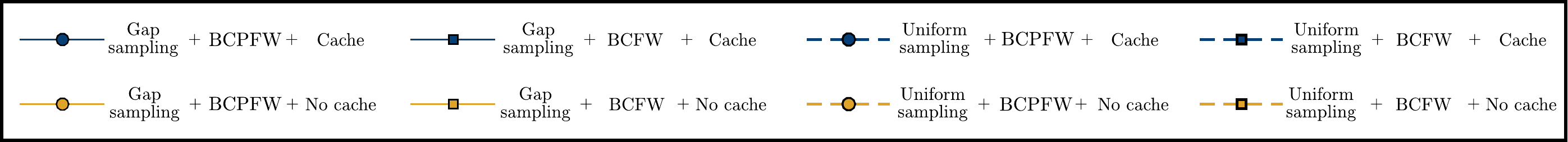}}  \\[0.1cm]
\includegraphics[height=\tableHeight]{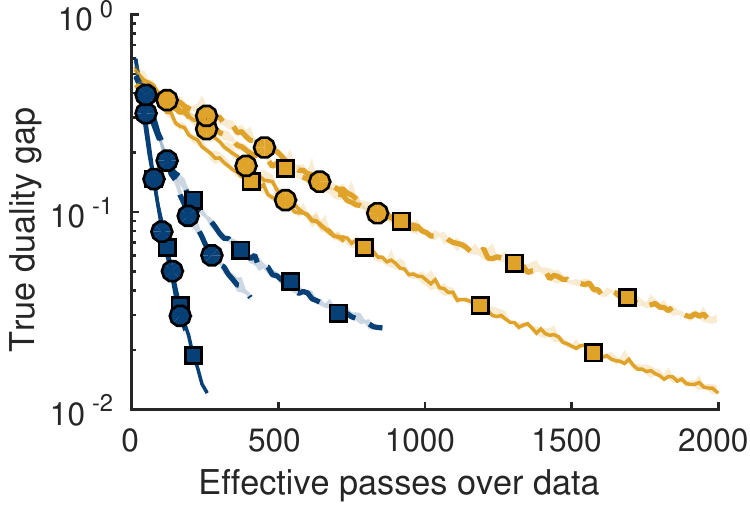} &
\includegraphics[height=\tableHeight, trim=0.6cm 0cm 0cm 0cm , clip]{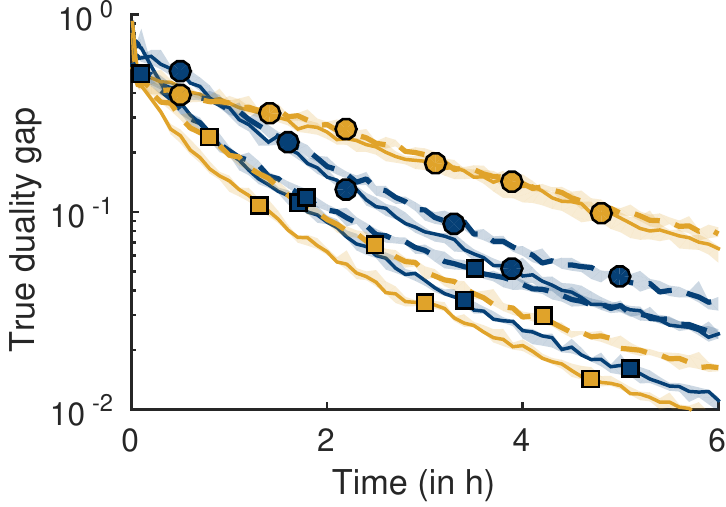}
&
\includegraphics[height=\tableHeight]{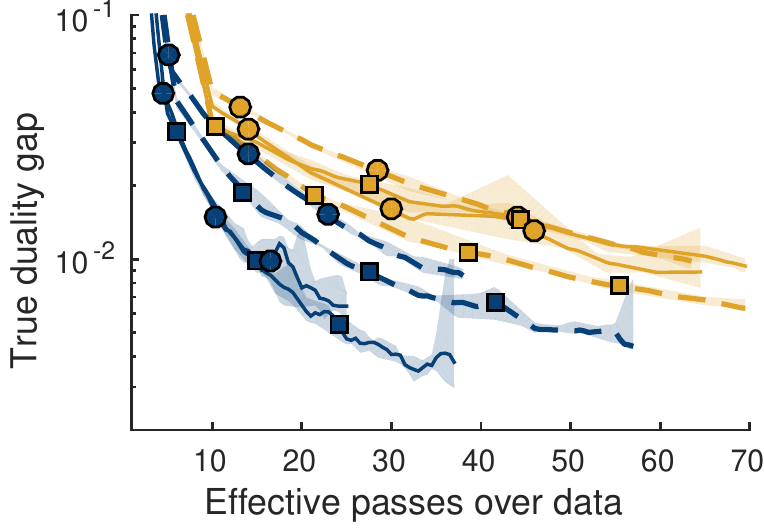} &
\includegraphics[height=\tableHeight, trim=0.6cm 0cm 0cm 0cm, clip]{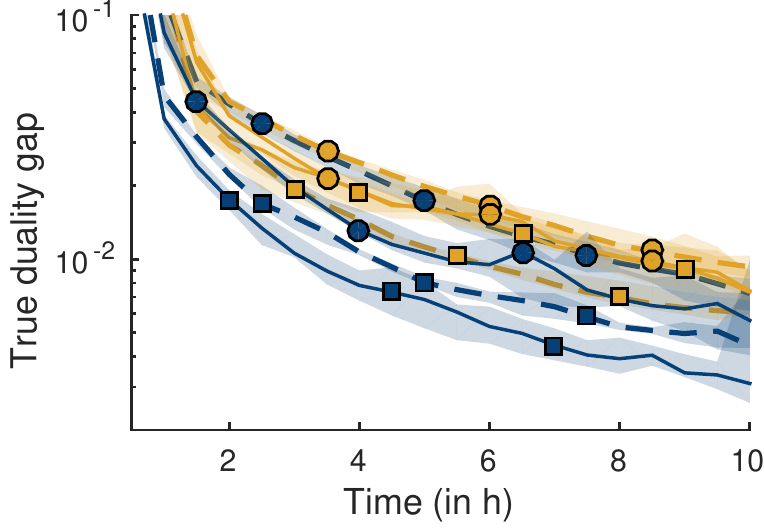}
\\
\multicolumn{2}{c}{(a) OCR-large, $\regularizerweight=0.0001$} &  \multicolumn{2}{c}{(a) CoNLL, $\regularizerweight=0.0001$}
\\
\includegraphics[height=\tableHeight]{figures/gapVsPass_ocrLarge_lambda_1e-03.pdf} &
\includegraphics[height=\tableHeight, trim=0.6cm 0cm 0cm 0cm, clip]{figures/gapVsTime_ocrLarge_lambda_1e-03.pdf}
&
\includegraphics[height=\tableHeight]{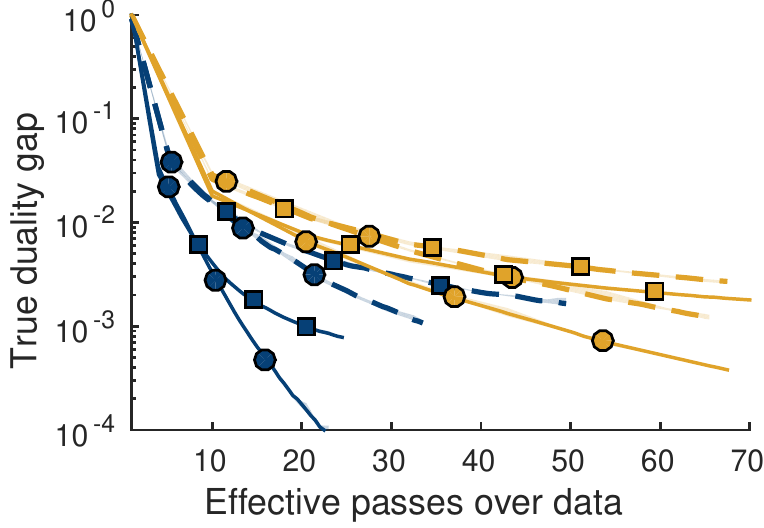} &
\includegraphics[height=\tableHeight, trim=0.6cm 0cm 0cm 0cm, clip]{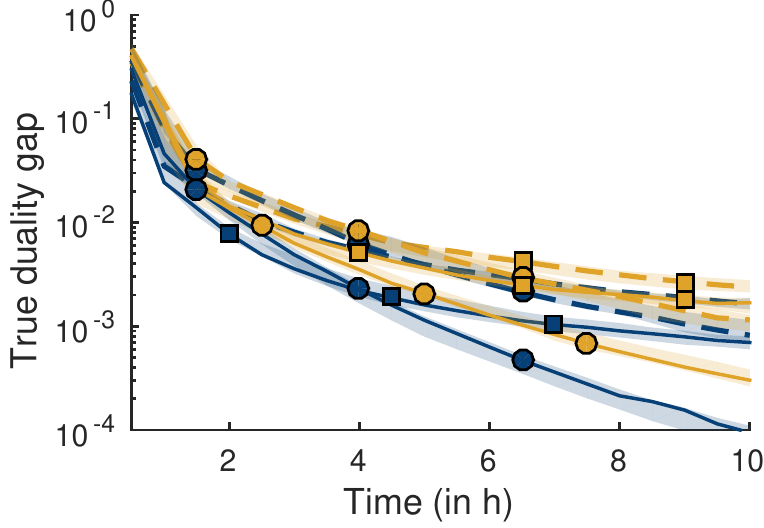}
\\
\multicolumn{2}{c}{(b) OCR-large, $\regularizerweight=0.001$} &  \multicolumn{2}{c}{(b) CoNLL, $\regularizerweight=0.01$}
\\
\includegraphics[height=\tableHeight]{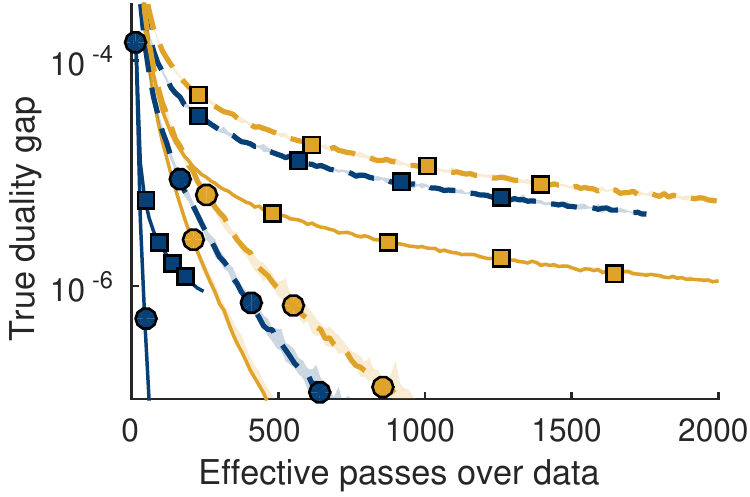} &
\includegraphics[height=\tableHeight, trim=0.6cm 0cm 0cm 0cm, clip]{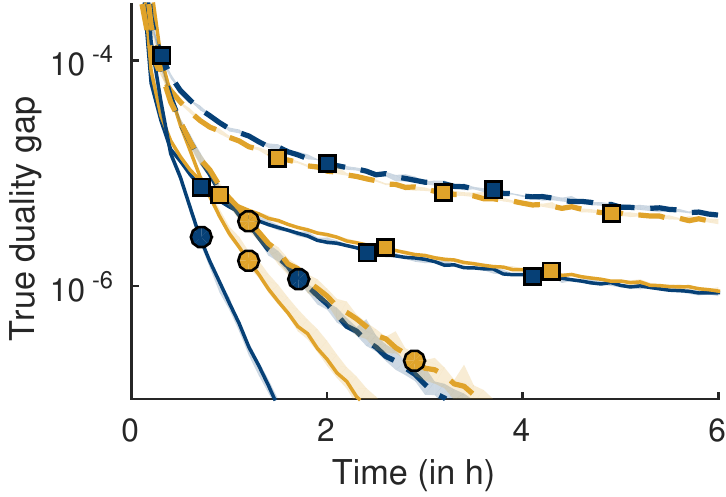}
&
\includegraphics[height=\tableHeight]{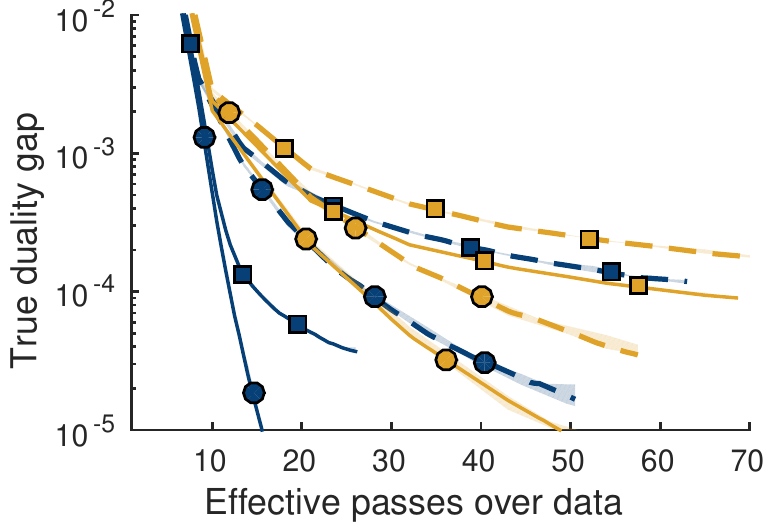} &
\includegraphics[height=\tableHeight, trim=0.6cm 0cm 0cm 0cm, clip]{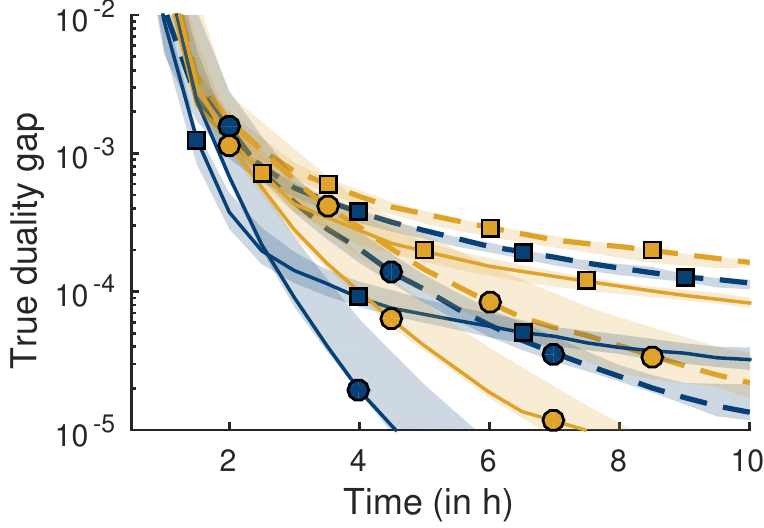}
\\
\multicolumn{2}{c}{(c) OCR-large, $\regularizerweight=0.1$} &  \multicolumn{2}{c}{(c) CoNLL, $\regularizerweight=0.1$}
\\[2mm]\hline \\[-1mm]

\includegraphics[height=\tableHeight]{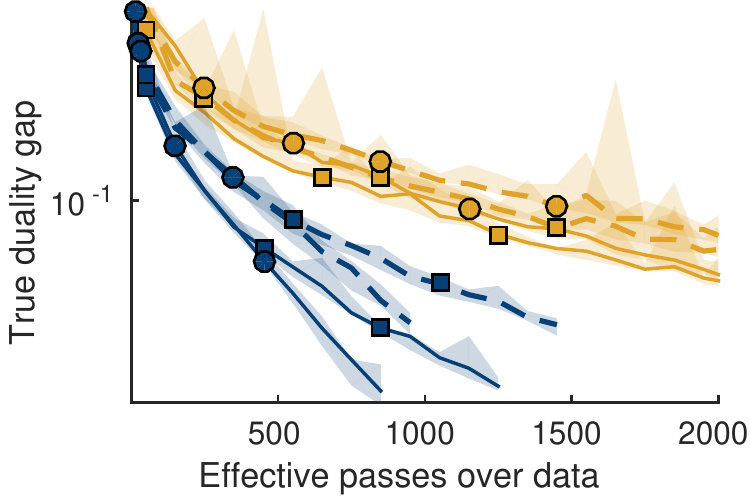} &
\includegraphics[height=\tableHeight, trim=0.6cm 0cm 0cm 0cm, clip]{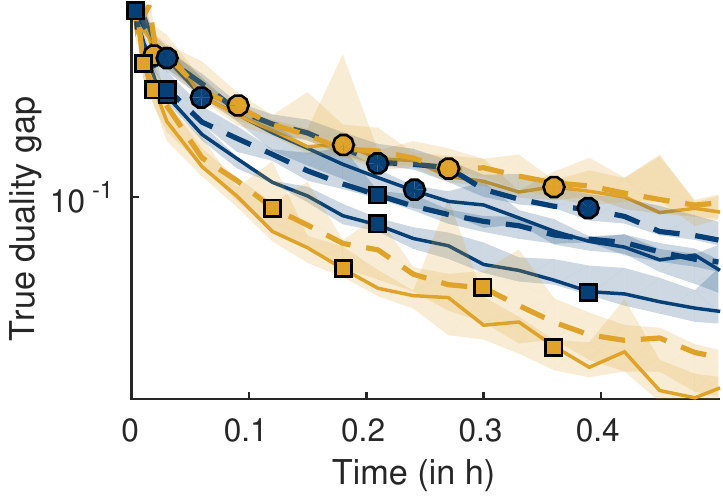}
&
\includegraphics[height=\tableHeight]{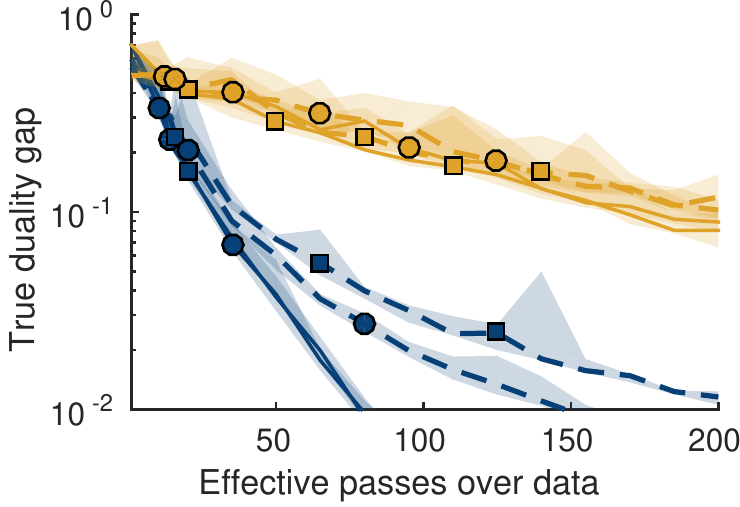} &
\includegraphics[height=\tableHeight, trim=0.6cm 0cm 0cm 0cm, clip]{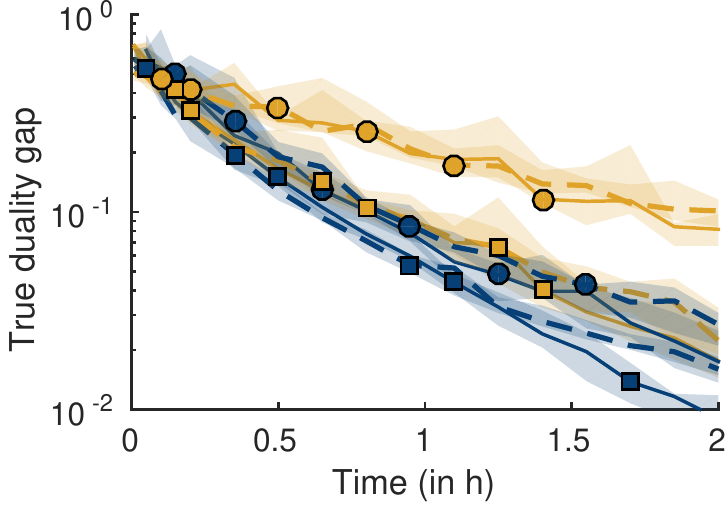}
\\
\multicolumn{2}{c}{(a) HorseSeg-small, $\regularizerweight=0.1$} &  \multicolumn{2}{c}{(a) HorseSeg-medium, $\regularizerweight=0.1$}
\\
\includegraphics[height=\tableHeight]{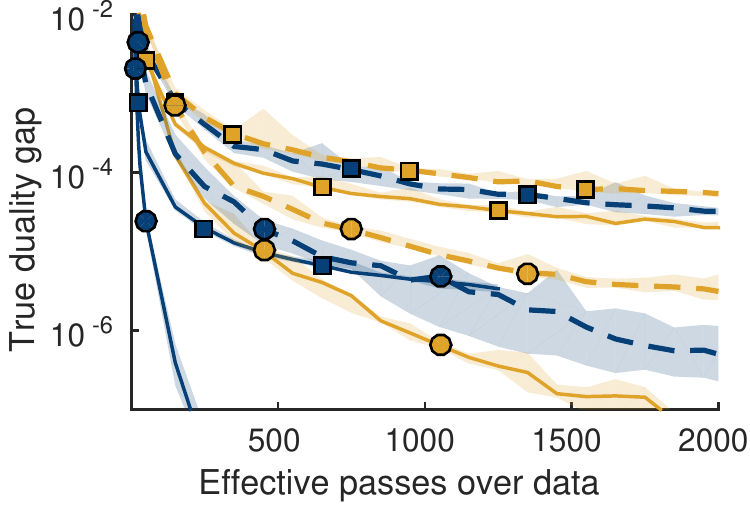} &
\includegraphics[height=\tableHeight, trim=0.6cm 0cm 0cm 0cm, clip]{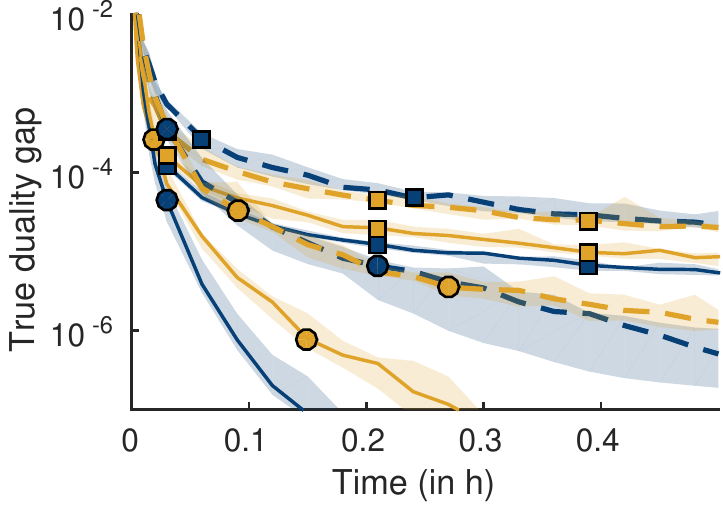}
&
\includegraphics[height=\tableHeight]{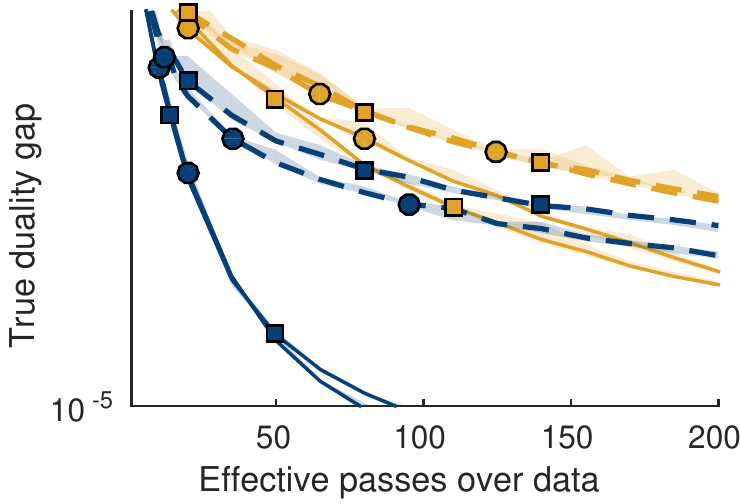} &
\includegraphics[height=\tableHeight, trim=0.6cm 0cm 0cm 0cm, clip]{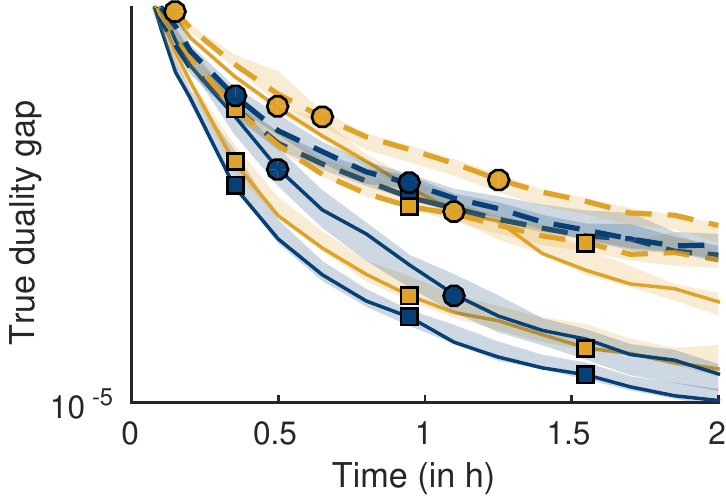}
\\
\multicolumn{2}{c}{(b) HorseSeg-small, $\regularizerweight=100$} &  \multicolumn{2}{c}{(b) HorseSeg-medium, $\regularizerweight=1$}
\\
\includegraphics[height=\tableHeight]{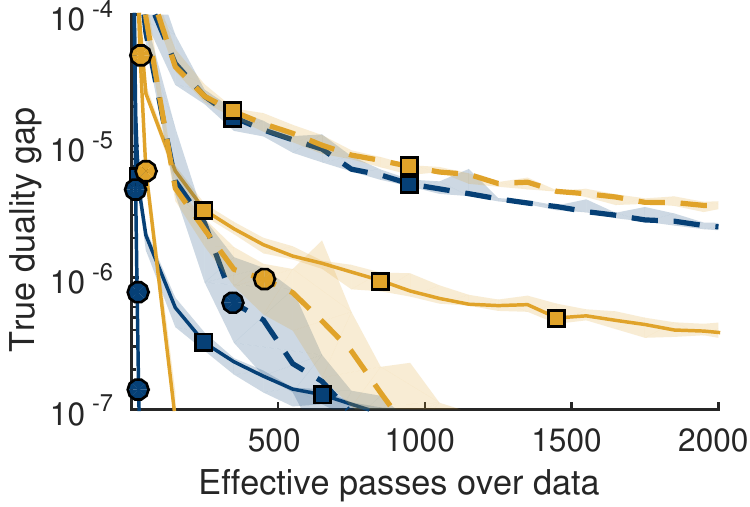} &
\includegraphics[height=\tableHeight, trim=0.6cm 0cm 0cm 0cm, clip]{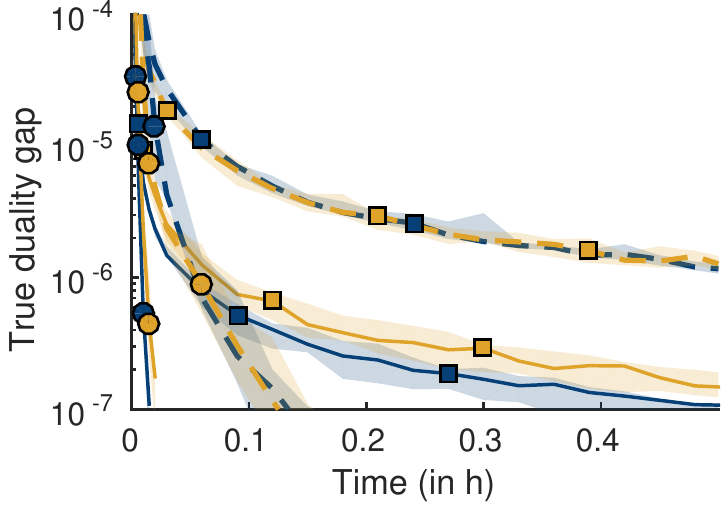}
&
\includegraphics[height=\tableHeight]{figures/gapVsPass_horseMedium_lambda_1e+01.pdf} &
\includegraphics[height=\tableHeight, trim=0.6cm 0cm 0cm 0cm, clip]{figures/gapVsTime_horseMedium_lambda_1e+01.pdf}
\\
\multicolumn{2}{c}{(c) HorseSeg-small, $\regularizerweight=1000$} &  \multicolumn{2}{c}{(c) HorseSeg-medium, $\regularizerweight=10$}
\\
\end{tabular}
}
\caption{Comparison of the variants of BCFW.
    We compare 8 different methods that can be represented by 3 binary dimensions: object sampling, caching, type of FW steps.
    We represent these dimensions in different ways:
    the dimension of caching (in blue) versus no caching (in orange) is represented through colors,
    the dimension of gap sampling (solid lines) versus uniform sampling (dashed lines) is represented through line style,
    the dimension of pairwise FW steps (circle markers) versus regular FW steps (square markers) is represented through markers.
    For each method, we report both the number of effective passes over data ($n$ oracle calls) and the running time against obtained duality gap (computed offline). \textit{The figure is continued in Figure~\ref{fig:app:exp1_dataset2}.}
  }
  \label{fig:app:exp1_dataset1}
\end{figure*}

\clearpage

\begin{figure*}[t!]
\resizebox{0.98\textwidth}{!}{
\begin{tabular}{c@{\; \; }c@{\qquad \quad}c@{\; \; }c}
\multicolumn{4}{c}{\includegraphics[width=1.1\textwidth]{figures/legend_long_frame_white_new_BCPFW.pdf}}  \\[0.1cm]
\includegraphics[height=\tableHeight]{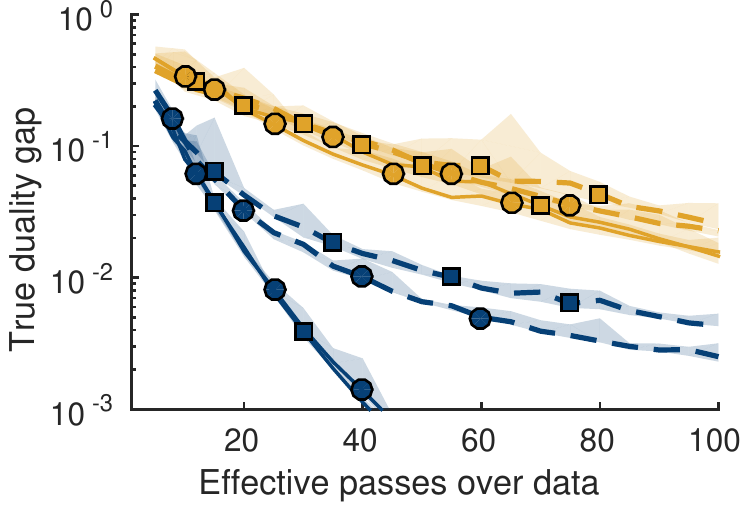} &
\includegraphics[height=\tableHeight, trim=0.6cm 0cm 0cm 0cm , clip]{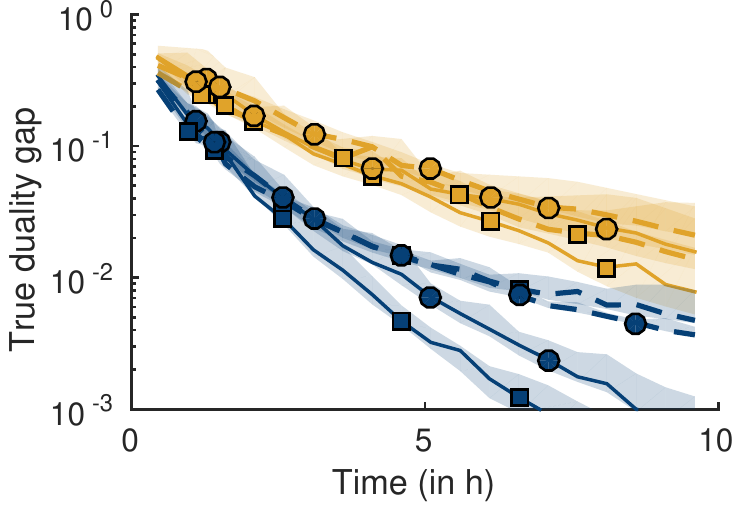}
&
\includegraphics[height=\tableHeight]{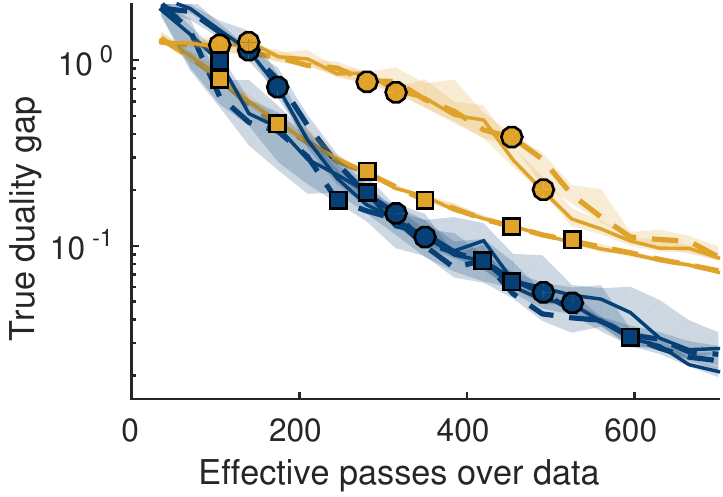} &
\includegraphics[height=\tableHeight, trim=0.6cm 0cm 0cm 0cm, clip]{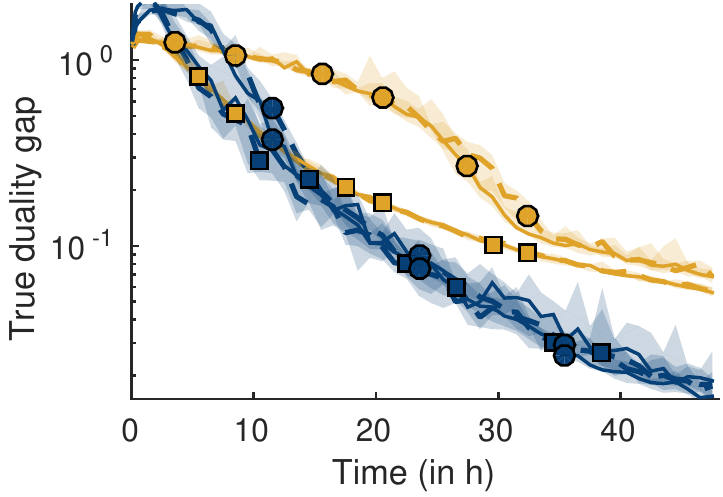}
\\
\multicolumn{2}{c}{(a) HorseSeg-large, $\regularizerweight=0.1$} &  \multicolumn{2}{c}{(a) LSP-small, $\regularizerweight=10$}
\\
\includegraphics[height=\tableHeight]{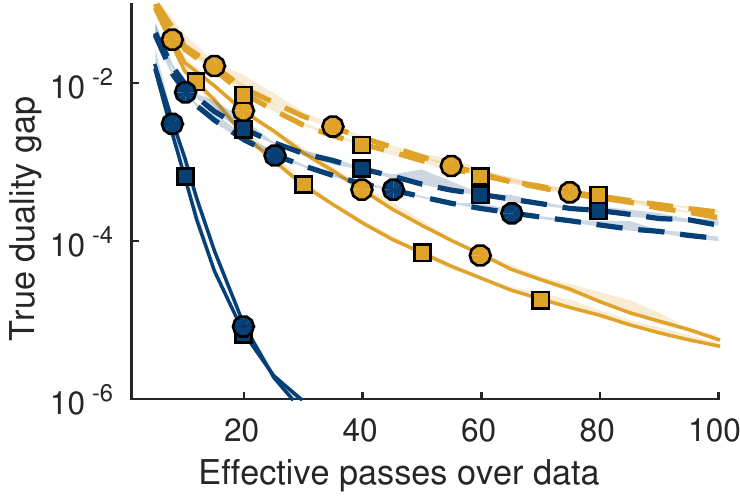} &
\includegraphics[height=\tableHeight, trim=0.6cm 0cm 0cm 0cm, clip]{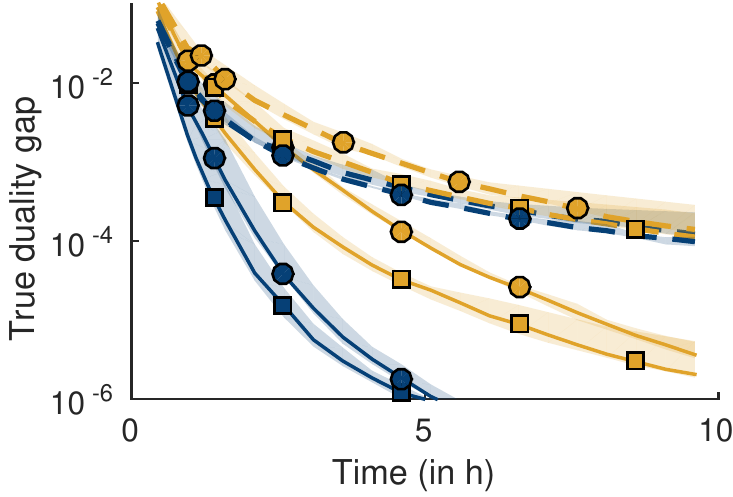}
&
\includegraphics[height=\tableHeight]{figures/gapVsPass_lspSmall_lambda_1e+02.pdf} &
\includegraphics[height=\tableHeight, trim=0.6cm 0cm 0cm 0cm, clip]{figures/gapVsTime_lspSmall_lambda_1e+02.pdf}
\\
\multicolumn{2}{c}{(b) HorseSeg-large, $\regularizerweight=1$} &  \multicolumn{2}{c}{(b) LSP-small, $\regularizerweight=100$}
\\
\includegraphics[height=\tableHeight]{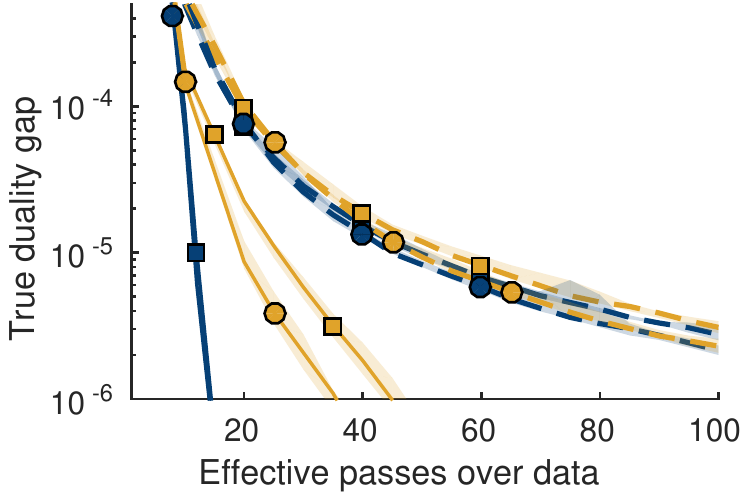} &
\includegraphics[height=\tableHeight, trim=0.6cm 0cm 0cm 0cm, clip]{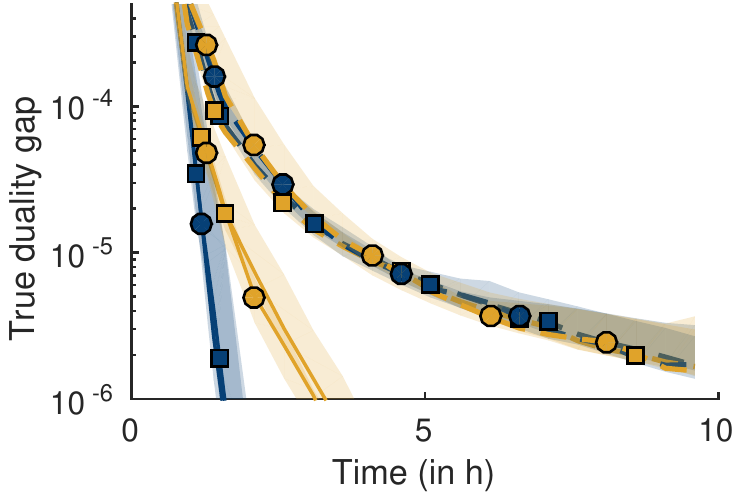}
&
\includegraphics[height=\tableHeight]{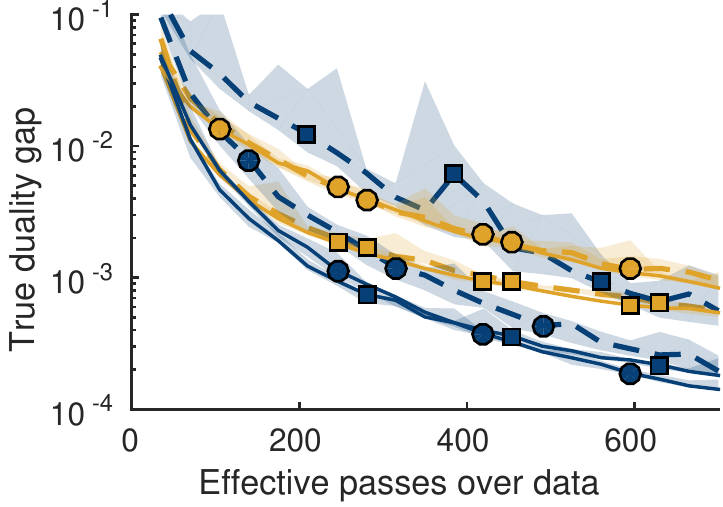} &
\includegraphics[height=\tableHeight, trim=0.6cm 0cm 0cm 0cm, clip]{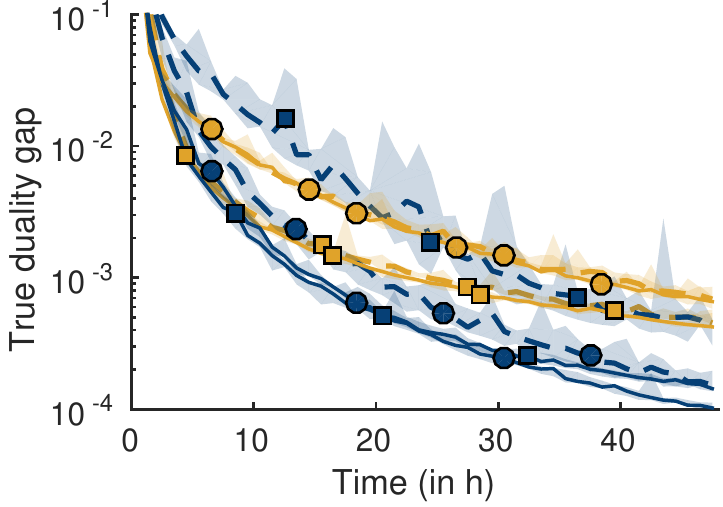}
\\
\multicolumn{2}{c}{(c) HorseSeg-large, $\regularizerweight=10$} &  \multicolumn{2}{c}{(c) LSP-small, $\regularizerweight=1000$}
\\[-0cm]
\end{tabular}
}
\caption{\label{fig:app:exp1_dataset2} \textit{Continuation of Figure~\ref{fig:app:exp1_dataset1}.}
    Comparison of the variants of BCFW on HorseSeg-large and LSP-small.
    }
\end{figure*}

\begin{figure*}[h!]
    \centering
    \begin{subfigure}[b]{0.48\textwidth}
	\resizebox{\textwidth}{!}{
    \begin{tabular}{c@{$\:$}c}
        \includegraphics[width=0.5\textwidth]{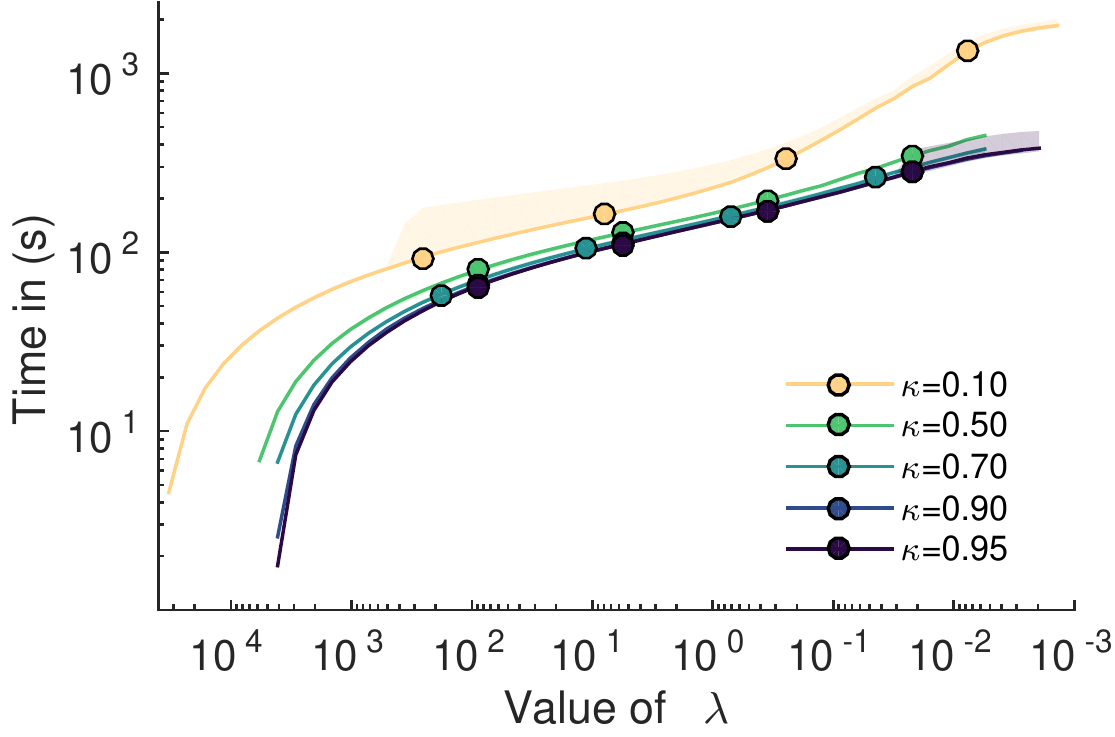} &
        \includegraphics[width=0.5\textwidth]{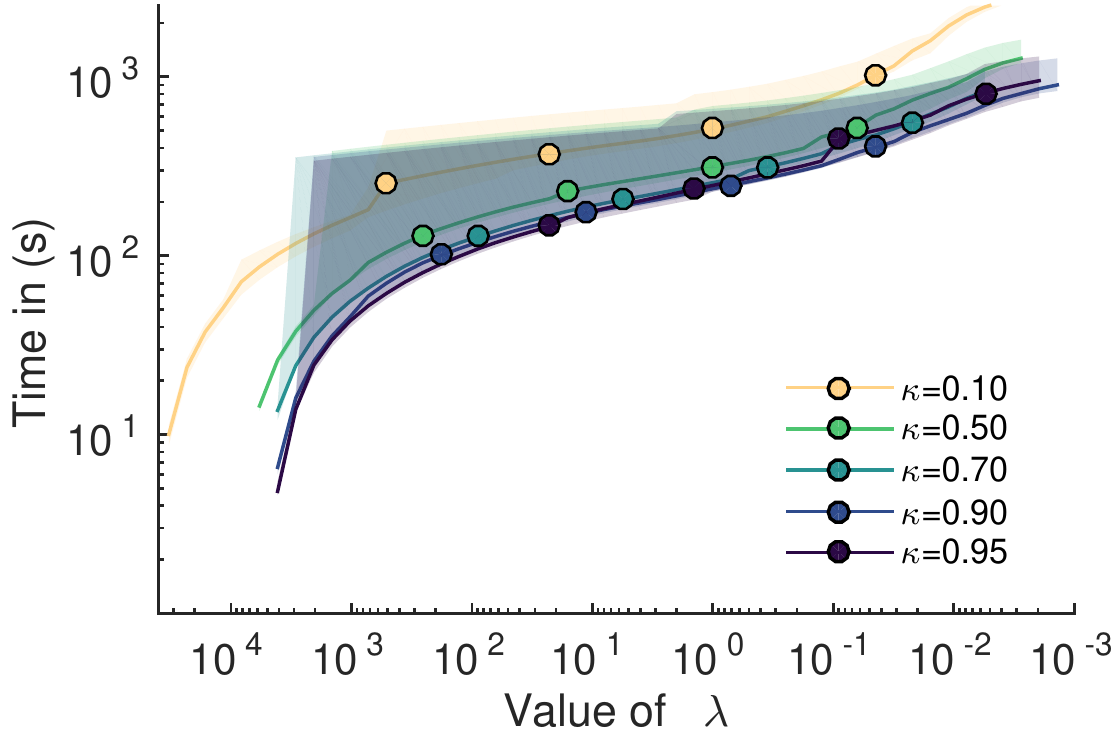}\\
        \includegraphics[width=0.5\textwidth]{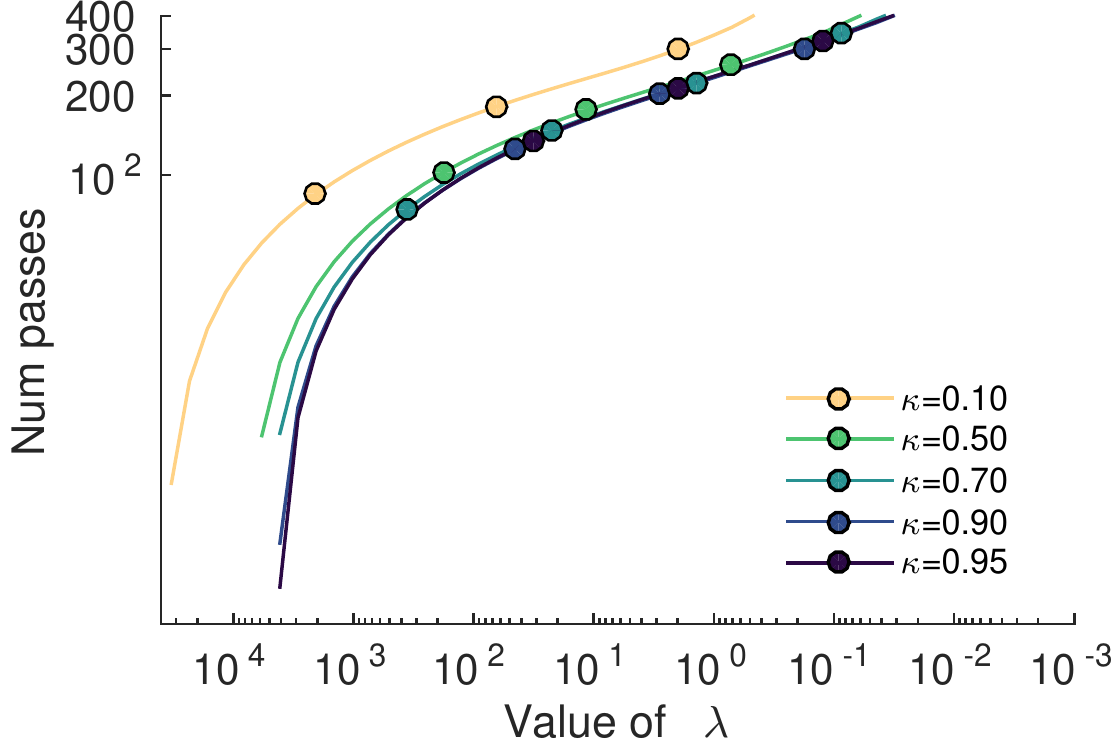} &
        \includegraphics[width=0.5\textwidth]{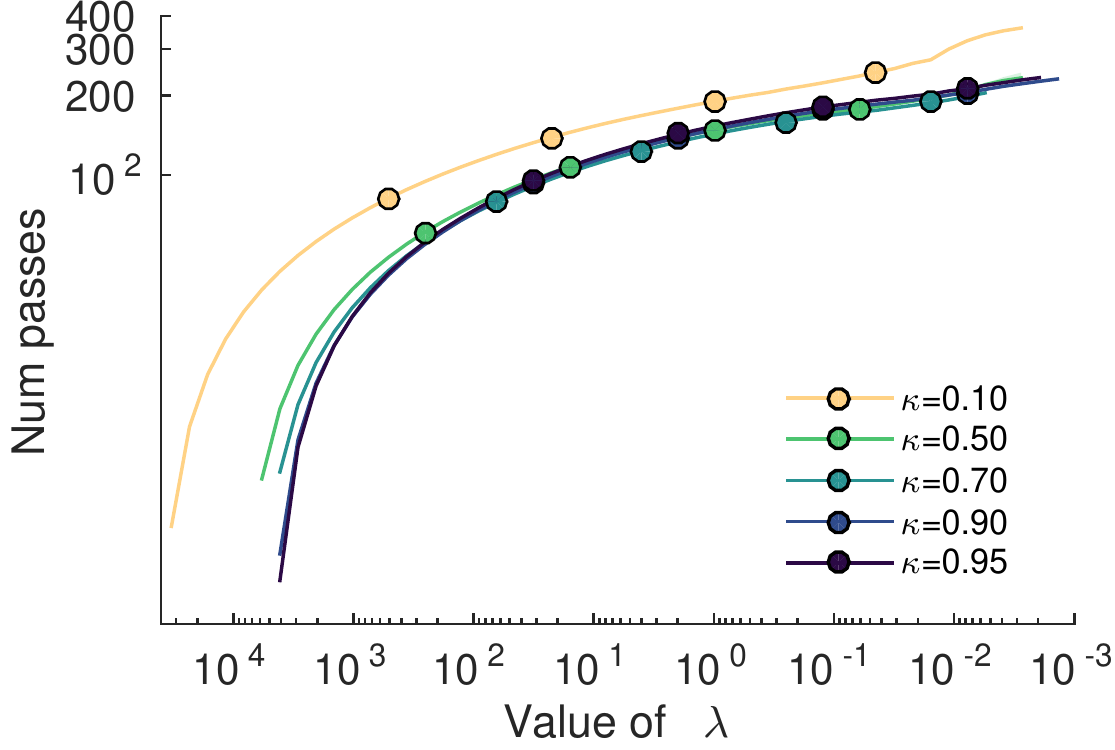}\\
        {\scriptsize BCFW + gap sampling}
        &
        {\scriptsize BCPFW + gap sampling + caching}
        \\[-0.0cm]
    \end{tabular}
    }
    \caption{
        \label{fig:regPath_epsApprox_ocrSmall} $\epsilon$-approximate regularization paths. 
        For different values of $\factorRegPath$, we report the cumulative number of effective passes (bottom) and the cumulative time (top) required to get an $\epsilon$-approximate solution for each $\regularizerweight$. We analyze the two methods: BCFW + gap sampling (left) and BCPFW + gap sampling + caching (right).
    }
    \end{subfigure}
     \hfill
    \begin{subfigure}[b]{0.48\textwidth}
\resizebox{\textwidth}{!}{
    \begin{tabular}{c@{$\:$}c}
        \includegraphics[width=0.5\textwidth]{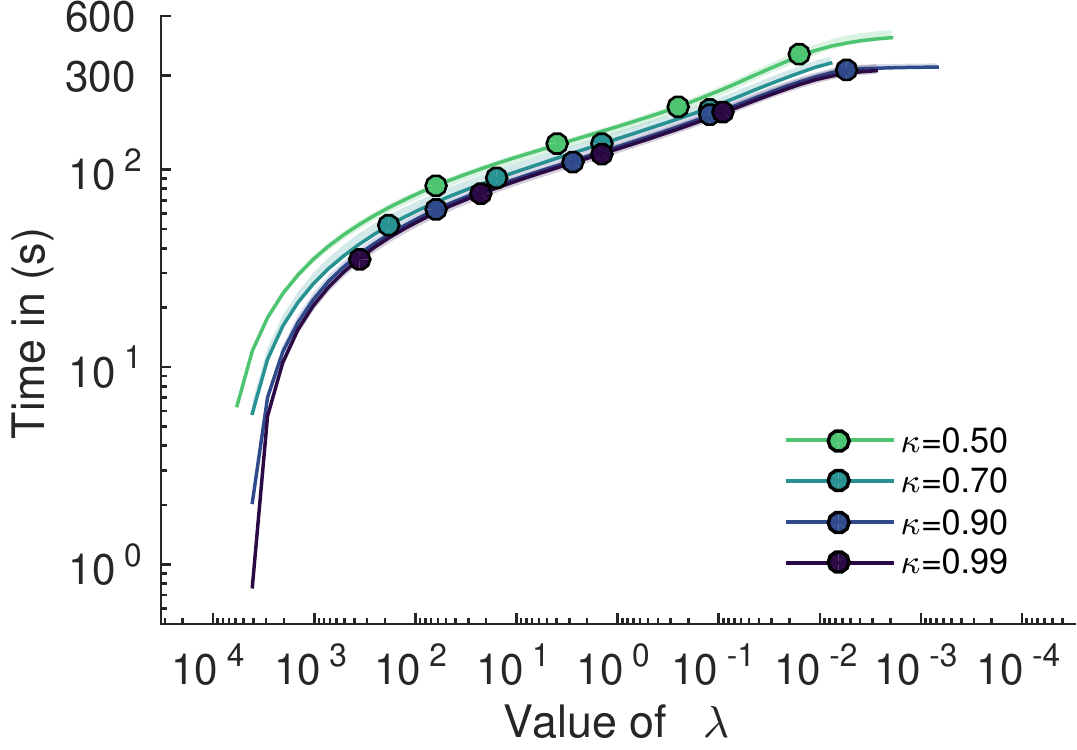} &
        \includegraphics[width=0.5\textwidth]{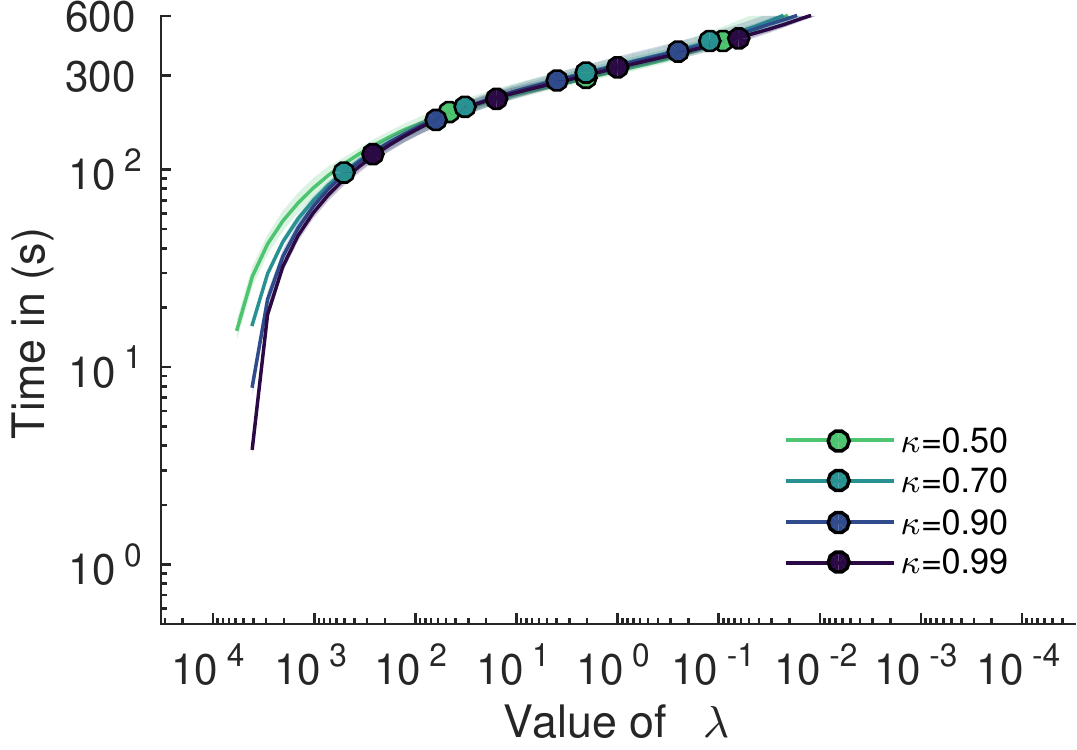}\\
        \includegraphics[width=0.5\textwidth]{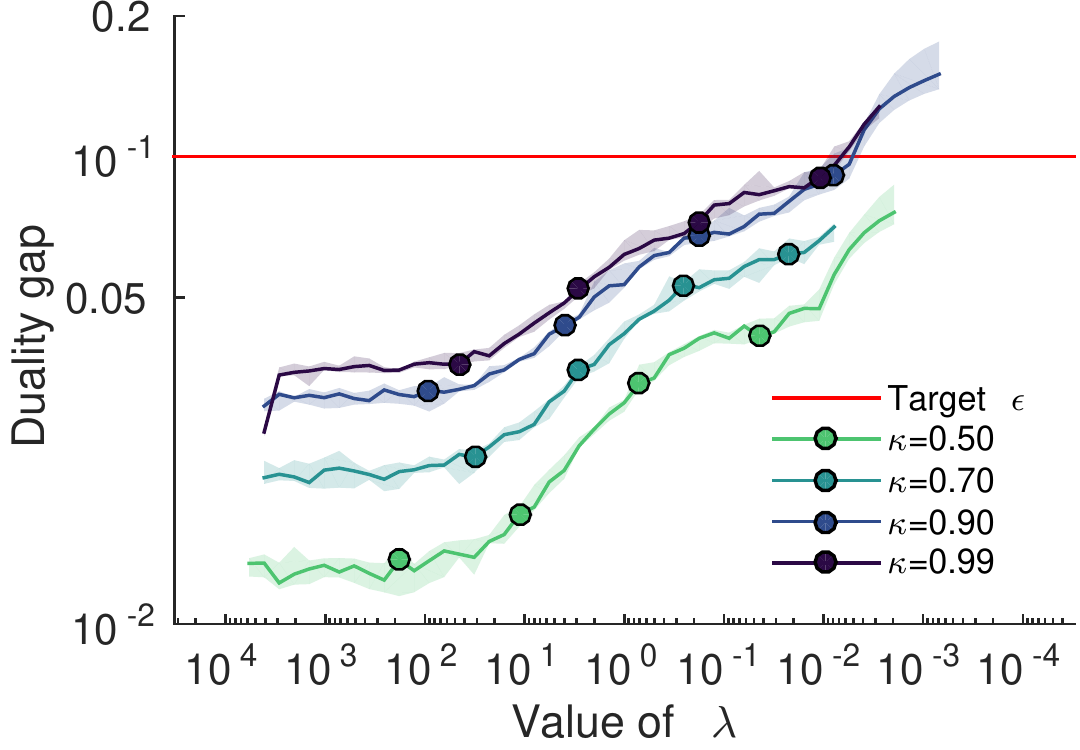} &
        \includegraphics[width=0.5\textwidth]{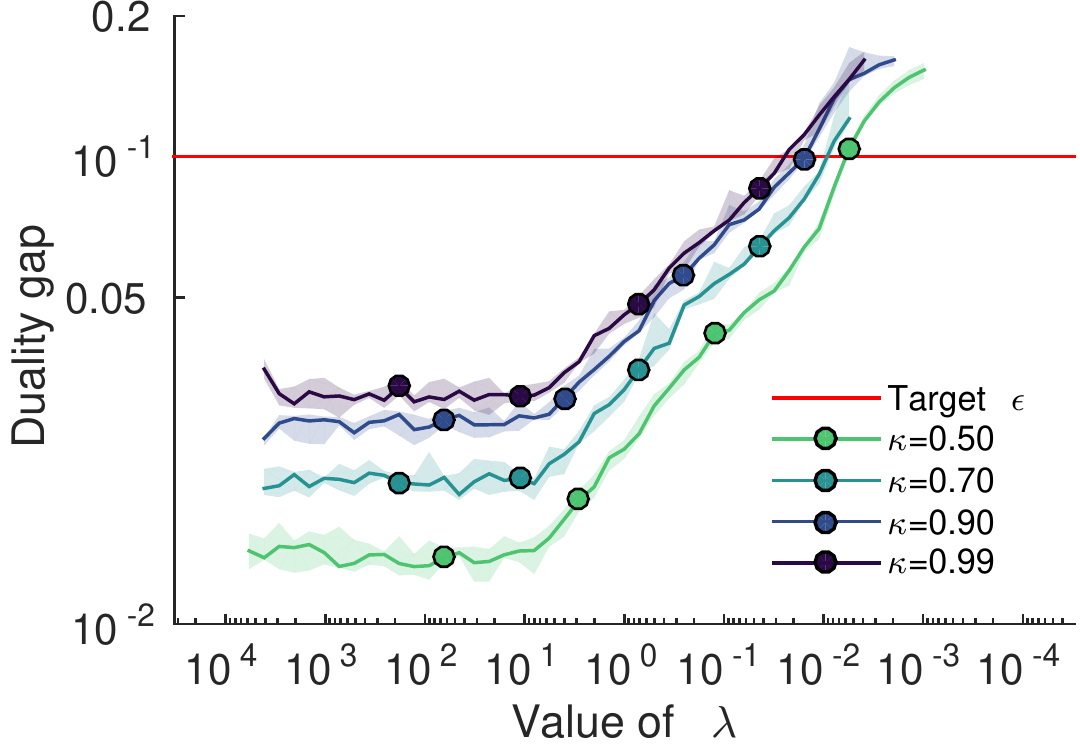}\\
        {\scriptsize BCFW + gap sampling}
        &
        {\scriptsize BCPFW + gap sampling + caching}
        \\[-0.0cm]
    \end{tabular}
    }
    \caption{
        \label{fig:regPath_heuristic_ocrSmall} Heuristic regularization paths.
        For different values of $\factorRegPath$, we report the cumulative time required to compute a path until $\regularizerweight$ (top) along with the true value of the duality gap obtained for each $\regularizerweight$ (bottom).We analyze the two methods: BCFW + gap sampling (left) and BCPFW + gap sampling + caching (right).
    }
    \end{subfigure}
	\caption{Experiment exploring the effect of $\factorRegPath$ for regularization paths computed on the OCR-small dataset.}
	\label{fig:bigFigure}
\end{figure*}

\clearpage
\begin{figure*}
\centering
\begin{subfigure}[b]{0.48\textwidth}
    \begin{tabular}{c@{$\:$}c}
        \includegraphics[width=0.49\textwidth]{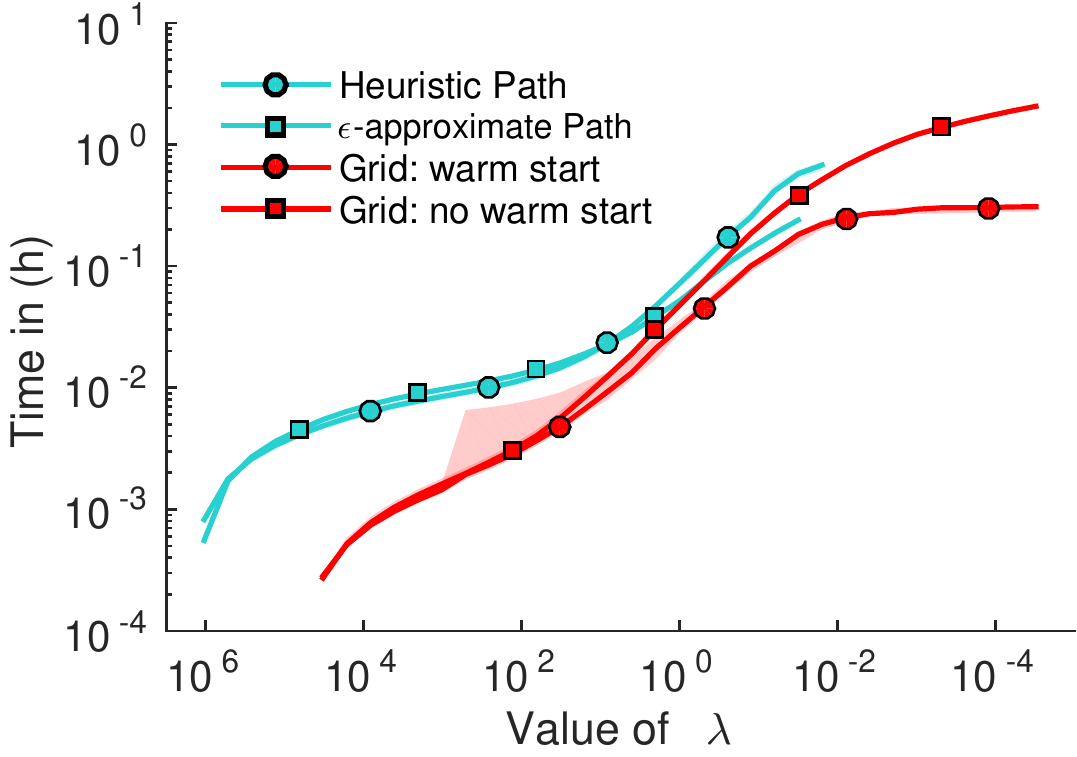} &
        \includegraphics[width=0.49\textwidth]{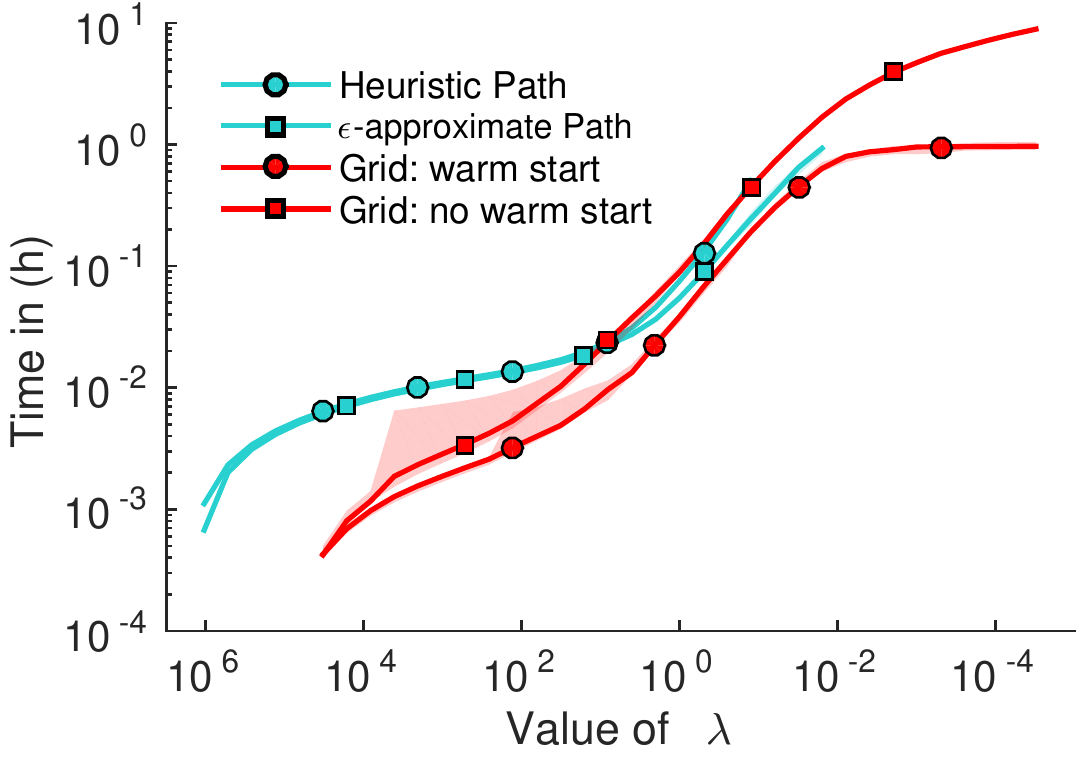}\\
        \includegraphics[width=0.49\textwidth]{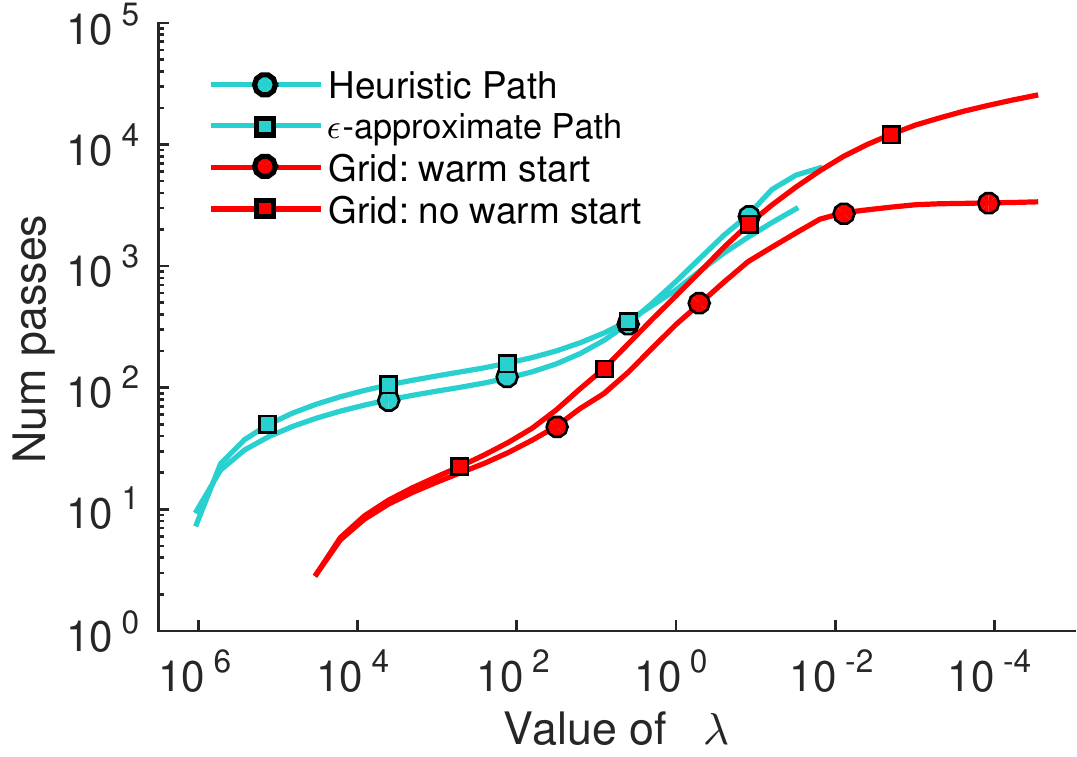} &
        \includegraphics[width=0.49\textwidth]{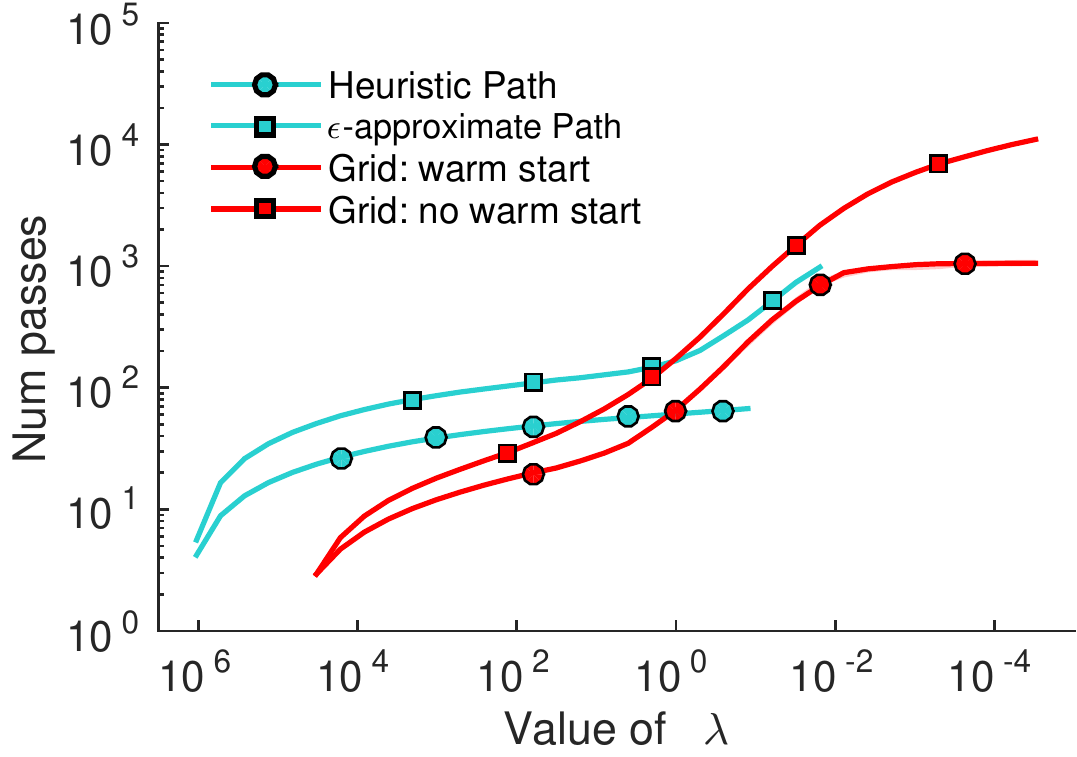}\\
        {\scriptsize BCFW + gap sampling}
        &
        {\scriptsize BCPFW + gap sampling + caching}
        \\[-0.0cm]
    \end{tabular}
    \caption{
        \label{fig:regPath_horseSmall} HorseSeg-small
        }
\end{subfigure}
\quad
\begin{subfigure}[b]{0.48\textwidth}
    \begin{tabular}{c@{$\:$}c}
        \includegraphics[width=0.49\textwidth]{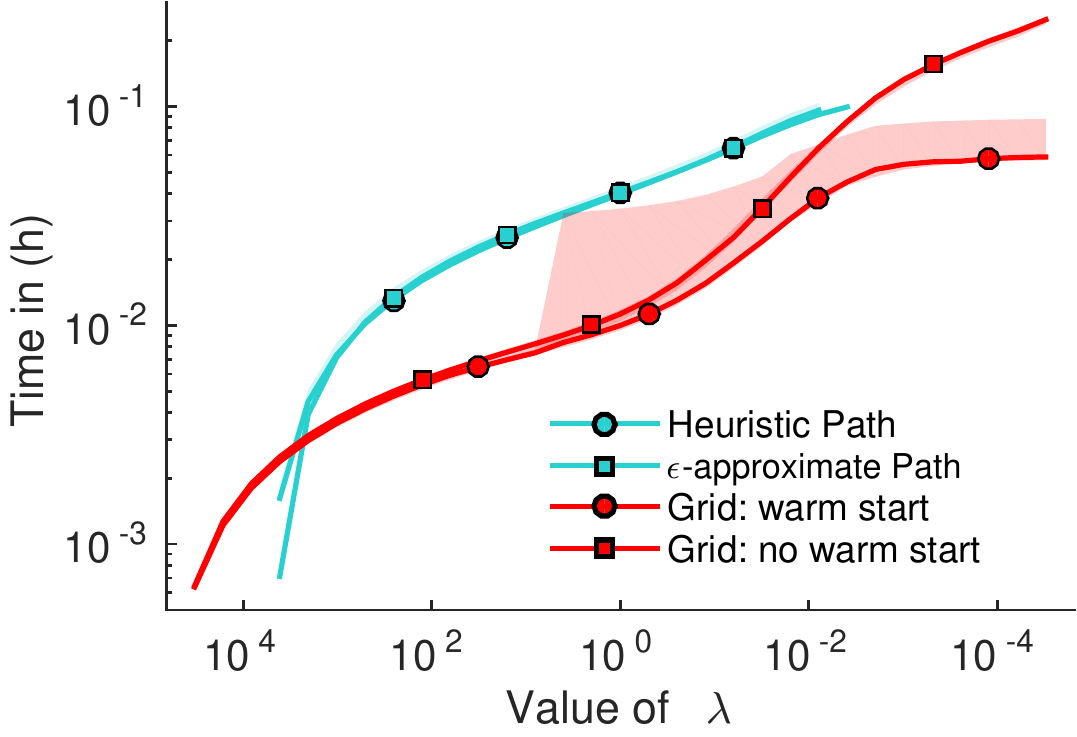} &
        \includegraphics[width=0.49\textwidth]{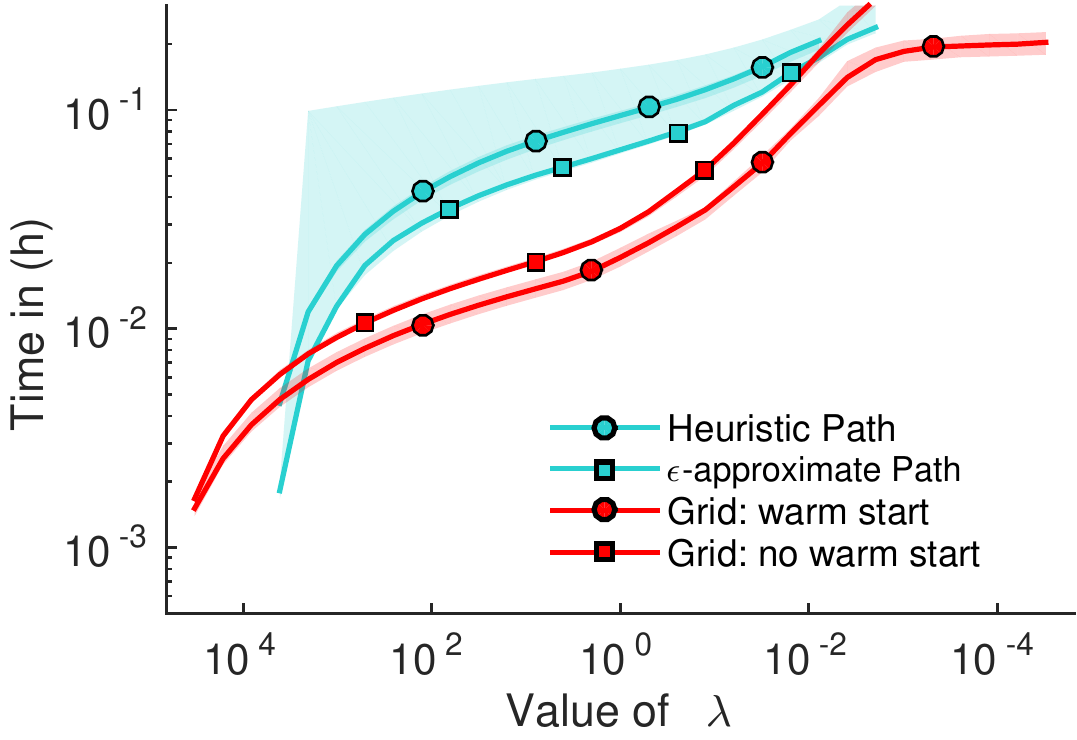}\\
        \includegraphics[width=0.49\textwidth]{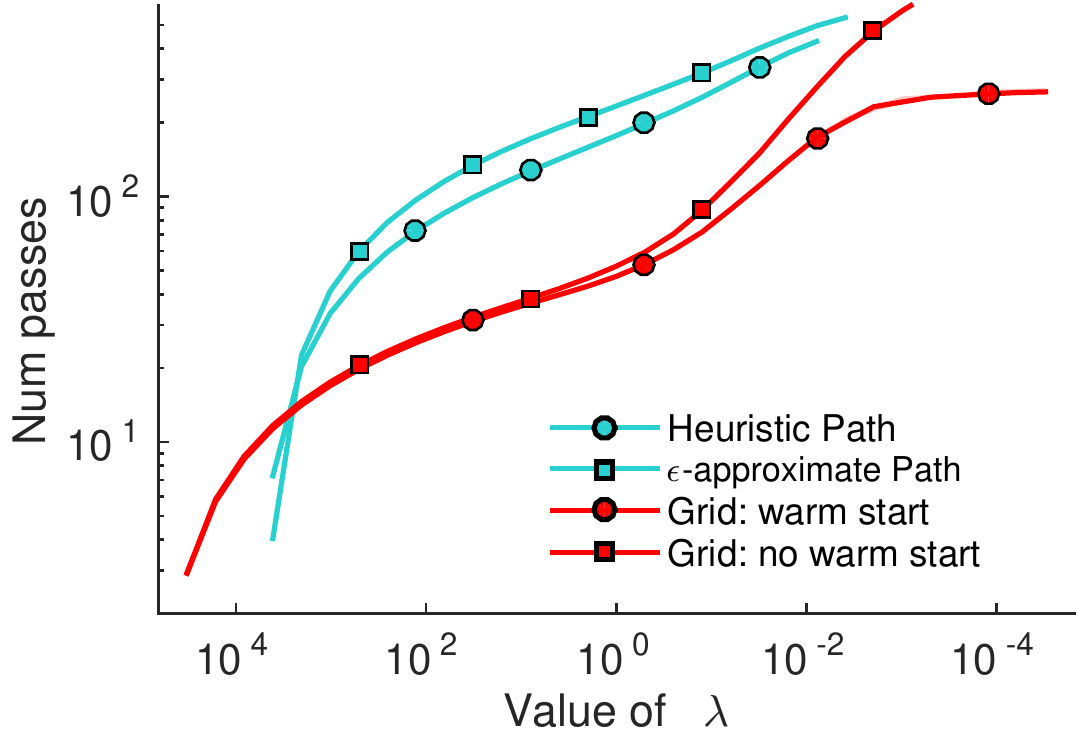} &
        \includegraphics[width=0.49\textwidth]{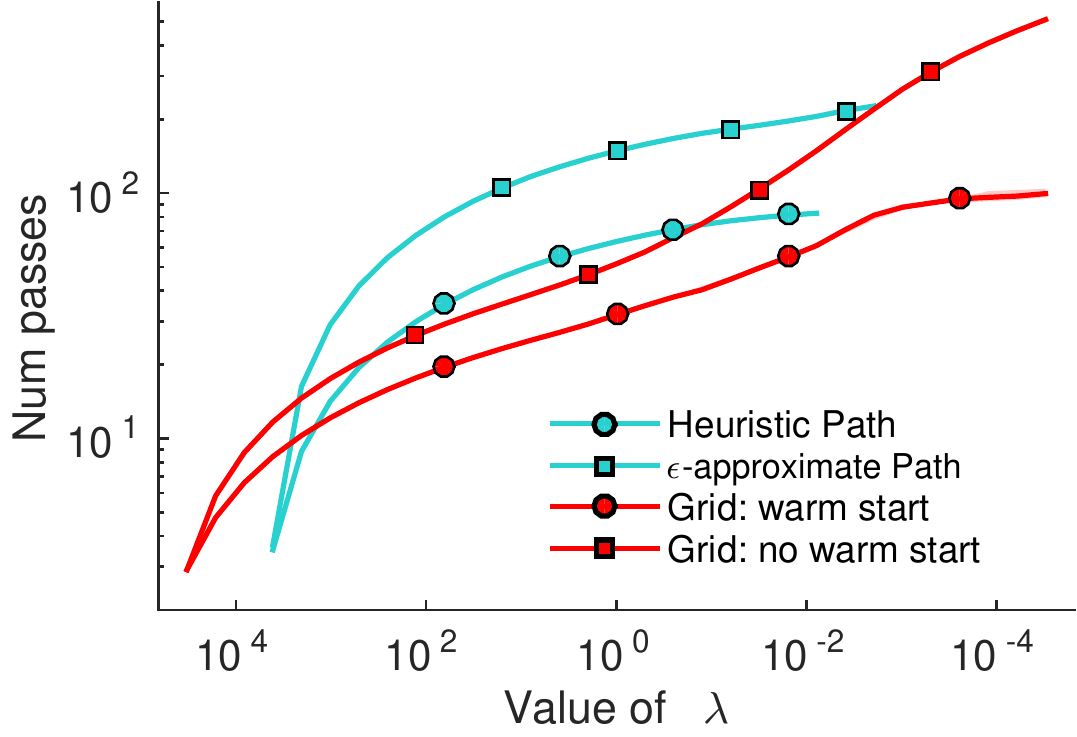}\\
        {\scriptsize BCFW + gap sampling}
        &
        {\scriptsize BCPFW + gap sampling + caching}
        \\[-0.0cm]
    \end{tabular}
    \caption{
        \label{fig:regPath_ocrSmall} OCR-small
        }
\end{subfigure}
\\[5mm]
\begin{subfigure}[b]{0.48\textwidth}
    \begin{tabular}{c@{$\:$}c}
        \includegraphics[width=0.49\textwidth]{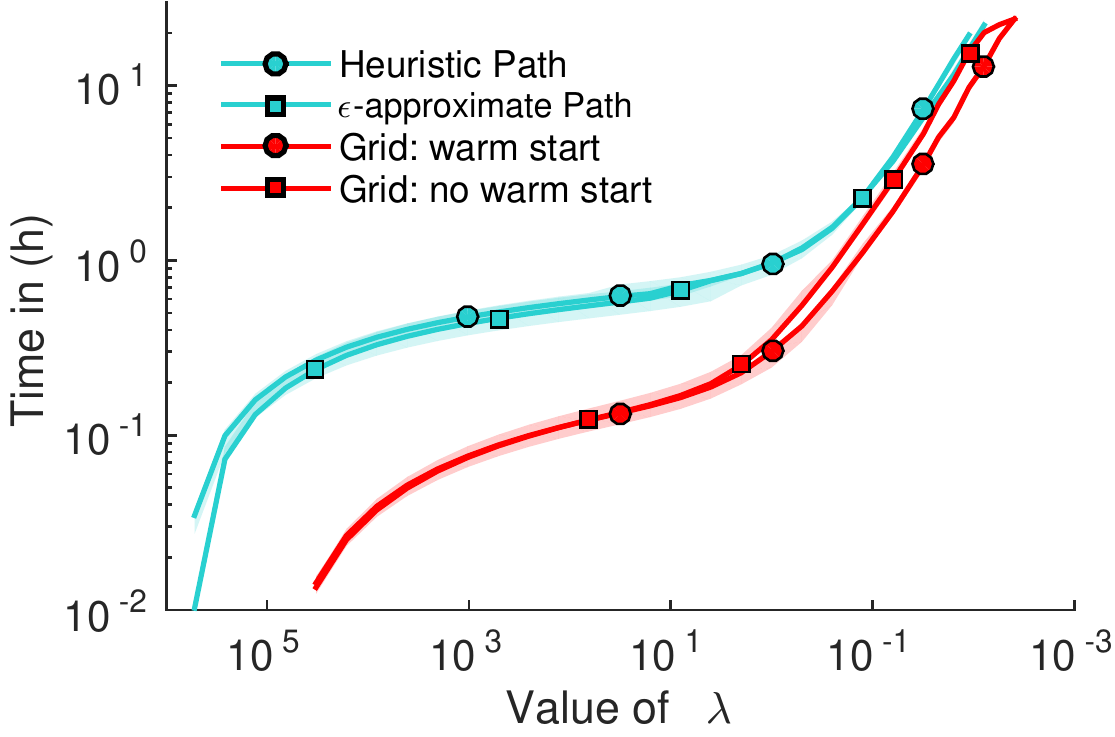} &
        \includegraphics[width=0.49\textwidth]{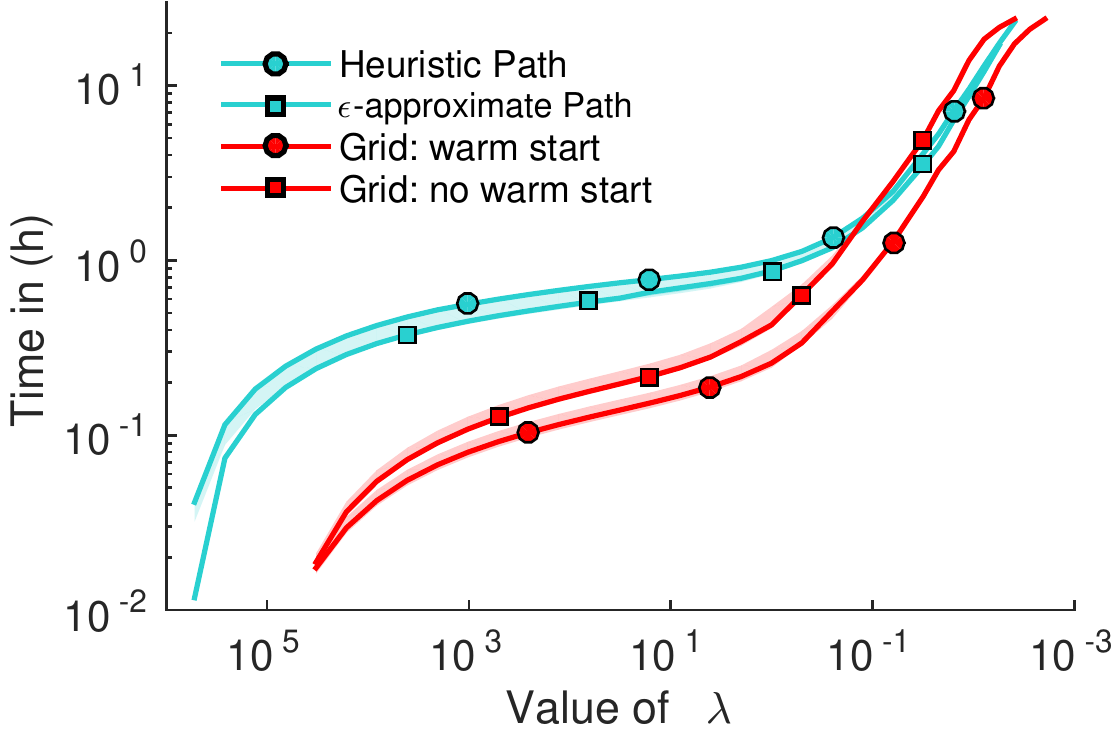}\\
        \includegraphics[width=0.49\textwidth]{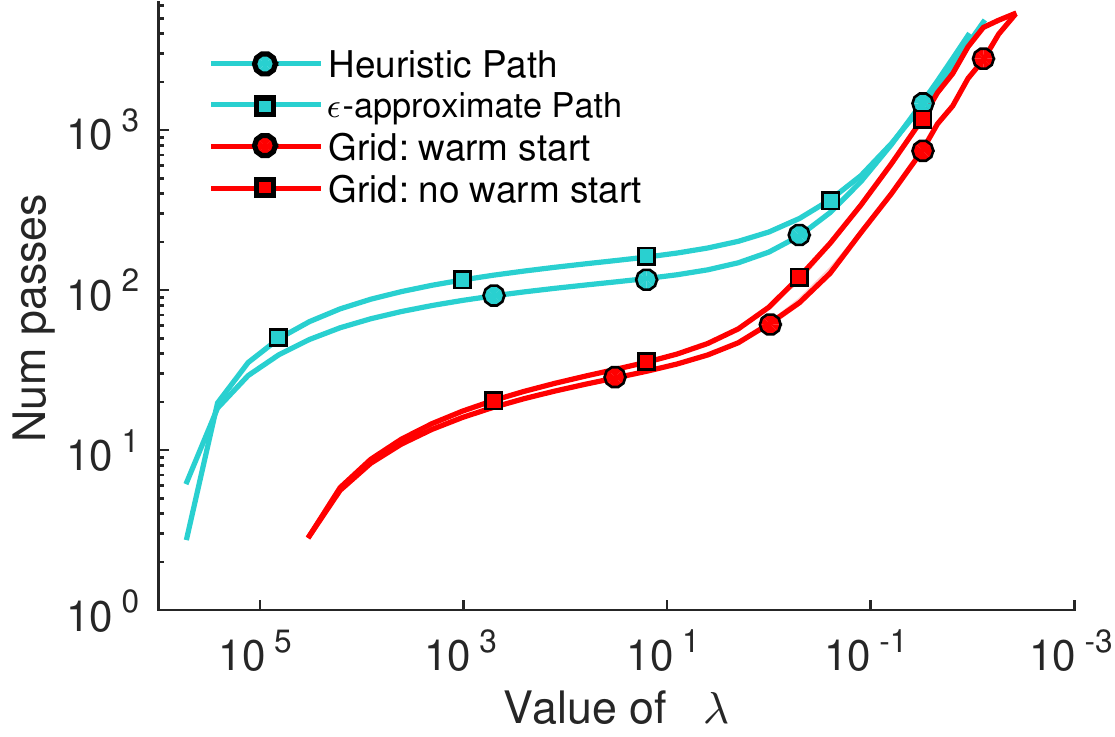} &
        \includegraphics[width=0.49\textwidth]{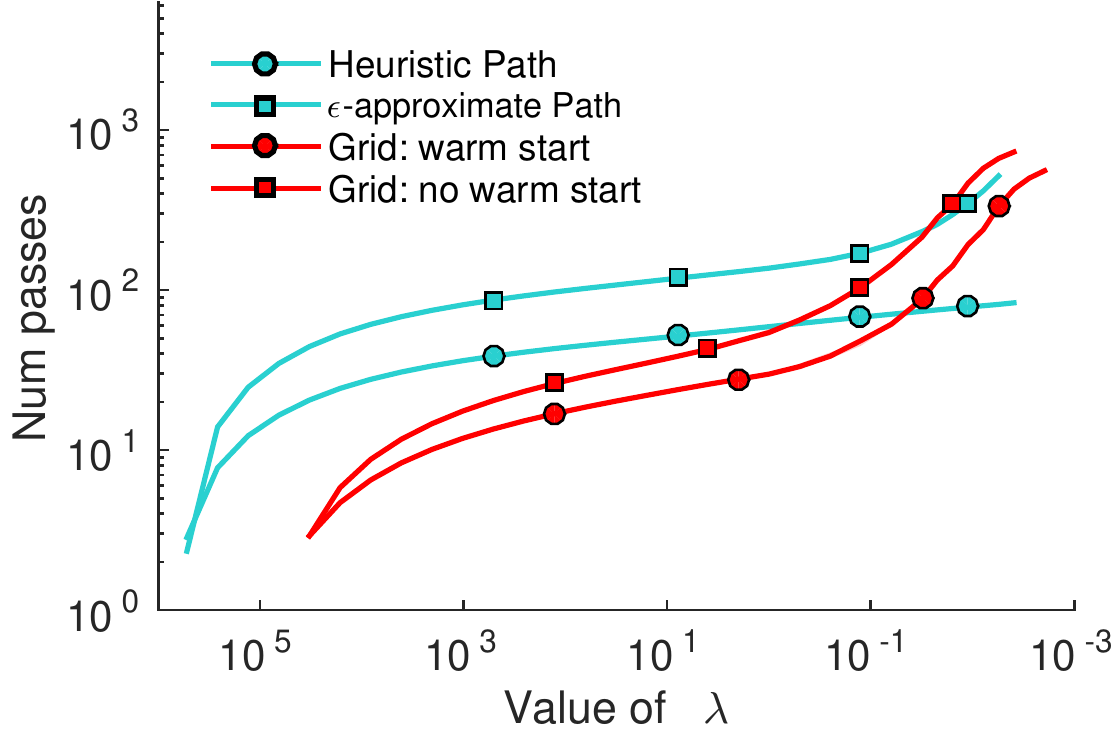}\\
        {\scriptsize BCFW + gap sampling}
        &
        {\scriptsize BCPFW + gap sampling + caching}
        \\[-0.0cm]
    \end{tabular}
    \caption{
        \label{fig:regPath_horseMedium} HorseSeg-medium
        }
\end{subfigure}
\quad
\begin{subfigure}[b]{0.48\textwidth}
    \begin{tabular}{c@{$\:$}c}
        \includegraphics[width=0.49\textwidth]{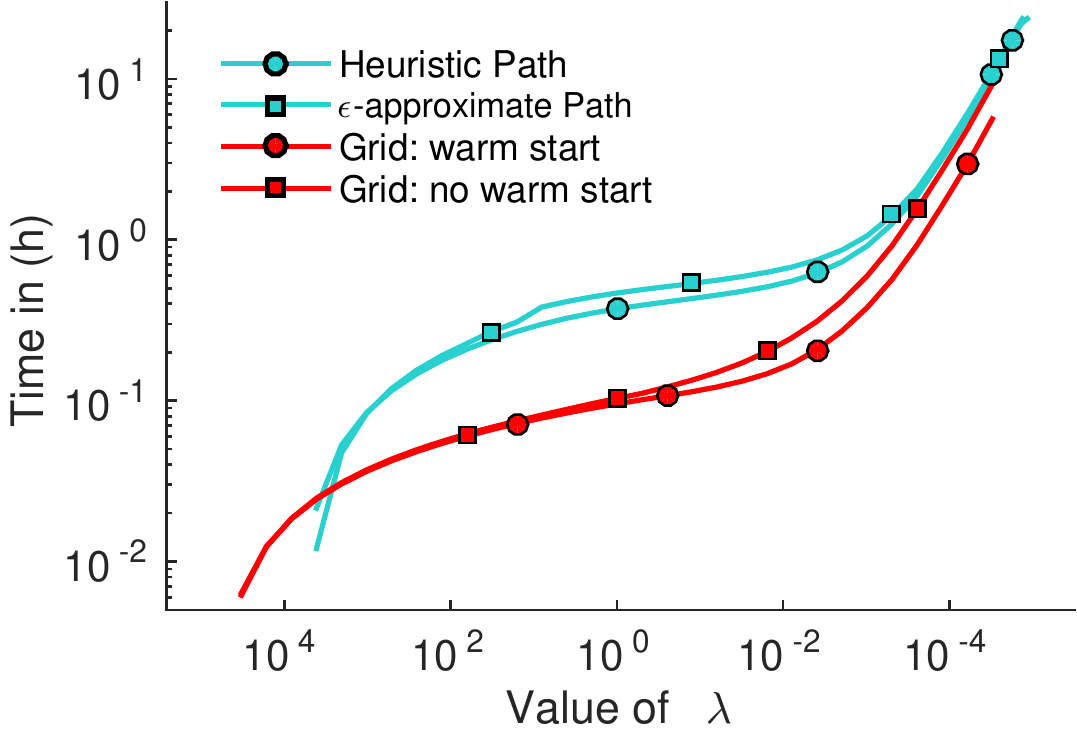} &
        \includegraphics[width=0.49\textwidth]{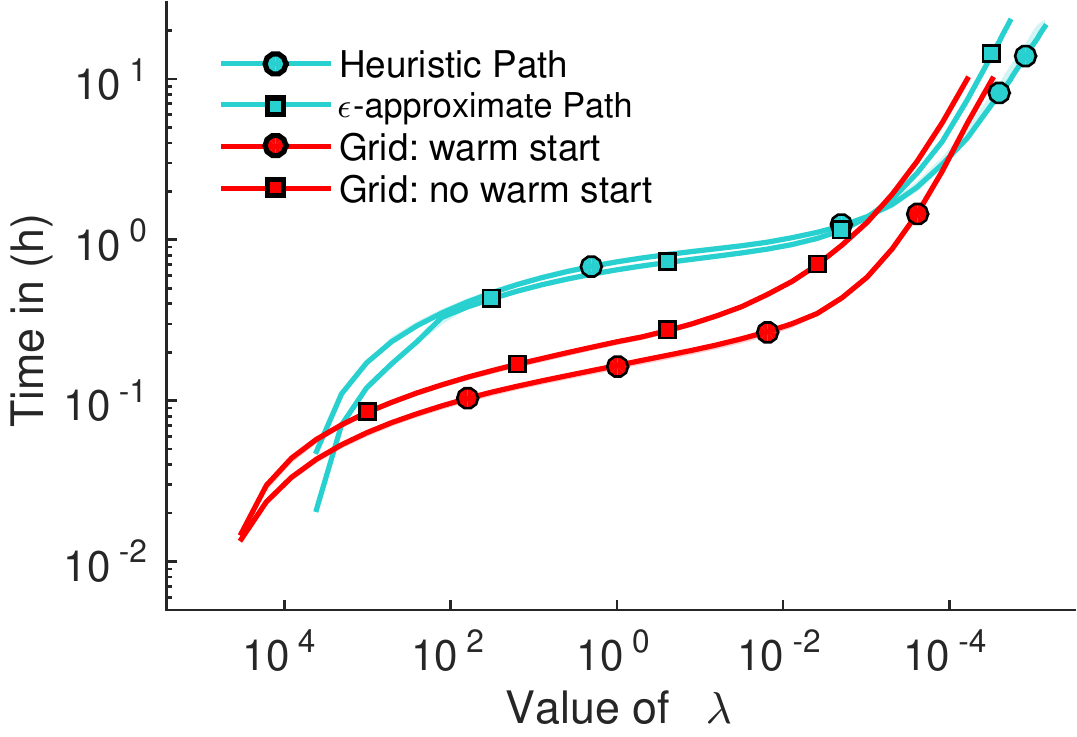}\\
        \includegraphics[width=0.49\textwidth]{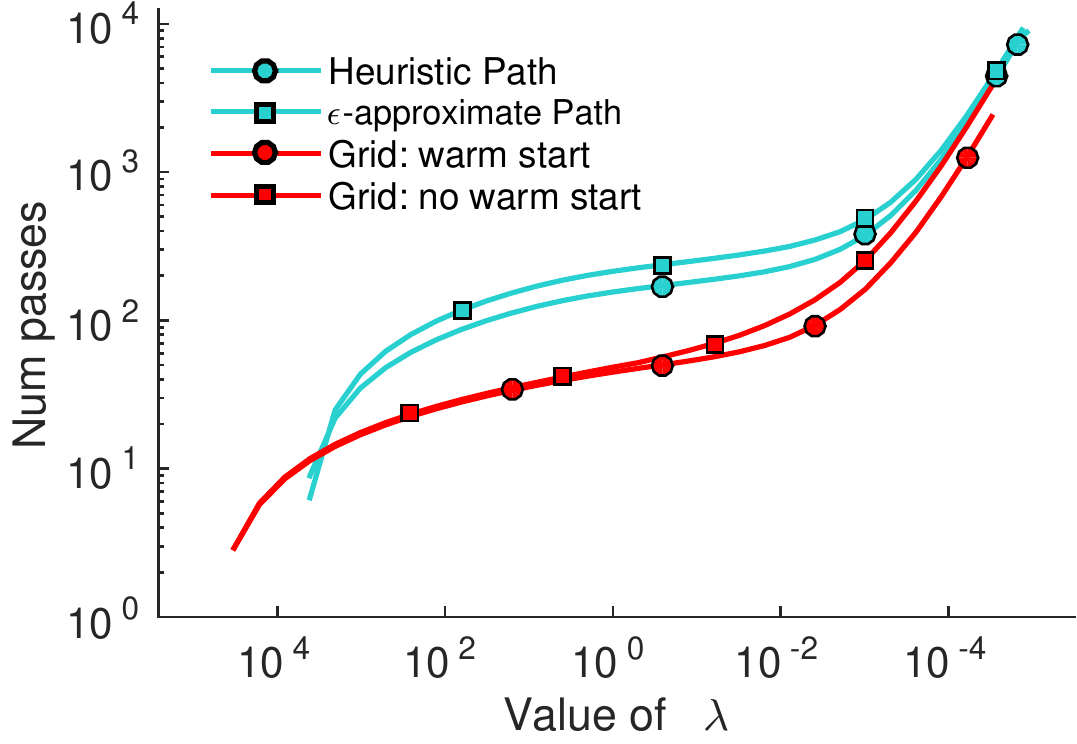} &
        \includegraphics[width=0.49\textwidth]{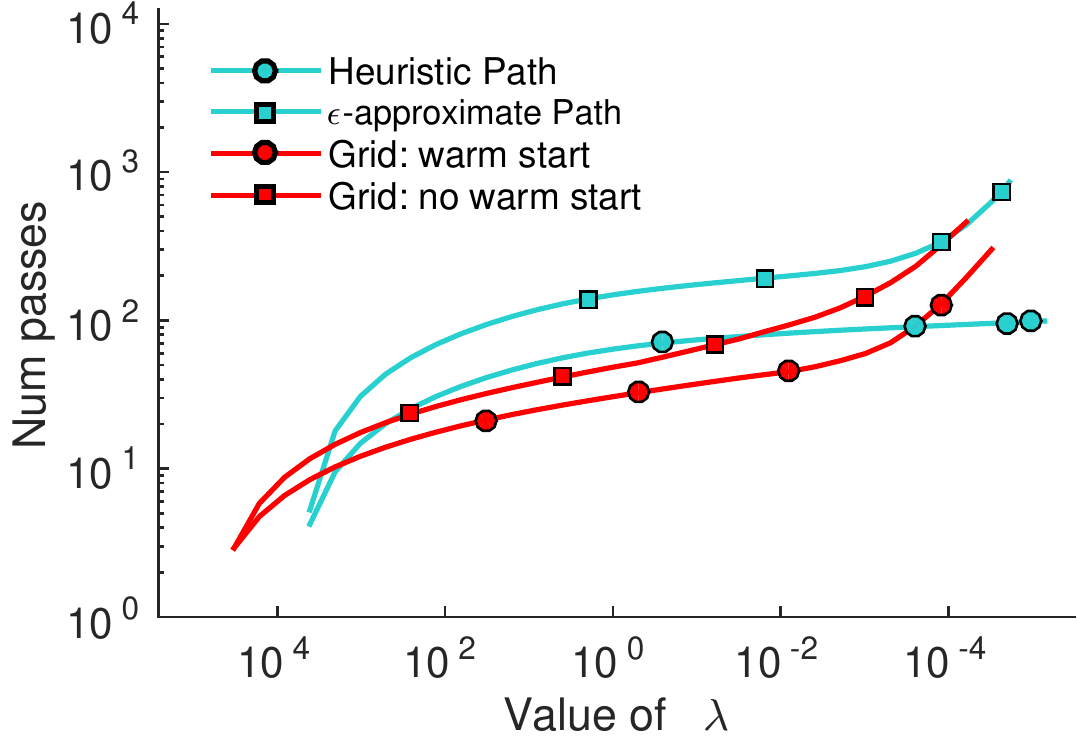}\\
        {\scriptsize BCFW + gap sampling}
        &
        {\scriptsize BCPFW + gap sampling + caching}
        \\[-0.0cm]
    \end{tabular}
    \caption{
        \label{fig:regPath_ocrLarge} OCR-large
        }
\end{subfigure}
\caption{Comparison of regularization path methods. In each subfigure, we compare the $\epsilon$-approximate regularization path against the heuristic path and the grid search with/without warm start for a specific dataset. 
In each subfigure, we report the cumulative running time (top) and the cumulative effective number of passes (bottom) required to get to each value of the regularization parameter~$\regularizerweight$. We report results using two different methods as the SSVM solver: BCFW + gap sampling (left) and BCPFW + gap sampling + caching (right).
    Note that for OCR-large, the time limit of 24 hours was reached for the regularization path methods.
    However, the reached value of $\regularizerweight$ is smaller than the lower boundary of the grid $2^{-15}$.
    For HorseSeg-medium, the time limit of 24 hours was reached for both the regularization path and grid search methods.
    \label{fig:regPath_4datasets}}
\end{figure*}

\clearpage
\bibliographysup{icml2016_bcfw}
\bibliographystylesup{icml2016}

\end{document}